%% file: main.tex
\documentclass{article}

\usepackage[main, final]{neurips_2026}

\usepackage[utf8]{inputenc} 
\usepackage[T1]{fontenc}    
\usepackage{hyperref}       
\usepackage{url}            
\usepackage{booktabs}       
\usepackage{amsfonts}       
\usepackage{nicefrac}       
\usepackage{amsmath, amsthm, amssymb, centernot}
\usepackage{accents}
\usepackage{mathrsfs}
\usepackage[table]{xcolor} 
\usepackage{array} 
\usepackage{caption}
\usepackage{graphicx}
\usepackage{epstopdf}
\usepackage{microtype}
\usepackage{subcaption}
\usepackage{bm} 

\newcommand{\dbtilde}[1]{\accentset{\approx}{#1}}
\newtheorem{define}{Definition}
\newtheorem{theorem}{Theorem}
\newtheorem{prop}{Proposition}

\newtheorem{lemma}{Lemma}
\newtheorem{coroll}{Corollary}

\DeclareMathOperator*{\argmin}{arg\,min}

\input{figures/caption}
\input{tables/main}

\input{tables/supp}

\newcounter{contrib}
\newcommand{\contrib}{\stepcounter{contrib}(\thecontrib)~}

\title{A Biconvex Formulation for Stable Transport of Mixture Models with a Unique Solution}

\author{%
  Yeganeh Marghi, Kelly Jin, Uygar S\"{u}mb\"{u}l \\
  \\
    Allen Institute\\
    Seattle, WA, USA\\
}

\begin{document}

\maketitle

\begin{abstract}
Optimal transport (OT) provides a principled framework for mapping between probability distributions. Despite extensive progress, applying OT to large-scale data remains computationally demanding, and the resulting pointwise transport plans are often difficult to interpret. We introduce Optimal Mixture Transport (OMT), a scalable framework that shifts the transport paradigm from individual samples to mixtures of subpopulations, reformulating the transport problem as a strictly biconvex optimization with a unique global minimizer. We further establish theoretical guarantees on the stability of the OMT map, showing that bounded perturbations of the underlying distributions lead to bounded changes in the transport plan. By formulating subpopulations as exponential-family distributions, OMT decouples computational complexity from the sample size, scaling solely with the number of mixture components.
We demonstrate the effectiveness and practicality of OMT on a wide range of synthetic benchmarks and real-world datasets, including image data and large-scale single-cell RNA sequencing measurements.
\end{abstract}
\vspace{-.1in}
\section{Introduction}
\label{sec:Introduction}
\vspace{-.1in}
\input{main/1-introduction}
\vspace{-.1in}
\section{Background}
\label{sec:Background}
\vspace{-.1in}
\input{main/2-background}
%
\vspace{-.25in}
\section{Transport problem for mixture models}
\label{sec:method}
\vspace{-.1in}
\input{main/4-method}
\vspace{-.1in}
\section{Related Work}
\label{sec:related_work}
\vspace{-.1in}
\input{main/3-relatedwork}
\vspace{-.1in}
\section{Experiments}
\label{sec:experiments}
\vspace{-.1in}
\input{main/5-experiment}

\vspace{-.12in}
\section{Conclusion}
\label{sec:Conclusion}
\vspace{-.13in}
\input{main/6-conclusion}
\begin{ack}
We thank Yuan Gao for assistance with mouse visual cortex data, and Cindy van Velthoven and Zizhen Yao for insightful feedback on the analysis of mouse brain data across developmental and aging stages. This work was supported by the Allen Institute.
\end{ack}
\bibliographystyle{unsrt}
\bibliography{main}

\setcounter{theorem}{0}
\setcounter{prop}{0}
\setcounter{axiom}{0}
\setcounter{lemma}{0}
\setcounter{remark}{0}
\newpage
\appendix
{\Large\centering\bfseries Appendix\par}
\section{Proofs}
\label{sec:proofs}
\input{appendix/1-proofs}
\input{appendix/2-expfamily}
\newpage
\input{appendix/3-moreresults}
\end{document}

%% file: figures/caption.tex
\newcommand{\OMTmainFig}{
\begin{figure}[t!]
\centering
\vspace{-5mm}
\includegraphics[width=\columnwidth,trim=4pt 4pt 10pt 2pt,clip]{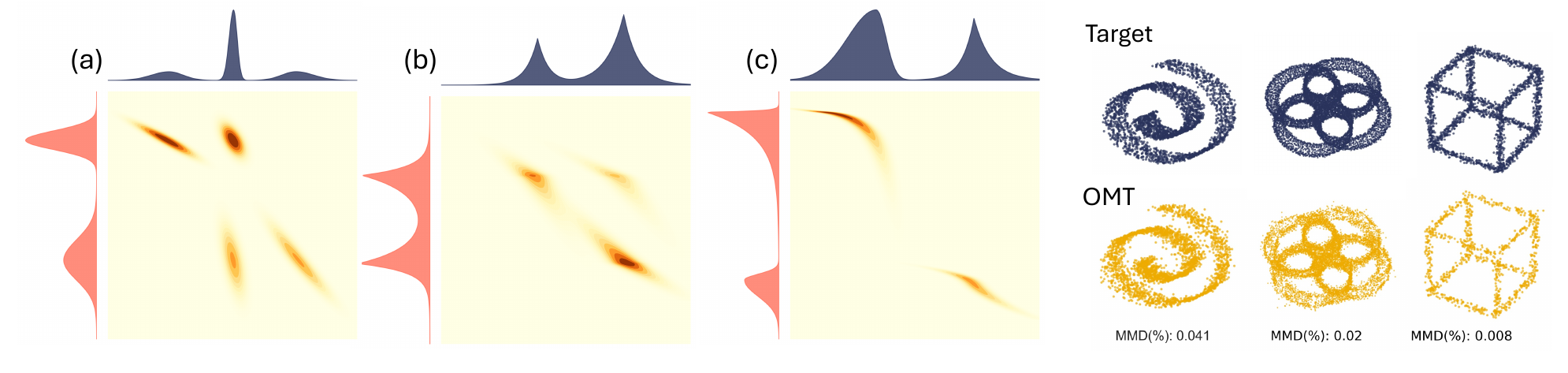}
  \caption{\footnotesize{(Left) OMT couplings obtained under three parameter configurations: (a) symmetric components with $p_i=q_i=2$, (b) symmetric components with $p_i=q_i=1$, and (c) a mixture of asymmetric components. Marginal densities corresponding to the target and source distributions are shown in blue and red, respectively. 
(Right) Sample transport from a normal source distribution to multiple target distributions using OMT.}}
\label{fig:OMTmainFig}
\vspace{-5mm}
\end{figure}
}

\newcommand{\AppBenchMark}{
\begin{figure}[t!]
\centering
\includegraphics[width=\columnwidth]{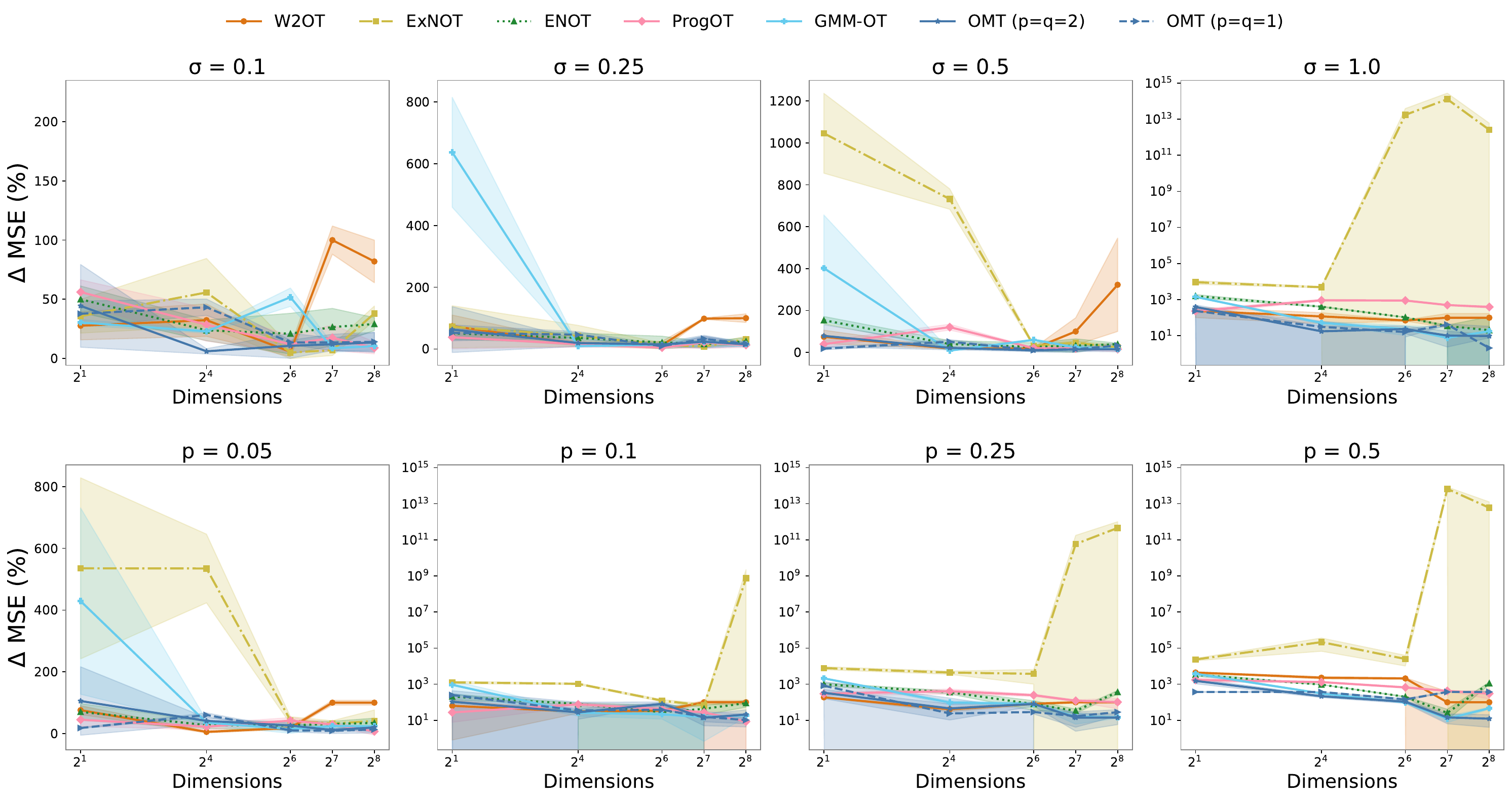}
\caption{\footnotesize{Detailed stability analysis of OT solvers across varying dimensions on the $W_2$ benchmark. Expanding upon the aggregated results in the main text, this figure illustrates the relative performance degradation ($\Delta \text{ MSE}$) as a function of data dimensionality $d \in \{2, 16, 64, 128, 256\}$. The top row presents the models' robustness against additive Gaussian noise at increasing standard deviations $\sigma \in \{0.1, 0.25, 0.5, 1.0\}$. The bottom row details performance under drop-out noise at increasing probabilities $p \in \{0.05, 0.1, 0.25, 0.5\}$. Lines and markers indicate the mean $\Delta \text{ MSE}$ across 10 random initializations (evaluated on a test set of 10,000 samples), with shaded regions denoting the standard deviation. Note that the y-axis transitions to a logarithmic scale in higher-noise regimes to display the extreme degradation observed in certain baselines (e.g., ExNOT). Consistent with the main text, the number of mixture components is $(K_0=3, K_1=15)$ for OMT ($p=q=2$) and $(K_0=10, K_1=30)$ for OMT ($p=q=1$).}}
\label{fig:AppBenchMark}
\end{figure}
}

\newcommand{\RunTime}{
\begin{figure}[t!]
\centering
\begin{minipage}{0.6\textwidth}
  \centering
\includegraphics[width=\columnwidth]{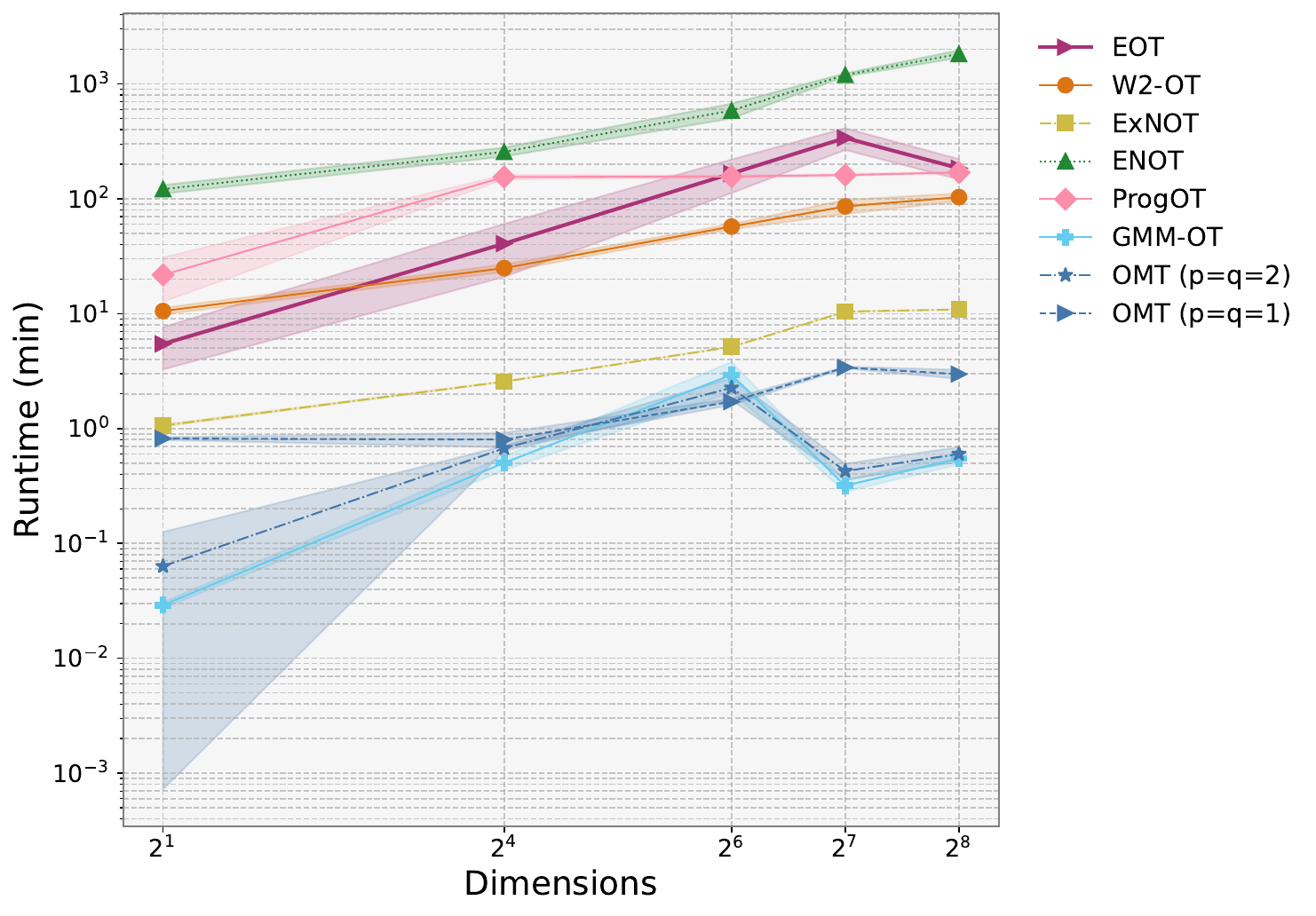}
\end{minipage}\hfill
\begin{minipage}{0.38\textwidth}
  \captionof{figure}{{\footnotesize Runtime comparison of OT solvers on the Wasserstein-2 benchmark task. Reported times correspond to the optimization of the transport plan. For OMT, runtime additionally includes the cost of fitting mixture components to the source and target measures. We further include EOT as a representative sample-to-sample method based on the Sinkhorn algorithm. All experiments are conducted on a compute node equipped with an NVIDIA A100 GPU, 4 Intel Xeon Gold 6330N CPU cores, and 128 GB RAM. Reported values are averaged over 10 independent runs.}}
\label{fig:RunTime}
\end{minipage}
\end{figure}
}

\newcommand{\scRNAseq}{
\begin{figure}[t!]
\centering
\vspace{-5mm}
    \begin{subfigure}{\columnwidth}
        \centering
\includegraphics[width=\columnwidth,trim=4pt 5pt 4pt 4pt,clip]{figures/main/opc_developmental_path.png}
    \end{subfigure}
    \vspace{-2mm} 
    \begin{subfigure}{\columnwidth}
        \centering
    \includegraphics[width=\columnwidth]{figures/main/opc_ageing_path.jpg}
    \end{subfigure}
\vspace{-.2in}
\caption{\footnotesize{OPC–Oligo trajectories across the mouse lifespan. Top row: developmental dataset from the mouse visual cortex. From left to right: (1) UMAP projection showing distinct neural cell subclasses. (2) The alignment between the original measured data and the transferred data using OMT. (3) The inferred global developmental trajectory at the cluster level, tracing paths from early progenitors like neuroepithelial cells (NEC) and radial glia (RG). (4) The specific cellular pathway detailing the differentiation from OPC to oligodendrocytes.
Bottom row: mouse aging dataset. From left to right: (1) UMAP projection of cell subtypes within the oligodendrocyte lineage. (2) The alignment of original and transferred data distributions. (3) The network graph illustrating the stages of the myelination cycle in aged mice. (4) The inferred aging pathway.}}
\vspace{-.2in}
\label{fig:scRNAseq}
\end{figure}
}

\newcommand{\AppMERFISHGMMs}{
\begin{figure}[t!]
\centering
\includegraphics[width=.95\columnwidth, trim=10pt 5pt 10pt 10pt,clip]{figures/supp/merfish_gmms.jpg}
\caption{\footnotesize{Impact of number of Gaussian components on OMT performance. The source (red) and target (blue) cell distributions are approximated using Gaussian mixtures with increasing numbers of source ($K_s$) and target ($K_t$) components. The rows show the progression of fitting accuracy as the number of components increases from $K_s=5$ (top) to $K_s=1000$ (bottom). (Left) Source cells (red dots) and (Center) target cells (blue dots) overlaid with their respective Gaussian components. (Right) The resulting transported samples (yellow dots).}}
\label{fig:AppMERFISHGMMs}
\end{figure}
}

\newcommand{\AppMERFISHgenes}{
\begin{figure}[t!]
\centering
\includegraphics[width=.95\columnwidth,trim=10pt 5pt 10pt 10pt,clip]{figures/supp/merfish_genes.jpg}
\caption{\footnotesize{Spatial gene expression imputation using OMT with $K_s=1000$, $K_t=1000$. Rows show expression maps for five distinct genes: \textit{Slc17a7}, \textit{Gad1}, \textit{Grm4}, \textit{Olig1}, and \textit{Peg10}. From left to right, the plots illustrate the source distribution, the target ground truth, and the predicted expression after transport, demonstrating how well the transported expression (right) aligns with the ground-truth target (middle). Here, the cosine similarity exceeds $0.75$, with a total displacement cost of $107.69$.}}
\label{fig:AppMERFISHgenes}
\end{figure}
}

\newcommand{\ImageMNISTCIFAR}{
\begin{figure}[h!]
\centering
\vspace{-.1in}
\begin{minipage}{0.5\textwidth}
  \centering
\includegraphics[width=\columnwidth]{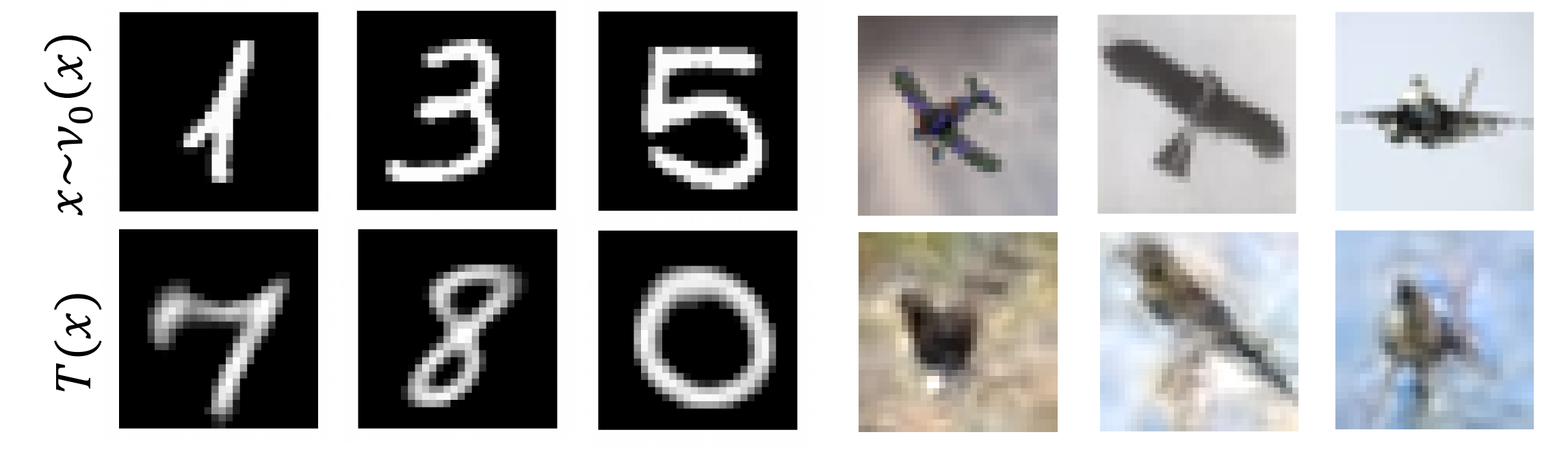}
\end{minipage}\hfill
\begin{minipage}{0.48\textwidth}
  \captionof{figure}{\footnotesize Performance of OMT  for unpaired image-to-image translation on the MNIST and CIFAR-10 datasets. For each dataset, the top row shows original samples from the source distribution, $x \sim \nu_0$, and the bottom row shows the corresponding transported images $T^{\nu_0 \rightarrow \nu_1}_{\textsc{omt}}(x)$.}
\label{fig:ImageMNISTCIFAR}
\end{minipage}
\vspace{-5mm}
\end{figure}
}

\newcommand{\AppImagTran}{
\begin{figure}[t!]
\centering
\includegraphics[width=\columnwidth,trim=4pt 10pt 4pt 4pt,clip]{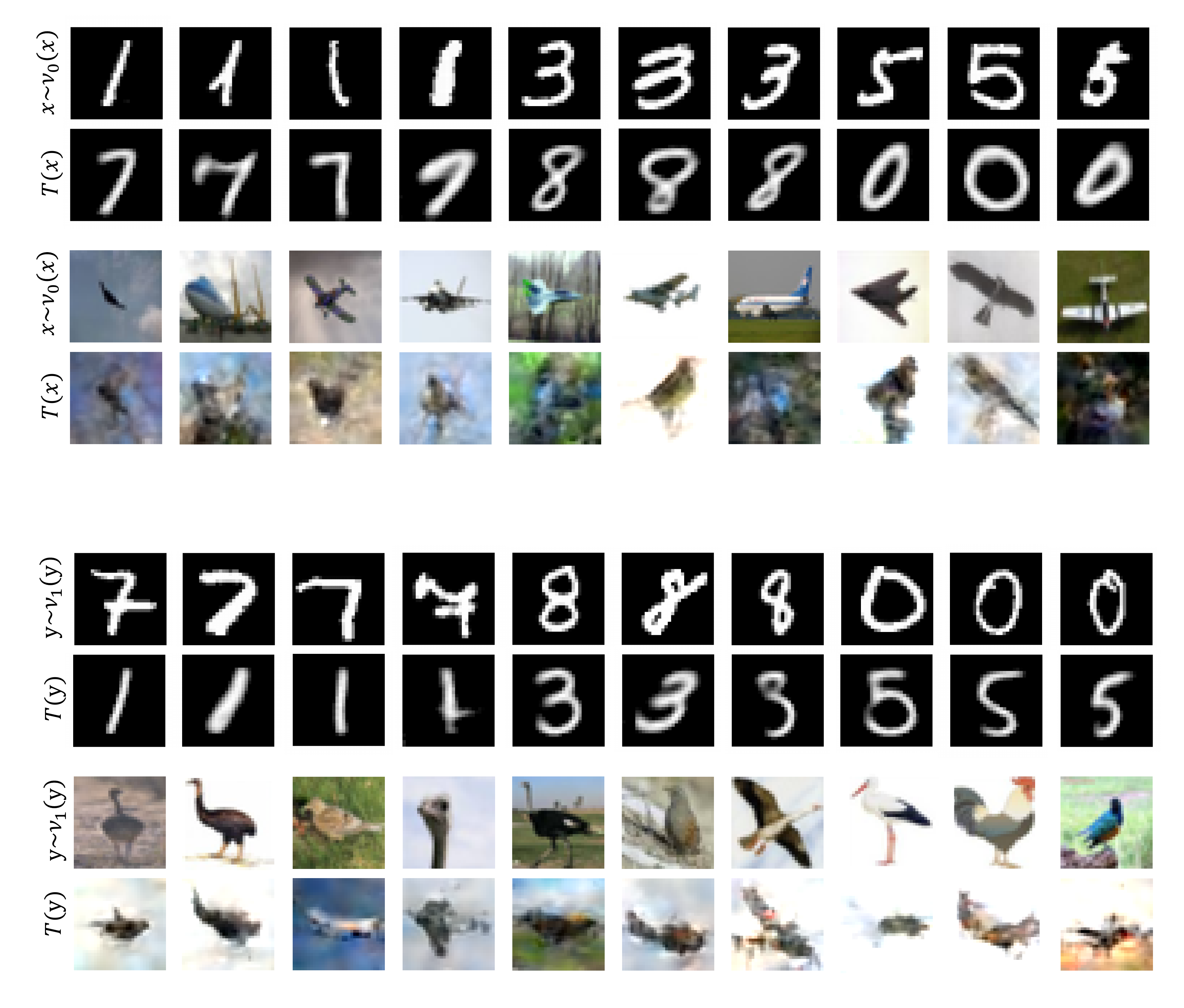}
\caption{\footnotesize{Unpaired image-to-image translation on the MNIST and CIFAR-10 datasets, showcasing OMT's bidirectional mapping capabilities. The figure is organized to demonstrate both transport directions. The top panels illustrate the forward translation path from the source to the target domain, displaying original source images ($x \sim \nu_0$) alongside their generated target counterparts ($T^{\nu_0 \rightarrow \nu_1}_{\textsc{omt}}(x)$). Conversely, the bottom panels illustrate the backward translation path, displaying original target images ($y \sim \nu_1$) and their corresponding translations back into the source domain ($T^{\nu_1 \rightarrow \nu_0}_{\textsc{omt}}(y)$). These results confirm the framework's effectiveness in learning structurally coherent mappings in both directions.}}
\label{fig:AppImagTran}
\end{figure}
}

\newcommand{\Noize}{
\begin{figure}[t!]
\centering
\vspace{-.1in}
\includegraphics[width=\columnwidth,trim=4pt 5pt 4pt 2pt,clip]{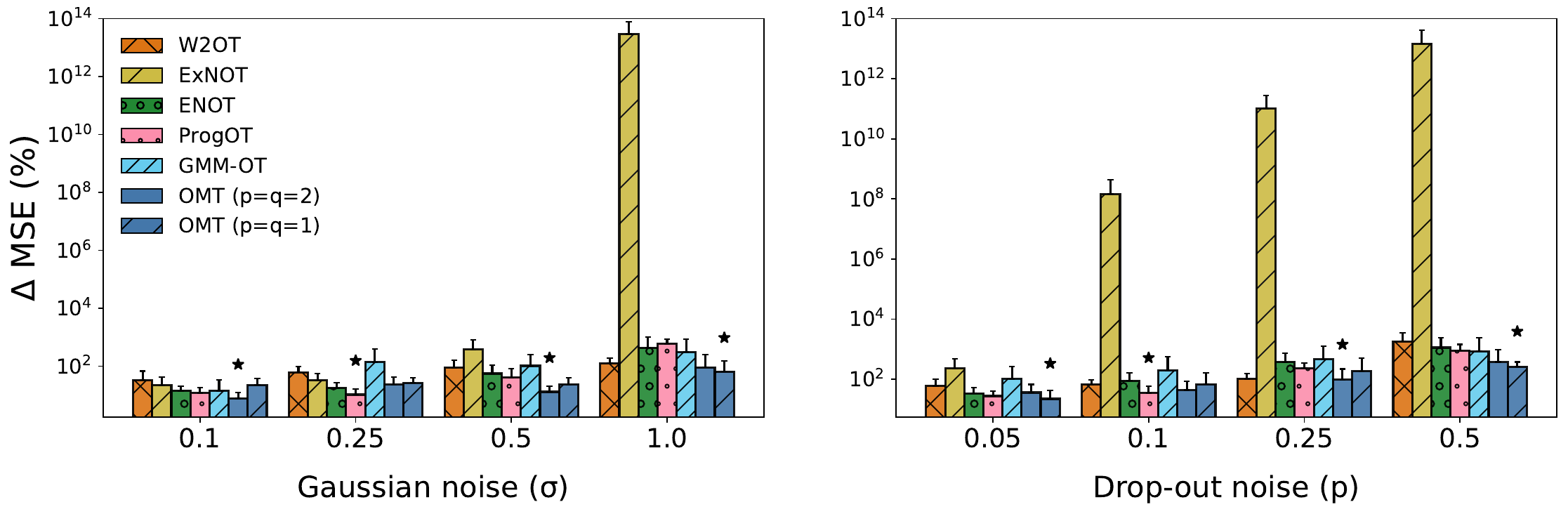}
  \captionof{figure}{{\footnotesize Stability of OT solvers under noise on the $W_2$ benchmark task. Methods are evaluated under two types of perturbations applied to the source data: Gaussian noise (left) and drop-out noise (right). The reported values measure the relative performance degradation, i.e., $\Delta \text{ MSE}$, with respect to the noise-free setting, computed on a test set of 10,000 samples and averaged over 10 random initializations. The number of components for OMT ($p=q=2$) is $(K_0=3, K_1=15)$, while for OMT ($p=q=1$) it is $(K_0=10, K_1=30)$. The asterisk ($\ast$) marks the most robust method in each noise regime.}}
\label{fig:Noise}
\vspace{-.2in}
\end{figure}
}

\newcommand{\MERFISH}{
\begin{figure}[t!]
\centering
\vspace{-.3in}
\begin{minipage}{0.6\textwidth}
  \centering
\includegraphics[width=\columnwidth]{figures/main/merfish_slc17a7.png}
\end{minipage}\hfill
\begin{minipage}{0.38\textwidth}
  \captionof{figure}{{\footnotesize Cellular alignment across MERFISH mouse brain data using OMT on spatial coordinates. From left to right, the expression profiles of the marker gene \textit{Slc17a7} are illustrated in the source, target, and OMT-transported cells within the brain tissue.}}
\label{fig:MERFISH}
\end{minipage}
\vspace{-.3in}
\end{figure}
}

\newcommand{\AppsciplexPCA}{
\begin{figure}[h!]
\centering
\includegraphics[width=\columnwidth]{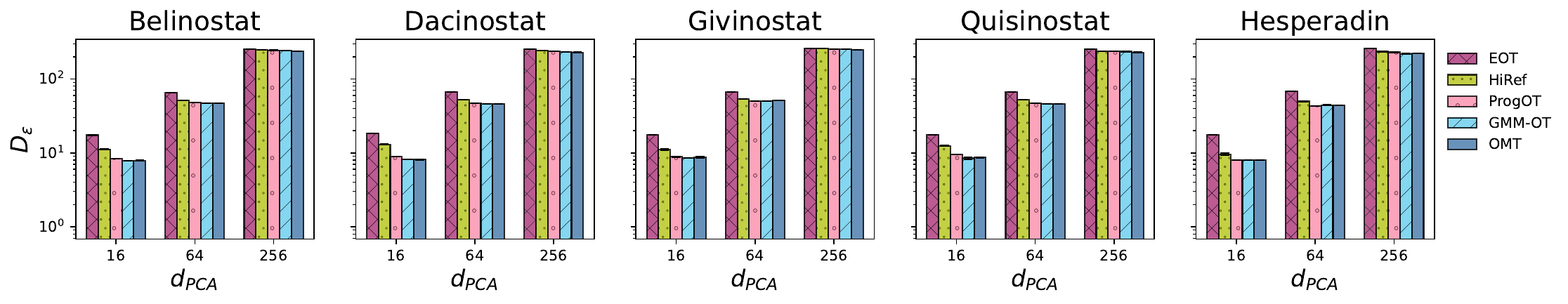}
\caption{\footnotesize{Performance comparison of OT solvers on the sci-Plex dataset across varying dimensionalities. Average $D_{\varepsilon} \downarrow$ values are computed for transported samples through forward and backward transport across five drug treatments. Evaluations are conducted across PCA dimensions $d_{PCA} \in \{16, 64, 256\}$. Each bar represents the mean of five independent runs, with error bars denoting the standard deviation. Note that the error bars are nearly imperceptible due to the logarithmic scale.}}
\label{fig:AppsciplexPCA}
\end{figure}
}

\newcommand{\AppDevPerformance}{
\begin{figure}[h!]
\centering
\includegraphics[width=\columnwidth,trim=4pt 10pt 4pt 10pt,clip]{figures/supp/mouse_dev_fwd_bwd.jpg}
\caption{{\footnotesize{OMT performance on the mouse visual cortex developmental dataset. The data comprises $32,998$ cells and $9,900$ HVGs~\citep{gao2024continuous}. (Left) UMAP visualization of cell distribution colored by developmental time point, from $E11.5$ to $P28$. (Middle) UMAP overlay showing the alignment between measurement, $(x,y)$ (black dots) and predicted cells from forward, $T_{fwd}(x)$ (pink dots) and backward, $T_{bwd}(y)$ (olive dots) transport. (Right) Violin plots of the transportation error (L2 norm) across all genes and cells for both directions.}}}
\label{fig:AppDevPerformance}
\end{figure}
}

\newcommand{\AppDevGenes}{
\begin{figure}[h!]
\centering
\includegraphics[width=\columnwidth,trim=4pt 10pt 10pt 10pt,clip]{figures/supp/mouse_dev_genes.jpg}
\caption{{\footnotesize{Distributional alignment of log-CPM expression values for a subset of marker genes in non-neuronal cells across developmental time. Each subfigure shows distributions of the ground-truth expression (solid line) and the OMT-transported values (dashed line), for both the forward (Left) and backward (Right) directions.}}}
\label{fig:AppDevGenes}
\end{figure}
}

\newcommand{\AppAgePerformance}{
\begin{figure}[h!]
\centering
\includegraphics[width=\columnwidth,trim=4pt 10pt 10pt 10pt,clip]{figures/supp/mouse_ageing_fwd_bwd.jpg}
\caption{\footnotesize{OMT performance on the mouse aging dataset. The dataset comprises $253,468$ cells and $9,359$ HVGs sampled from six brain regions~\citep{jin2025brain}. (Left) UMAP embedding of cell distributions at two time points, adult and aged. (Middle) UMAP overlay showing the alignment between collected cells, $(x, y)$ (black dots), and cells predicted by the forward, $T_{\mathrm{fwd}}$ (pink dots), and backward, $T_{\mathrm{bwd}}$ (olive dots), OMT maps. (Right) Violin plots of the transportation error (L2 norm) across all genes and cells for both directions.}}
\label{fig:AppAgePerformance}
\end{figure}
}

\newcommand{\AppAgeGenes}{
\begin{figure}[h!]
\centering
\includegraphics[width=\columnwidth,trim=4pt 10pt 10pt 10pt,clip]{figures/supp/mouse_ageing_genes.jpg}
\caption{\footnotesize{Distributional alignment of log-CPM expression values for a subset of marker genes in non-neuronal cells in the mouse aging dataset. Each subpanel shows distributions of the ground-truth expression (solid line) and the OMT-transported values (dashed line), for both the forward (Left) and backward (Right) directions.}}
\label{fig:AppAgeGenes}
\end{figure}
}

\newcommand{\OMTGMMOT}{
\begin{figure}[h!]
\centering
\includegraphics[width=\columnwidth]{figures/supp/gmm_omt.jpg}
\caption{\footnotesize{Comparing OMT and GMM-OT maps under input perturbation.
We evaluate robustness on W2-benchmark tasks ($d=2$) across varying noise levels $\sigma \in \{0.1, 0.25, 0.5, 1.0\}$ and increasing mixture complexities ($K_s, K_t$). To ensure a direct comparison of the optimization objectives, identical training samples and fitted GMMs were used for both methods in each setting. The bar charts report two stability metrics: changes of transport cost ($\Delta T_c$) and the relative transport map deviation (TMD). Error bars represent the standard deviation across $10$ random initializations. OMT maps consistently demonstrate lower sensitivity to noise compared to GMM-OT, validating the stability benefits of the biconvex formulation.}}
\label{fig:OMTGMMOT}
\end{figure}
}

\newcommand{\ExpVsGauss}{
\begin{figure}[h!]
\centering
\includegraphics[width=\columnwidth]{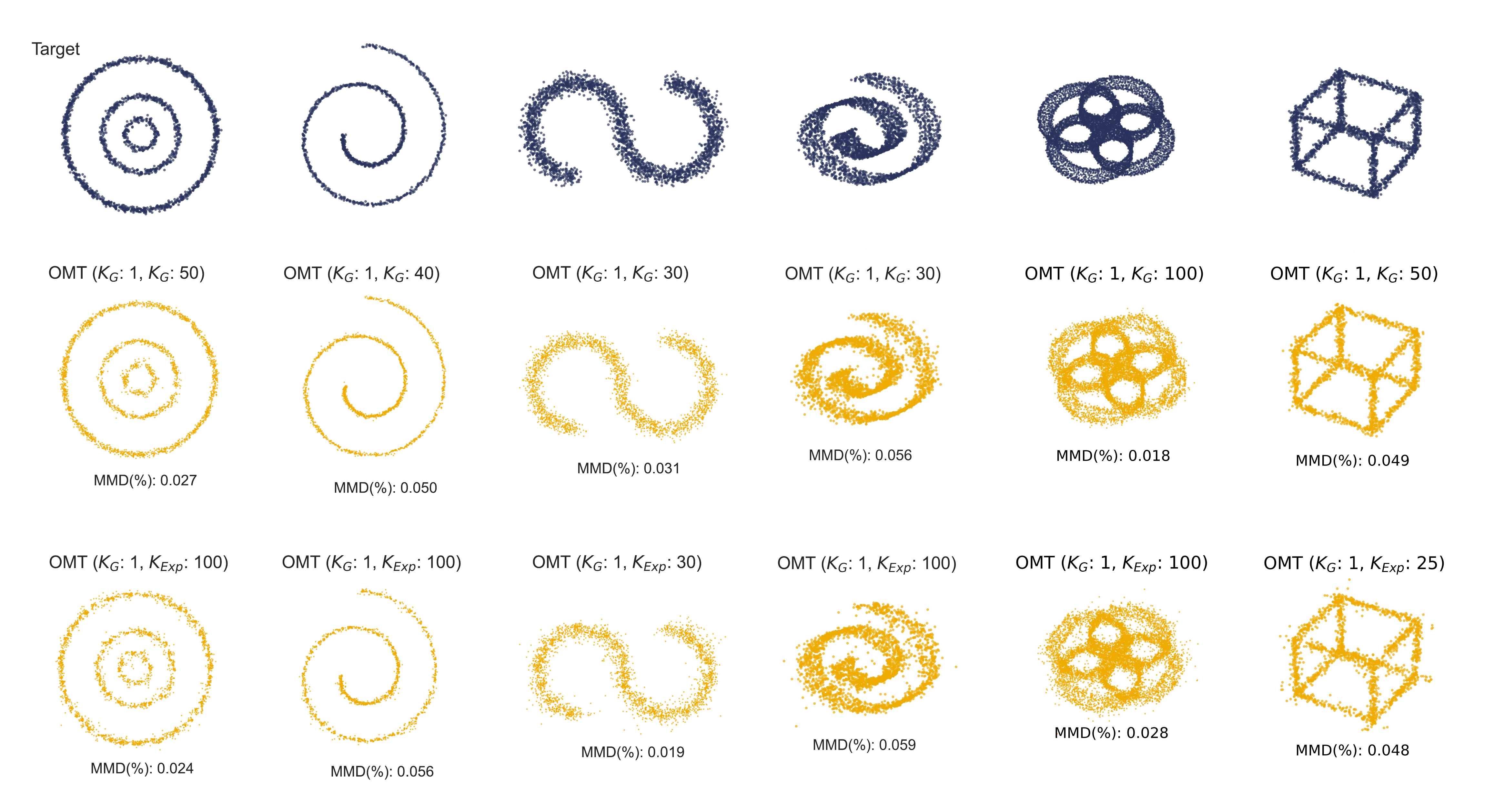}
\caption{\footnotesize{Qualitative and quantitative comparison of OMT using Gaussian (symmetric $p = q = 2$) versus factorized exponential (asymmetric $p = q = 1$) components on synthetic data. The top row shows the target sample point clouds. The middle and bottom rows show samples transported from a normal distribution to various target distributions, using different numbers of Gaussian components ($K_G$) and exponential components ($K_{\mathrm{Exp}}$), respectively.}}
\label{fig:ExpVsGauss}
\end{figure}
}

\newcommand{\PathShape}{
\begin{figure}[h!]
\centering
\includegraphics[width=\columnwidth]{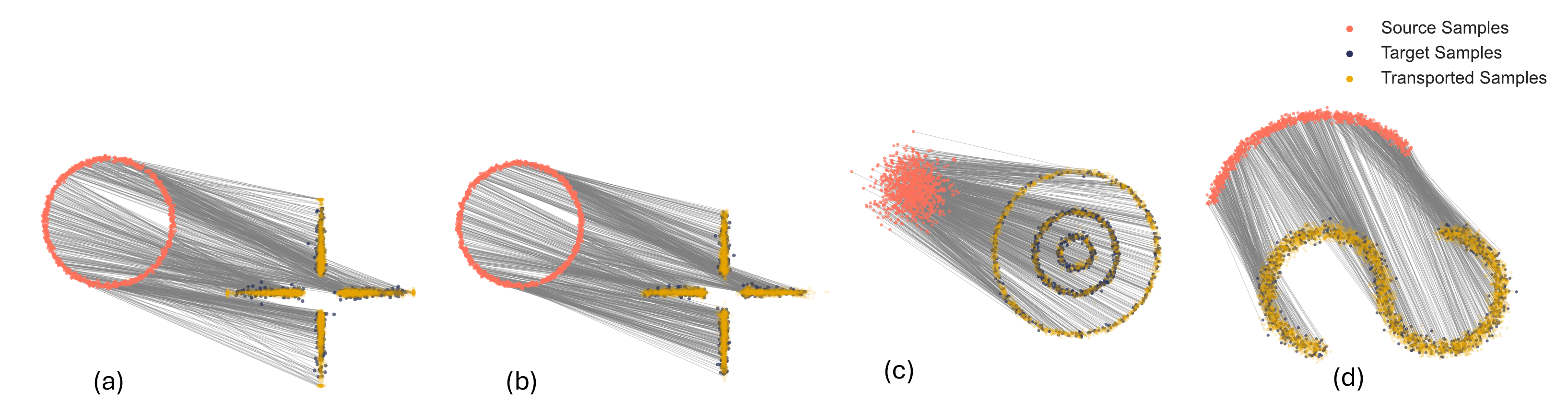}
\caption{\footnotesize{Sample-level transport paths across four synthetic topologies obtained using OMT. Gray lines indicate exact point-to-point transport paths mapping source samples (red) to transported samples (yellow), aligning them with target samples (blue).
Panels (a) and (b) compare the representational efficiency of different component families when transporting a circular source distribution to a cross-shaped target. In (a), the mapping is achieved using a compact mixture of four factorized exponential components ($p = q = 1$, $K_{\mathrm{Exp}} = 4$). In contrast, (b) requires a mixture of thirty Gaussian components ($K_G = 30$) to capture the same geometry, illustrating the trade-off between flexibility and component efficiency.
Panels (c) and (d) show transport to more complex targets: a concentric ring using $K_G = 100$ and a half-moon using $K_G = 30$, respectively, demonstrating the ability of the framework to model highly non-linear distributions.}}
\label{fig:PathShape}
\end{figure}
}

%% file: tables/main.tex
\newcommand{\TabsciPlex}{
\begin{table}[b!]
\centering
\vspace{-.25in}
\caption{{\footnotesize{Average Sinkhorn divergence, $D_{\varepsilon} \downarrow$, for forward and backward OMT maps compared to P{\scriptsize{ROG}}OT, HiRef, GMM-OT, and EOT on the human scRNA-seq dataset~\citep{srivatsan2020massively}. The results are reported as mean$\pm$std. dev. ($d_{\mathrm{PCA}} = 16$, $5$ random seeds). Additional results for other dimensions and computational costs are in Appendix~\ref{sec:AppscRNAeq}.}}}
\label{tab:sciPlex}
\begin{tabular}{llllll}
\toprule
& Belinostat & Dacinostat & Givinostat & Quisinostat & Hesperadin \\
\midrule
{\footnotesize{EOT}} & $17.4 \pm 0.01$ & $18.4 \pm 0.01$ & $17.5 \pm 0.01$  & $17.6 \pm 0.03$ & $17.5 \pm 0.01$\\
{\footnotesize{HiRef}} & $19.9 \pm 0.21$ & $24.0 \pm 0.30$ & $19.2 \pm 0.55$ & $22.8 \pm 0.2$ & $17.3 \pm 0.54$\\
{\footnotesize{P{\scriptsize{ROG}}OT}} & $8.4 \pm 0.01$ & $8.8 \pm 0.03$ & $8.8 \pm 0.04$ & $9.5 \pm 0.01$ & $8.1 \pm 0.01$\\
{\footnotesize{$\text{GMM-OT}_{(K_0=3, K_1=5)}$}} & $8.0 \pm 0.02$ & $8.9 \pm 0.17$ & $8.9 \pm 0.06$ & $8.8 \pm 0.23$ & $8.1 \pm 0.03$\\
{\footnotesize{$\text{OMT}_{(K_0=3, K_1=5)}$}} & $\bm{7.9 \pm 0.06}$ & $\bm{8.1 \pm 0.04}$ & $\bm{8.7 \pm 0.14}$ & $\bm{8.7 \pm 0.10}$ & $\bm{8.0 \pm 0.03}$\\
\bottomrule
\end{tabular}
\end{table}
}

\newcommand{\TabMERFISH}{
\begin{table}[b!]
\centering
\vspace{-.3in}
\caption{\footnotesize{Performance on the MERFISH mouse brain dataset. Cosine similarity ($\uparrow$) and spatial Euclidean cost ($\downarrow$) are reported. Cosine similarity measures the agreement between imputed and original expression profiles in the target slice using the evaluation genes from \cite{halmos2025hierarchical}, while the transport map is optimized solely based on spatial coordinates.}}
\label{tab:MERFISH}
\begin{tabular}{llllllll}
\toprule
& &
\multicolumn{1}{c}{\footnotesize{\textit{Slc17a7} ($\uparrow$)}} &
\multicolumn{1}{c}{\footnotesize{\textit{Grm4}($\uparrow$)}} &
\multicolumn{1}{c}{\footnotesize{\textit{Olig1} ($\uparrow$)}} &
\multicolumn{1}{c}{\footnotesize{\textit{Gad1} ($\uparrow$)}} &
\multicolumn{1}{c}{\footnotesize{\textit{Peg10} ($\uparrow$)}} &
\multicolumn{1}{c}{\footnotesize{Cost ($\downarrow$)}}  \\
\midrule
{{HiRef}} & & $0.81 $ & $0.80$ & $0.75$ & $0.49$ & $0.60$ & $330.33$ \\
{{$\text{GMM-OT}_{(K_0=1000, K_1=1000)}$}} & & $0.89$ & $0.90$ & $0.90$ & $0.73$ & $0.73$ & $107.20$\\
{{$\text{OMT}_{(K_0=1000, K_1=1000)}$}} & & $\mathbf{0.90}$ & $\mathbf{0.93}$ & $\mathbf{0.90}$ & $\mathbf{0.78}$ & $\mathbf{0.75}$ & $\mathbf{101.69}$\\
\bottomrule
\end{tabular}
\end{table}
}

\newcommand{\ImageNet}{
\begin{table}[h!]
\vspace{-.1in}
\centering
\begin{minipage}{0.55\textwidth}
    \centering
    \begin{tabular}{lllll}
    \toprule
    \textbf{Method} & MB~($1024$) & FRLC & HiRef &  OMT\\
    \midrule
    \textbf{Cost}~($\downarrow$) & $19.58$ & $24.12$ & $18.97$ & $\mathbf{14.30}$ \\
    \bottomrule
    \end{tabular}
\end{minipage}%
\hspace{0.5em} 
\begin{minipage}{0.43\textwidth}
    \captionof{table}{\footnotesize Cost values for the ImageNet alignment task~\citep{deng2009imagenet}. OMT is trained with 1000 components for each of the source and target.}
    \label{tab:ImageNet}
\end{minipage}
\end{table}
\vspace{-0.1in}
}

%% file: tables/supp.tex
\newcommand{\NetworkMNIST}{
\begin{table}[h!]
\centering
\caption{Architecture of the VAE for MNIST.}
\label{tab:NetworkMNIST}
\begin{tabular}{@{}lll@{}}
\toprule
\textbf{Layer} & \textbf{Configuration Details} & \textbf{Activation} \\
\midrule
\multicolumn{3}{c}{\textbf{Encoder}} \\
\midrule
Input Layer & 2 x (Batch, 1, N, N) Image & - \\
Dropout & & - \\
Conv2d (x2) & $1 \to 16$ channels, kernel=(5,5), stride=2 & ReLU \\
Norm2d & 16 features & - \\
Conv2d & $16 \to 32$ channels, kernel=(3,3), stride=2 & ReLU \\
Norm2d & 32 features & - \\
Conv2d & $32 \to 32$ channels, kernel=(3,3), stride=2 & ReLU \\
Norm2d & 32 features & - \\
Flatten & Reshapes feature map to (Batch, 128) & - \\
Linear & $128 \to 100$ units & ReLU \\
\midrule
\multicolumn{3}{c}{\textbf{Decoders}} \\
\midrule
Linear & $d_z \to 100$ units & ReLU \\
Linear & $100 \to 128$ units & ReLU \\
Unflatten & Reshapes to (Batch, 32, 2, 2) & - \\
ConvTranspose2d & $32 \to 32$ channels, kernel=(3,3), stride=2 & ReLU \\
Norm2d & 32 features & - \\
ConvTranspose2d & $32 \to 16$ channels, kernel=(5,5), stride=2 & ReLU \\
Norm2d & 16 features & - \\
ConvTranspose2d & $16 \to 1$ channel, kernel=(5,5), stride=2 & ReLU \\
Norm2d & 1 feature & - \\
ConvTranspose2d & $1 \to 1$ channel, kernel=(4,4) & ReLU \\
\bottomrule
\end{tabular}
\end{table}
}

\newcommand{\NetworkCIFAR}{
\begin{table}[h!]
\centering
\caption{Architecture of the DoubleRessNet for CIFAR-10.}
\label{tab:NetworkCIFAR}
\begin{tabular}{lll}
\toprule
\textbf{Layer} & \textbf{Configuration Details} & \textbf{Activation} \\
\midrule
\multicolumn{3}{c}{\textbf{Shared Encoder}} \\
\midrule
Input Layer & 2 x (Batch, $3$, H, W) & - \\
Conv2d & $3 \to 32$ channels, kernel=9, stride=1 & ReLU \\
Norm2d & 32 features & - \\
Conv2d & $32 \to 64$ channels, kernel=3, stride=2 & ReLU \\
Norm2d & 64 features & - \\
Conv2d & $64 \to 128$ channels, kernel=3, stride=2 & ReLU \\
Norm2d & 128 features & - \\
Residual Block & 4 stacked blocks (128 channels) & ReLU \\
Flatten & Reshapes to (Batch, $H \times 64$) & - \\
Linear & $H \times 64 \to H \times 16$ units & ReLU \\
Norm1d & $H \times 16$ features & - \\
\midrule
\multicolumn{3}{c}{\textbf{Latent Space}} \\
\midrule
Linear & $H \times 16 \to H \times 4$ units & Tanh \\
\midrule
\multicolumn{3}{c}{\textbf{Decoders}} \\
\midrule
Linear & $H \times 4 \to H \times 16$ units & - \\
Norm1d & $H \times 16$ features & - \\
Linear & $H \times 16 \to H \times 64$ units & ReLU \\
Norm1d & $H \times 64$ features & - \\
Unflatten & Reshapes to (Batch, $H$, 8, 8) & - \\
Upsample, Conv2d & $H \to 64$ channels, kernel=3, upsample=2 & ReLU \\
Norm2d & 64 features & - \\
Upsample, Conv2d & $64 \to 32$ channels, kernel=3, upsample=2 & ReLU \\
Norm2d & 32 features & - \\
Upsample, Conv2d & $32 \to 3$ channels, kernel=9, stride=1 & Linear \\
\bottomrule
\end{tabular}
\end{table}
}

\newcommand{\FID}{
\begin{table}[h!]
\centering
\begin{minipage}{0.48\textwidth}
    \centering
    \begin{tabular}{lcc}
    \toprule
    & MNIST & CIFAR-10 \\
    \midrule
    WGAN & $6.7 \pm 0.4$ & $55.2$ \\
    WGAN-GP & $7.4 \pm 0.3$ & $39.4$ \\
    OMT & $1.2 \pm 0.1$ & $56.1 \pm 1.5$\\
    \bottomrule
    \end{tabular}
\end{minipage}%
\hspace{0.5em} 
\begin{minipage}{0.5\textwidth}
    \captionof{table}{{\footnotesize FID~($\downarrow$) values for unpaired image translation on the MNIST (grayscale) and CIFAR-10 (color) datasets. Reported values for WGAN and WGAN-GP are taken from previous studies~\citep{choi2023generative, rout2021generative, qian2021self}. Results for OMT are computed over $10$ random initializations.}}
    \label{tab:fid}
\end{minipage}
\end{table}
}

%% file: main/1-introduction.tex
Optimal Transport (OT) offers a powerful mathematical framework for comparing probability distributions and finding optimal mappings between them~\citep{santambrogio2015optimal}. Its versatility has led to advances in diverse fields, including domain adaptation~\citep{grave2019unsupervised, struckmeier2023learning, chuang2023infoot,fernandes2024optimal}, data integration and alignment~\citep{demetci2022scot}, and predicting cell fates~\citep{tong2020trajectorynet, bunne2023learning, bunne2024optimal}. At its core, OT seeks to find the most cost-effective way to transform one probability distribution into another, subject to constraints on the total mass being transported~\citep{peyre2019computational, villani2008optimal}.
A major challenge in OT has been its high computational cost, requiring $O(n^3 \log{n})$. Entropy regularization was introduced to the OT objective (EOT)~\cite{cuturi2013sinkhorn} to make that solvable via the Sinkhorn algorithm with an improved sample complexity of $O(n^2)$. However, even with EOT, sample-to-sample transportation remains limited for large datasets~\citep{genevay2018learning}. To mitigate this, mini-batch strategies (MB-OT) have been developed to approximate the transport plan by operating on subsets of the data ~\citep{genevay2018learning,fatras2021minibatch, fatras2021unbalanced}. While computationally cheaper, these methods often yield suboptimal transport plans, as cost estimation from subsets can be inaccurate and satisfying the mass preservation constraint of balanced OT becomes difficult. 
To improve transport accuracy over batches, one prominent class of methods approximates the OT path by interpolation within the Wasserstein space. These techniques range from deterministic approaches, such as Progressive Optimal Transport (P{\scriptsize{ROG}}OT)~\citep{kassraie2024progressive}, to stochastic methods based on gradient flows and Schrödinger bridges~\citep{albergo2023building, albergo2024stochastic}. Stochastic methods, often employing neural networks such as those based on gradient flows~\citep{daniels2021score} or Schrödinger bridges \citep{gushchin2023building, gushchin2023entropic}, also typically necessitate inner iterations to achieve accurate transport maps. Such methods construct a sequence of intermediate distributions to bridge the source and target, often requiring numerous intermediate steps, significant memory overhead, and many inner iterations to converge to an accurate solution. Furthermore, simpler displacement strategies, like the McCann interpolation~\citep{mccann1997convexity} used in P{\scriptsize{ROG}}OT, do not always produce interpretable intermediate distributions. 
While entropic regularization-based solvers (e.g., EOT and P{\scriptsize{ROG}}OT) are known to exhibit stability under perturbations of the marginal distributions~\citep{divol2025tight,kassraie2024progressive}, such guarantees are largely absent for neural OT approaches. 
Another prominent direction is low-rank OT (LOT), which applies low-rank factorization to the coupling matrix, achieving linear sample complexity in certain regimes~\citep{scetbon2021low}. However, the performance of this group of solvers is highly limited by the choice of rank, suffers from the curse of dimensionality~\citep{halmos2025hierarchical}, and is sensitive to (non-Gaussian) noise~\citep{cao2015low}.

One effective strategy for large-scale problems is to integrate continuous and discrete transportation methods, where groups of samples are transported using discrete transport and each group is further processed through a continuous transport mechanism. Following this direction, we propose Optimal Mixture Transport (OMT), which computes EOT maps between mixture models. Our formulation recasts the global transport problem as a strictly biconvex optimization problem. We further show that this formulation converges to a unique solution and is stable under weak regularity conditions~\citep{hyvarinen2005estimation}.

\textbf{Contributions.}
\setcounter{contrib}{0}
\contrib We propose OMT, a mixture-based EOT solver that adopts combinations of continuous and discrete transport, resulting in a strictly biconvex formulation (Lemma~\ref{lemma:biconvex}).
\contrib We show that OMT admits an efficient global optimization scheme, where each subproblem has a unique solution and converges in a single step (Theorem~\ref{theorem:uniqGOP}).
\contrib We provide a stability analysis (Theorem~\ref{theorem:stability_main}), demonstrating that under weak regularity conditions, the OMT map is robust to perturbations in marginal distributions.
\contrib We leverage exponential family models to derive closed-form subpopulation transport maps in OMT, significantly reducing computational cost.
Across synthetic and real-world datasets, we show that the proposed OT solver is scalable and robust, consistently matching or outperforming state-of-the-art methods.

%% file: main/2-background.tex
{\bf Optimal transport:} For $\mathcal{X}, \mathcal{Y} \subset \mathbb{R}^d$, probability measures $\mu_0, \mu_1 \in \mathcal{P}(\mathbb{R}^{d})$ and $c: \mathcal{X} \times \mathcal{Y} \rightarrow \mathbb{R}$,
a cost function associated with transporting a unit of mass from a point in $\mathcal{X}$ to a point in $\mathcal{Y}$, the minimum total cost of transport can be obtained as
\begin{equation}
\label{eq:monge}
      \displaystyle{\inf_{T{\sharp \mu_0 = \mu_1}}} \int_{\mathcal{X}} c(\mathbf{x}, T(\mathbf{x})) \ d\mu_0(\mathbf{x}),
\end{equation}
where $T_{\sharp} \mu_0 = \mu_1$ denotes the pushforward of $\mu_0$ by $T$, defined as $\mu_1(A) = \mu_0(T^{-1}(A))$, $\forall A \subset \mathcal{Y}$, ensuring mass conservation. Problem \eqref{eq:monge} is known as the Monge problem~\citep{peyre2019computational} which seeks a map $T: \ \mathbb{R}^d \rightarrow \mathbb{R}^d$ referred to as the transport map between $\mu_0$ and $\mu_1$. The Monge formulation is often problematic because the optimization is over a non-convex set of maps, and a deterministic map $T$ may not exist.
To bypass this, Kantorovich presented a relaxed formulation, which seeks a distribution $\pi \in \mathbb{R}^d \times \mathbb{R}^d$ referred to as
‘‘coupling’’ of $\mu_0$ and $\mu_1$, as
\begin{equation}
\label{eq:staticKP}
      \displaystyle{\inf_{\pi \in \prod_{\mu_0, \mu_1}}} \int_{\mathcal{X} \times \mathcal{Y}} c(\mathbf{x}, \mathbf{y}) \ d\pi(\mathbf{x},\mathbf{y}),
\end{equation}
where {$\prod_{\mu_0, \mu_1} := \{\pi \in \mathcal{P}(\mathcal{X} \times \mathcal{Y}) \ | \ P_{\mathcal{X}}: \pi \rightarrow \mu_0,  P_{\mathcal{Y}}: \pi \rightarrow \mu_1 \}$}. 

When $\mathcal{X}=\mathcal{Y}$ and $c(\mathbf{x}, \mathbf{y}) = d(\mathbf{x},\mathbf{y})^p$, $p \geq 1$, where $d$ is a distance on $\mathcal{X}$, the Kantorovich problem is equivalent to the Wasserstein p-distance between probability measures, $\mathcal{W}_p(\mu_0, \mu_1)$. Specifically, for $c(\mathbf{x}, \mathbf{y}) = \|\mathbf{x}-\mathbf{y}\|_2^2$, the Kantorovich problem yields the squared Wasserstein-2 distance:
\begin{equation}
\label{eq:KP_Wasserstein}
    \mathcal{W}^2_2(\mu_0, \mu_1) = \displaystyle{\inf_{\pi \in \prod(\mu_0, \mu_1)}} \int_{\mathcal{X} \times \mathcal{Y}} \| \mathbf{x}-\mathbf{y} \|^2_2 \ d\pi(\mathbf{x},\mathbf{y}).
\end{equation}
%

{\bf Entropic optimal transport:} Entropic optimal transport introduces an entropic regularization term to~\eqref{eq:KP_Wasserstein}, transforming the problem into a strictly convex optimization that can be efficiently solved using algorithms like the Sinkhorn-Knopp method~\citep{cuturi2013sinkhorn,janati2020entropic}. For a regularization parameter $\varepsilon > 0$, the entropic optimal transport cost is defined as:
\begin{equation}
\label{eq:EOT}
    d_{\varepsilon}(\mu_0, \mu_1) = \inf_{\pi \in \Pi_{\mu_0, \mu_1}} \left\{ \int_{{{\mathcal{X} \times \mathcal{Y}}}} \| \mathbf{x}-\mathbf{y} \|_2^2 \ d\pi(\mathbf{x},\mathbf{y}) - 2\varepsilon H(\pi) \right\}.
\end{equation}
%
%
%

%% file: main/4-method.tex
Let $\nu(\mathbf{x}) \in M_{K}(\mathbb{R}^d)$ denote a mixture model, $\nu(\mathbf{x}) = \sum_{i=0}^{K} \alpha_i \mu_i(\mathbf{x})$, $\mathbf{x} \in \mathbb{R}^d$ with $K$ components. Here \(\mu_i\) are probability measures and \(\sum_i \alpha_i=1\), \(\alpha_i\geq 0, \forall i\).  
%
%

Given two measures $\nu_0 \in M_{K_0}(\mathbb{R}^d)$ and $\nu_1 \in M_{K_1}(\mathbb{R}^d)$, we define the mixture transport coupling as follows:
\begin{eqnarray}
\label{eq:GMM_Wasserstein}
     \pi^*_{\mathcal{M}} &:=& \displaystyle{\argmin_{\pi \in \prod(\nu_0, \nu_1) \cap M_{K}(\mathbb{R}^{2d})}} \int_{\mathcal{X} \times \mathcal{Y}} \| \mathbf{x}-\mathbf{y} \|^2_2 \ d\pi(\mathbf{x},\mathbf{y}) 
\end{eqnarray}
where $\prod(\nu_0, \nu_1) := \{\pi \in \mathcal{P}(\mathcal{X} \times \mathcal{Y}) \ | \ P_{\mathcal{X}}: \pi \rightarrow \nu_0(\mathbf{x}),  P_{\mathcal{Y}}: \pi \rightarrow \nu_1(\mathbf{y}) \}$ and $K = K_0 K_1$.
Therefore, the transport policy belongs to the mixture model family and can be expressed as $ d\pi(\mathbf{x}, \mathbf{y}) = {\sum^K_{i=1}} \omega_{i} dp_i(\mathbf{x}, \mathbf{y}), \hspace{.1in} \text{where} \ \forall i, \ p_i \in \mathcal{P}(\mathbb{R}^{2d})$.
Note that the trivial choice of $dp_1=\ldots=dp_K$, $\omega_1=\ldots=\omega_K=1/K$ makes the solver in Eq.~\ref{eq:GMM_Wasserstein} equal to ${\mathcal{W}_2}(\nu_0, \nu_1)$ in Eq.~\ref{eq:KP_Wasserstein}. We next constrain that marginals of the components of the transport coupling are the same as the components of the source and target functions:
\begin{align}
\label{eq:omt}
\mathcal{D}_{\mathcal{M}}(\nu_0, \nu_1) = \displaystyle{\min_{\Omega, P}} &\sum_{i,j} \omega_{ij} \int_{\mathcal{X} \times \mathcal{Y}} \| \mathbf{x}-\mathbf{y} \|^2_2 \ dp_{ij}(\mathbf{x},\mathbf{y}), \nonumber \\
\text{s.t.} \hspace{.5in}&  \mathbf{1^T}\Omega = \boldsymbol{\alpha}_0, \hspace{.2in} \Omega^T\mathbf{1} = \boldsymbol{\alpha}_1  \\ 
\forall i, \ \displaystyle{\sum_j \int_{\mathcal{Y}} dp_{ij}(\mathbf{x}, \mathbf{y}) = \mu_{0_{i}}}&,  \hspace{.2in} \forall j, \ \displaystyle{\sum_i \int_{\mathcal{X}} dp_{ij}(\mathbf{x}, \mathbf{y}) = \mu_{1_{j}}}, \nonumber
\end{align}
where \(\Omega=[w_{ij}]\) denotes the matrix of mixture weights. With this constraint, $\mathcal{D}_{\mathcal{M}}(\nu_0, \nu_1) \geq {\mathcal{W}_2}(\nu_0, \nu_1)$ and equality is achieved in the $K\to\infty$ limit when the source and target functions are over-parametrized. 
The problem in Eq.\ref{eq:omt} is similar to minimizing the aggregated Wasserstein distance~\citep{chen2019aggregated}.
\vspace{-.1in}
\subsection{Optimal mixture transport}
\label{sec:Reg_OMT}
\vspace{-.1in}
A common approach to ensure uniqueness of solutions in optimal transport problems is to introduce an entropy regularization term, which makes the objective function strictly convex and improves the numerical stability of optimization. In the context of mixture transport optimization in \ref{eq:omt}, we adopt a similar approach by incorporating a weighted average entropy term as a regularizer.
\begin{define}[Optimal Mixture Transport]
Let $\nu_0(\mathbf{x}) = \sum_{i=1}^{K_0} \alpha_{0,i} \mu_{0,i}(\mathbf{x})$ and $\nu_1(\mathbf{y}) = \sum_{j=1}^{K_1} \alpha_{1,j} \mu_{1,j}(\mathbf{y})$ be two mixture models. The OMT objective, defined in Eq.~\ref{eq:emot}, incorporates two levels of regularization: (i) a component-wise entropy $H(p_{ij})$ and (ii) a mixing-matrix entropy $H(\Omega)$, controlled by hyperparameters $\varepsilon_1, \varepsilon_2 > 0$. 
\begin{align}
\label{eq:emot}
\mathcal{L}_{\varepsilon_1, \varepsilon_2}(\Omega, P) =  
\sum_{i,j}^K \omega_{ij}
D_{\varepsilon_1}(p_{ij})
- \varepsilon_2 H(\Omega) \\
 \text{where} \ D_{\varepsilon_1}(p_{ij}) = \int_{\mathcal{X} \times \mathcal{Y}}
    \|\mathbf{x}-\mathbf{y}\|_2^2 \,
    dp_{ij}(\mathbf{x},\mathbf{y}) - \varepsilon_1 H(p_{ij}) \ , \nonumber
\end{align}
and represents the entropic Wasserstein cost between mixture components. Here $\Omega = [\omega_{ij}]_{K_0 \times K_1}$ denotes the component-coupling matrix and $P = [p_{ij}]$ is the set of local transport plans.
\end{define}
Accordingly OMT coupling, defined as $\pi_{\textsc{omt}}(\mathbf{x}, \mathbf{y})={\sum_{i,j}} {\omega}_{ij} p_{ij}(\mathbf{x}, \mathbf{y})$, is obtained by the following constrained optimization.
\begin{gather}
\pi_{\textsc{omt}}(\mathbf{x}, \mathbf{y}) = \displaystyle{\argmin_{\Omega, P}  \mathcal{L}_{\varepsilon_1, \varepsilon_2}(\Omega, P) }  \nonumber \\
{{\text{s.t.} \hspace{.2in} \mathbf{1}\Omega = \boldsymbol{\alpha}_0, 
\hspace{.1in} \Omega^T\mathbf{1} = \boldsymbol{\alpha}_1, } }\nonumber \\
{{ \int_{\mathcal{Y}} \sum_j \frac{\omega_{ij}}{\alpha_{0_i}}dp_{ij}(\mathbf{x}, \mathbf{y}) = {\mu}_{0_{i}}(\mathbf{x})},  \hspace{.1in} {\int_{\mathcal{X}} \sum_i \frac{\omega_{ij}}{\alpha_{1_j}} dp_{ij}(\mathbf{x}, \mathbf{y}) = {\mu}_{1_{j}}}(\mathbf{y}),}
\label{eq:Reg-OMT_2}
\end{gather}
where $\Omega \in \mathcal{S}^{K -1}$ and $P \in \mathcal{P}^K(\mathcal{X} \times \mathcal{Y})$. Solving the optimization in Eq.~\ref{eq:Reg-OMT_2} returns the OMT transport map, $T^{\nu_0 \rightarrow \nu_1}_{\textsc{omt}}(\mathbf{x})$:
\begin{gather}
\label{eq:T_omt}
       T_{\textsc{omt}}(\mathbf{x}) =  \displaystyle{\sum_{i,j}} \tilde{\omega}_{ij}(\mathbf{x}) T_{ij}(\mathbf{x}) \ , \hspace{.1in}\tilde{\omega}_{ij}(\mathbf{x}) = \displaystyle{\frac{\omega_{ij} \ \mu_{0_{i}}(\mathbf{x})}{\nu_0(\mathbf{x})}} \ .
\end{gather}
Detailed derivations are provided in Appendix~\ref{sec:app_omtmap}.
Although Eq.~\ref{eq:Reg-OMT_2} no longer defines a convex loss, as we show now in Lemma~\ref{lemma:biconvex}, the objective is biconvex. Moreover, while biconvex problems don't have unique solutions generally, Eq.~\ref{eq:Reg-OMT_2} has a unique minimizer that can be obtained efficiently (Theorem~\ref{theorem:uniqGOP}).
\begin{lemma}
\label{lemma:biconvex}
For any $\varepsilon_1, \varepsilon_2 > 0$, $\mathcal{L}_{\varepsilon_1, \varepsilon_2}(\Omega, P)$ is strictly biconvex. \textnormal{(Proof in Appendix~\ref{sec:proofs})}
\end{lemma}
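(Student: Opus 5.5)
To prove Lemma~\ref{lemma:biconvex}, we fix one block of variables and show strict convexity in the other, over the feasible sets defined by the constraints of Eq.~\ref{eq:Reg-OMT_2}. Strict biconvexity means: for every fixed feasible $P$, the map $\Omega\mapsto\mathcal{L}_{\varepsilon_1,\varepsilon_2}(\Omega,P)$ is strictly convex on the (convex) transport polytope $\{\Omega\ge 0:\mathbf{1}\Omega=\boldsymbol{\alpha}_0,\ \Omega^T\mathbf{1}=\boldsymbol{\alpha}_1\}$. Likewise, for every fixed feasible $\Omega$, the map $P\mapsto\mathcal{L}_{\varepsilon_1,\varepsilon_2}(\Omega,P)$ is strictly convex on the set of families $(p_{ij})$ satisfying the marginal constraints. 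Both feasible sets are convex: the first is a polytope, and the second is cut out by constraints that are linear in $P$ once $\Omega$ is fixed, intersected with the convex set $\mathcal{P}^K$.

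\textbf{The $\Omega$-block.} With $P$ fixed, each $D_{\varepsilon_1}(p_{ij})$ is a constant $c_{ij}$, so the objective reduces to
\[
\sum_{i,j}\omega_{ij}c_{ij}-\varepsilon_2 H(\Omega), \qquad H(\Omega)=-\sum_{i,j}\omega_{ij}\log\omega_{ij}.
\]
The first term is linear. Since $\varepsilon_2>0$ and $t\mapsto t\log t$ is strictly convex on $[0,\infty)$, the entropic term has positive-definite Hessian $\varepsilon_2\,\mathrm{diag}(1/\omega_{ij})$ in the interior, and strict convexity extends to the boundary. This is exactly the discrete entropic OT objective with cost $c_{ij}$, which is strictly convex.

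\textbf{The $P$-block.} With $\Omega$ fixed, the objective is $\sum_{ij}\omega_{ij}\bigl(\int\|\mathbf{x}-\mathbf{y}\|^2dp_{ij}+\varepsilon_1\int p_{ij}\log p_{ij}\bigr)$, up to a constant. The transport term is linear in each $p_{ij}$. The negative entropy $p\mapsto\int p\log p$ is strictly convex on densities, by pointwise strict convexity of $t\log t$. Two plans that differ on a set of positive measure therefore give a strict inequality for each $(i,j)$ with $\omega_{ij}>0$.

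\textbf{Main obstacle: degenerate weights.} Components with $\omega_{ij}=0$ contribute nothing, so the objective is not strictly convex in those $p_{ij}$. I would handle this in one of two ways. The first option is to note that the entropic term in $\Omega$ forces every minimizer to have $\omega_{ij}>0$, because the derivative of $-H$ diverges at $0$. Strictness then only needs to hold on the relevant domain $\Omega\in\mathrm{int}\,\mathcal{S}^{K-1}$, and I would state strict biconvexity on that interior domain. The second option is to identify $p_{ij}$ for $\omega_{ij}=0$ as irrelevant, i.e.\ to work with the measures $\omega_{ij}p_{ij}$ modulo null components. I would adopt the first option, which matches the paper's use of the lemma for uniqueness in Theorem~\ref{theorem:uniqGOP}.

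Finally, I would remark that $\mathcal{L}$ is not jointly convex, since the products $\omega_{ij}D(p_{ij})$ are bilinear-like. This is why the result is stated as biconvexity rather than convexity.
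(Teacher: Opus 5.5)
Your proof is correct and rests on the same two facts as the paper's: the cost term $\sum_{i,j}\omega_{ij}\int\|\mathbf{x}-\mathbf{y}\|_2^2\,dp_{ij}$ is linear in each block, and the entropic regularizers are strictly convex in their own block. The paper packages this differently, through the four-point inequality of \cite{gorski2007biconvex} (Theorem~3.1) with the regularizers written as $D_{KL}(p_{ij}\|\mu_{0_i}\otimes\mu_{1_j})$ and $D_{KL}(\Omega\|\boldsymbol{\alpha}_0\otimes\boldsymbol{\alpha}_1)$ (equal to $-H$ up to constants on the feasible set), instead of checking each section separately. Your caveat about components with $\omega_{ij}=0$ is a real point that the paper's proof passes over, since its strict inequality $E<H$ only carries through the weighted sum where the mixed weight is positive. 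Restricting to positive weights, as the paper tacitly does in the proof of Theorem~\ref{theorem:uniqGOP}, is the right fix.
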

~\citep{floudas1990global} proposed the \textit{Global Optimization Algorithm} (GOP) to solve constrained biconvex problems. It decomposes the optimization into disjoint blocks similar to the \textit{Alternate Convex Search} (ACS) method and exploits the convex substructure of the problem by a primal-relaxed dual
approach~\citep{gorski2007biconvex}. The GOP algorithm is guaranteed to terminate after a finite number of steps for an $\epsilon$-global optimum solution, for any $\epsilon > 0$ ({Theorem 4.11}, {Corollary 4.12}~\citep{gorski2007biconvex}). As mentioned above, the uniqueness of this global optimum is not guaranteed in the general case. However, by exploiting the structure of Eq.~\ref{eq:Reg-OMT_2}, we show that its solution is unique and obtained in a single iteration:
\begin{theorem}
\label{theorem:uniqGOP}
For the optimization problem defined in Eq.~\ref{eq:Reg-OMT_2}, the GOP algorithm converges to a unique solution in a single iteration. \textnormal{(Proof in Appendix~\ref{sec:proofs})}
\end{theorem}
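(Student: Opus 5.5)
The plan is to use the decoupled structure of Eq.~\ref{eq:Reg-OMT_2}. After normalizing by the component weights, the local plans $p_{ij}$ are naturally taken as couplings of $\mu_{0,i}$ and $\mu_{1,j}$. With that choice the inner problem in $P$ no longer depends on $\Omega$.

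\textbf{Step 1: the $P$-block.} Fix any feasible $\Omega$ with $\omega_{ij}>0$. I would show that the $P$-subproblem separates into independent entropic OT problems
\begin{equation*}
p_{ij}^\ast=\argmin_{p\in\Pi(\mu_{0,i},\mu_{1,j})} D_{\varepsilon_1}(p).
\end{equation*}
The marginal constraints in Eq.~\ref{eq:Reg-OMT_2} are averages of the constraints $p_{ij}\in\Pi(\mu_{0,i},\mu_{1,j})$ with weights $\omega_{ij}/\alpha_{0,i}$ and $\omega_{ij}/\alpha_{1,j}$. I would argue that the optimum lies in this product set, taking componentwise marginal matching as the intended reading of the formulation, as in Eq.~\ref{eq:omt}. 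By strict convexity of $D_{\varepsilon_1}$ (Lemma~\ref{lemma:biconvex}), each of these problems has a unique minimizer, the Schr\"odinger/entropic coupling. Crucially, $p_{ij}^\ast$ does not depend on $\Omega$, because the objective is linear in $\omega_{ij}$ and positively scaling a term does not change its argmin.

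\textbf{Step 2: the $\Omega$-block.} Substitute $c_{ij}:=D_{\varepsilon_1}(p^\ast_{ij})$. The $\Omega$-subproblem becomes the discrete entropic OT problem
\begin{equation*}
\min_{\Omega}\ \sum_{ij}\omega_{ij}c_{ij}-\varepsilon_2H(\Omega),\qquad \mathbf{1}^T\Omega=\boldsymbol\alpha_0^T,\ \Omega\mathbf{1}=\boldsymbol\alpha_1 .
\end{equation*}
This objective is strictly convex over a compact polytope, so it has a unique solution $\Omega^\ast=\mathrm{diag}(u)e^{-C/\varepsilon_2}\mathrm{diag}(v)$, which is computable by Sinkhorn. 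This solution has all entries strictly positive. That justifies the assumption $\omega_{ij}>0$ used in Step 1 and shows the fixed point is self-consistent.

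\textbf{Step 3: the GOP iteration.} GOP alternates a primal problem, in which the complicating variable $\Omega$ is fixed and one solves over $P$, with a relaxed dual built from Lagrange multipliers and a linearization in $\Omega$. I would show the following:
\begin{itemize}
\item[(a)] The primal step returns $P^\ast$ whatever the initial $\Omega^{(0)}$, by Step 1.
\item[(b)] The Lagrangian of the primal is separable, so the qualifying constraints from the gradient of the Lagrangian with respect to $\Omega$ are exactly the costs $c_{ij}$. The relaxed dual is therefore the convex problem of Step 2, with no partitioning needed.
\end{itemize}
Its solution $\Omega^\ast$ gives a lower bound equal to $\mathcal{L}(\Omega^\ast,P^\ast)$. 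The upper bound from the primal at $\Omega^\ast$ equals the same value, since $P^\ast$ is unchanged. The bounds therefore meet after one iteration.

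\textbf{Uniqueness of the global minimizer.} Any global minimizer $(\Omega,P)$ must have $P=P^\ast$, by Step 1 applied at its positive $\Omega$ (positivity follows from the entropy barrier). It must then have $\Omega=\Omega^\ast$ by Step 2.

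The main obstacle is Step 1: rigorously reducing the averaged marginal constraints to componentwise couplings, and confirming that the $\Omega$-dependence really drops out. I would handle this first via the lower bound $\sum_{ij}\omega_{ij}D_{\varepsilon_1}(p_{ij})\ge\sum_{ij}\omega_{ij}\min D_{\varepsilon_1}$ restricted to the intended feasible set, and then verify that equality holds exactly at $P^\ast$.
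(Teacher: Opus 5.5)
Your plan is essentially the paper's proof. The paper likewise fixes a strictly positive $\Omega_0$ and solves the $P$-step through its Lagrangian, obtaining the Schr\"odinger couplings $dp^*_{ij}=e^{(\varphi_{ij}+\psi_{ij}-\|\mathbf{x}-\mathbf{y}\|^2)/\varepsilon_1}\,d\mu_{0_i}\,d\mu_{1_j}$. It observes that these couplings and their potentials do not involve $\Omega$. It then solves the $\Omega$-step as a discrete entropic OT with costs $\mathcal{L}_{p^*_{ij}}$ and notes that the alternation is stationary after one pass. Those costs differ from your $c_{ij}$ only by $\varepsilon_1(H(\mu_{0_i})+H(\mu_{1_j}))$, and its KL regularizer differs from $-\varepsilon_2H(\Omega)$ by a constant on the transport polytope, so the minimizer is the same. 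Your positive-scaling argument replaces its KKT computation, and your global-uniqueness paragraph is an addition the paper does not spell out.

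Three points need tightening.

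\emph{The componentwise reduction.} You cannot derive that the $P$-optimum lies in $\prod_{ij}\Pi(\mu_{0,i},\mu_{1,j})$ from the averaged constraints of Eq.~\eqref{eq:Reg-OMT_2}. Take $K_0=1$, $K_1=2$, equal target weights, a symmetric $\mu_{0,1}$ and two well-separated target components. Sending the left half of $\mu_{0,1}$ to the left component and the right half to the right one satisfies those constraints and is strictly cheaper than the componentwise plan. So in general the optimum leaves the product set, and its dependence on $\Omega$ does not drop out. The componentwise constraints must be part of the problem's definition. The paper's proof makes this choice silently when it attaches separate multipliers $\varphi_{ij},\psi_{ij}$ to $\int_{\mathcal{Y}}dp_{ij}=d\mu_{0_i}$ and $\int_{\mathcal{X}}dp_{ij}=d\mu_{1_j}$ in Eq.~\eqref{eq:min_p_dual}. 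State it as an assumption, not a consequence.

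\emph{The GOP step.} Your Step 3 is more faithful to GOP than the paper, which only runs an alternating pass, but it needs care. The qualifying constraints are gradients of the Lagrangian with respect to the primal variable $P$ at $P^*$, not with respect to $\Omega$. They vanish identically in $\Omega$ only if the multipliers are attached to the $\omega_{ij}$-weighted constraints $\omega_{ij}\bigl(\int_{\mathcal{Y}}dp_{ij}-\mu_{0_i}\bigr)=0$ and $\omega_{ij}\bigl(\int_{\mathcal{X}}dp_{ij}-\mu_{1_j}\bigr)=0$, so that they are the $\Omega$-free potentials. Only then is no partition needed and the relaxed-dual objective exactly $\sum_{ij}\omega_{ij}c_{ij}-\varepsilon_2H(\Omega)$. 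If you instead carry fixed multipliers $\omega^{(0)}_{ij}\varphi_{ij}$ on unweighted constraints, the gradient is proportional to $\omega_{ij}-\omega^{(0)}_{ij}$. The Lagrangian bound is then generically strictly below $\min_P\mathcal{L}(\Omega,P)$ away from $\Omega^{(0)}$, and the bounds need not meet after one step.

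\emph{Uniqueness.} Use the chain $\mathcal{L}(\Omega,P)\ge\sum_{ij}\omega_{ij}c_{ij}-\varepsilon_2H(\Omega)\ge\mathcal{L}(\Omega^*,P^*)$ directly. Equality in the second inequality forces $\Omega=\Omega^*>0$, and equality in the first then forces $p_{ij}=p^*_{ij}$ for every $(i,j)$. This gives positivity without the ``entropy barrier'' remark. That remark only speaks to partial optimality in $\Omega$, and it is delicate when $\omega_{ij}=0$ leaves $p_{ij}$ unconstrained by the objective.
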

The practical utility of OT maps often depends on the stability of the transport plan with respect to perturbations of the source or target distributions~\citep{divol2025tight}. Such stability guarantees are generally not available for neural OT methods. In contrast, among non-neural OT approaches, EOT–based methods provide formal stability guarantees~\citep{divol2025tight, kassraie2024progressive}. For many applications, stability of transport maps under noise is essential. Since OMT relies on multiple transport operations, a trivial stability result doesn't exist.
In what follows, we introduce regularity conditions on mixture densities ensuring that the transport maps remain well-behaved. We then introduce a formal notion of stability and prove that OMT yields a stable transport map under regularity conditions.
\begin{define}[Regularity conditions]
\label{def:regularity_conditions}
Let $\nu_0(\mathbf{x}) = \sum_{i=1}^{K_0} \alpha_{0_i} \mu_{0_i}(\mathbf{x})$ and $\nu_1(\mathbf{y}) = \sum_{j=1}^{K_1} \alpha_{1_j} \mu_{1_j}(\mathbf{y})$ be continuous densities in $\mathcal{X}, \mathcal{Y} \subset \mathbb{R}^d$. The ordered pair $(\nu_0(\mathbf{x}), \nu_1(\mathbf{y}))$ satisfies the regularity conditions if for all $i,j$ the following hold:
\begin{itemize}
\vspace{-.1in}
    \item $\mu_{0_i}$ and $\mu_{1_j}$ have finite second moments.
    \item $\forall \mathbf{x}\in \mathcal{X}$, and $\forall \mathbf{y}\in \mathcal{Y}$ , $\mu_{0_i}(\mathbf{x}) / \nu_0(\mathbf{x})\|\mathbb{E}_{p_{ij}(\mathbf{y} |\mathbf{x})}[\mathbf{y}\mathbf{y}^T]\|_{\textsc{op}}$ exists and is finite.
    \item $\forall \mathbf{x} \in \mathcal{X}$, $ \mu_{0_i}(\mathbf{x})/\nu_0(\mathbf{x}) \|\nabla \log{\mu_{0_i}(\mathbf{x})}\|_2$ exists and is finite.
\end{itemize}
\end{define}

\begin{prop}
\label{prop:local_lipsch_OMT}
Let $\nu_0(\mathbf{x}) \in M_{K_0}(\mathbb{R}^d)$, $\nu_1(\mathbf{y}) \in M_{K_1}(\mathbb{R}^d)$ be mixture densities satisfying the regularity conditions in Definition~\ref{def:regularity_conditions}. Then $T^{\nu_0 \rightarrow \nu_1}_{\textsc{omt}}(\mathbf{x})$ is Lipschitz continuous.~\textnormal{(Proof in Appendix~\ref{sec:app_localLip})}
\end{prop}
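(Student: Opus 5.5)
The plan is to show that the Jacobian of $T_{\textsc{omt}}$ is bounded in operator norm uniformly over $\mathcal{X}$. Lipschitz continuity then follows from the mean value inequality, provided $\mathcal{X}$ is convex or we argue along segments. Write $T_{\textsc{omt}}(\mathbf{x})=\sum_{i,j}\tilde\omega_{ij}(\mathbf{x})T_{ij}(\mathbf{x})$ as in Eq.~\ref{eq:T_omt}. Here each $T_{ij}$ is the barycentric projection of the entropic local plan, $T_{ij}(\mathbf{x})=\mathbb{E}_{p_{ij}(\mathbf{y}|\mathbf{x})}[\mathbf{y}]$. By the product rule,
\begin{equation*}
\nabla T_{\textsc{omt}}(\mathbf{x})=\sum_{i,j}\Big(T_{ij}(\mathbf{x})\,\nabla\tilde\omega_{ij}(\mathbf{x})^T+\tilde\omega_{ij}(\mathbf{x})\,\nabla T_{ij}(\mathbf{x})\Big),
\end{equation*}
and I would bound the two groups of terms separately.

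\textbf{Step 1 (derivative of local maps).} Since $p_{ij}$ solves an entropic OT problem with quadratic cost and parameter $\varepsilon_1$, its conditional has the Gibbs form
\begin{equation*}
p_{ij}(\mathbf{y}|\mathbf{x})\propto \exp\big((g_{ij}(\mathbf{y})-\|\mathbf{x}-\mathbf{y}\|^2)/\varepsilon_1\big)\mu_{1_j}(\mathbf{y}).
\end{equation*}
Differentiating in $\mathbf{x}$ gives the standard identity $\nabla T_{ij}(\mathbf{x})=\tfrac{2}{\varepsilon_1}\mathrm{Cov}_{p_{ij}(\mathbf{y}|\mathbf{x})}[\mathbf{y}]$. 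This covariance is bounded above by $\mathbb{E}[\mathbf{y}\mathbf{y}^T]$ in the PSD order. Hence $\|\tilde\omega_{ij}\nabla T_{ij}\|_{\textsc{op}}\le \tfrac{2\omega_{ij}}{\varepsilon_1}\,\tfrac{\mu_{0_i}}{\nu_0}\|\mathbb{E}_{p_{ij}(\mathbf{y}|\mathbf{x})}[\mathbf{y}\mathbf{y}^T]\|_{\textsc{op}}$, which is finite by the second regularity condition. I expect that condition to be meant as a uniform bound, i.e.\ a supremum over $\mathbf{x}$, and I would state that explicitly.

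\textbf{Step 2 (derivative of weights).} We have $\nabla\tilde\omega_{ij}=\tilde\omega_{ij}\big(\nabla\log\mu_{0_i}-\nabla\log\nu_0\big)$ and $\nabla\log\nu_0=\sum_k \tfrac{\alpha_{0_k}\mu_{0_k}}{\nu_0}\nabla\log\mu_{0_k}$. Each term has the form $\tfrac{\mu_{0_k}}{\nu_0}\|\nabla\log\mu_{0_k}\|$, which is bounded by the third regularity condition. For the factor $\|T_{ij}(\mathbf{x})\|$, Jensen gives $\|T_{ij}\|^2\le\|\mathbb{E}[\mathbf{y}\mathbf{y}^T]\|_{\textsc{op}}$. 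That is controlled only after multiplying by $\mu_{0_i}/\nu_0$, and $\tilde\omega_{ij}$ contains exactly this ratio.

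\textbf{Main obstacle.} The hard part is the cross term $T_{ij}\nabla\tilde\omega_{ij}^T$. It produces the product
\begin{equation*}
\tfrac{\mu_{0_i}}{\nu_0}\,\|T_{ij}\|\,\|\nabla\log\mu_{0_k}\|\,\tfrac{\mu_{0_k}}{\nu_0},
\end{equation*}
and this must be split so that each regularity quantity appears with its own ratio. My first attempt would use $\mu_{0_i}/\nu_0\le 1/\alpha_{0_i}$ together with $\|T_{ij}\|\le\sqrt{\|\mathbb{E}[\mathbf{y}\mathbf{y}^T]\|}$, bounding $\sqrt{r\,a}\le \max(1,r\,a)$ where $r=\mu_{0_i}/\nu_0$. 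Then $\tfrac{\mu_{0_i}}{\nu_0}\|T_{ij}\|\le\sqrt{\tfrac{\mu_{0_i}}{\nu_0}}\sqrt{\tfrac{\mu_{0_i}}{\nu_0}\|\mathbb{E}[\mathbf{y}\mathbf{y}^T]\|}$ is bounded by a constant $C/\sqrt{\alpha_{0_i}}$. Multiplying by the bounded log-gradient ratio and summing over the finitely many $i,j,k$ gives a uniform bound $L$ on $\|\nabla T_{\textsc{omt}}\|_{\textsc{op}}$, which completes the argument.
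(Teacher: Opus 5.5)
Your route is the paper's: product rule, $\nabla T_{ij}=2\varepsilon_1^{-1}\mathrm{Cov}^{ij}_{\mathbf y|\mathbf x}$, $\nabla\tilde\omega_{ij}=\tilde\omega_{ij}(g_i-\overline{g})$, then uniform bounds. The paper handles the cross term only by asserting that $s(\mathbf x)=\sum_{i,j}\tilde\omega_{ij}\|\mathbf m^{ij}_{\mathbf y|\mathbf x}\|\,\|g_i-\overline{g}\|$ has finite supremum, and your explicit bookkeeping is where this breaks. Write $r_i=\mu_{0_i}/\nu_0$. Then $\nabla\tilde\omega_{ij}=\omega_{ij}r_i\nabla\log\mu_{0_i}-\omega_{ij}r_i\sum_k\alpha_{0_k}r_k\nabla\log\mu_{0_k}$, and only the second piece produces the two-ratio product you display. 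The first piece gives $\omega_{ij}\,r_i\,\|T_{ij}\|\,\|\nabla\log\mu_{0_i}\|$, where a single $r_i$ must pay for both factors. Your square-root split spends all of $r_i$ on $\|T_{ij}\|$ and leaves $\|\nabla\log\mu_{0_i}\|$ unweighted, and no condition controls that. Set $A=\sup_{\mathbf x} r_i\|\mathbb E_{p_{ij}(\mathbf y|\mathbf x)}[\mathbf y\mathbf y^T]\|_{\mathrm{op}}$ and $B=\sup_{\mathbf x} r_i\|\nabla\log\mu_{0_i}\|$. The most the two conditions give is $r_i\sqrt{A/r_i}\,(B/r_i)=B\sqrt{A/r_i}$, which is unbounded where component $i$ is negligible. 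So no redistribution of ratios closes this term.

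What is missing is an \emph{unweighted} bound $\sup_{\mathbf x}\|T_{ij}(\mathbf x)\|\le R$. With it, the cross term is at most $R\,\omega_{ij}\|\nabla r_i\|$, which the uniform third condition bounds. Such an $R$ exists when the target components have bounded support, since $T_{ij}(\mathbf x)$ lies in the closed convex hull of $\operatorname{supp}\mu_{1_j}$. For $\mathcal X=\mathbb R^d$ your uniform reading of the second condition in fact forces this, in three steps:
\begin{itemize}
\item Since $\sum_i\alpha_{0_i}r_i=1$, at every $\mathbf x$ some $i$ has $r_i\ge1$, so that component's local maps obey $\|T_{ij}(\mathbf x)\|\le\sqrt A$.
\item The map $t\mapsto\langle\mathbf u,T_{ij}(t\mathbf u)\rangle$ is nondecreasing, with derivative $2\varepsilon_1^{-1}\mathbf u^T\mathrm{Cov}^{ij}_{\mathbf y|t\mathbf u}\mathbf u$, and tends to $\operatorname{ess\,sup}_{\mu_{1_j}}\langle\mathbf u,\mathbf y\rangle$ for every $i$.
\item Hence every $\operatorname{supp}\mu_{1_j}$ lies in the ball of radius $\sqrt A$.
\end{itemize}
Your insistence on the uniform reading is essential, not cosmetic. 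Under the pointwise reading, which is what lets the paper count Gaussian mixtures, take two unit-covariance Gaussian source components in $d=2$ whose averaged slopes $\sum_j(\omega_{ij}/\alpha_{0_i})\nabla T_{ij}$ differ. Then $\|\nabla T_{\textsc{omt}}\|$ grows linearly along the line of equal responsibilities, so no global Lipschitz constant exists.
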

While Lipschitz continuity ensures that the map is well behaved over the joint space, we further require a characterization of how the map responds to perturbations in the underlying measure. This sensitivity is captured by tilt stability, defined as follows.
\begin{define}[Tilt stability (Entropic stability), ~\citep{chen2022localization, bauerschmidt2024stochastic}]
\label{def:tilt}
Let $\mu$ be a probability measure
on $\mathcal{X} \subset \mathbb{R}^d$, with finite second moment, a function $f : \mathbb{R}^d \times \mathbb{R}^d \to \mathbb{R}_+$ and $\alpha > 0$, we say that $\mu$ is $\alpha$-entropically stable with respect to $f$ if
\begin{gather}
    f\left(\mathbb{E}_{\mathcal{T}_{\mathbf{v}}\mu}[\mathbf{x}], \mathbb{E}_{\mu}[\mathbf{x}] \right) \le \alpha D_{KL}(T_{\mathbf{v}}\mu || \mu), \quad \forall \mathbf{v} \in \mathbb{R}^d
\end{gather}
where $\mathcal{T}_{\mathbf{v}}\mu$ denotes the exponential tilt of the probability measure $\mu$ through vector $\mathbf{v} \in \mathbb{R}^d$ such that
\begin{gather*}
  {d \mathcal{T}_{\mathbf{v}}\mu(\mathbf{x})} := \frac{e^{\langle \mathbf{v}, \mathbf{x} \rangle}}{\int e^{\langle \mathbf{v}, \mathbf{z} \rangle} d\mu(\mathbf{z})} d\mu(\mathbf{x}).  
\end{gather*}
\end{define}
\begin{prop}
\label{prop:OMT_tilt_cnt}
 Suppose $\nu_0(\mathbf{x}) \in M_{K_0}(\mathbb{R}^d), \nu_1(\mathbf{y}) \in M_{K_1}(\mathbb{R}^d)$ are mixture models admitting regularity conditions in Definition~\ref{def:regularity_conditions}. Then, the OMT map between these two probability measures is tilt-stable. \textnormal{(Proof in Appendix~\ref{sec:app_tiltstability})}
\end{prop}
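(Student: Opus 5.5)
We prove Proposition~\ref{prop:OMT_tilt_cnt} by exhibiting an explicit $f$ and $\alpha$ in Definition~\ref{def:tilt}. The relevant measure is the pushforward $\rho := (T_{\textsc{omt}})_\sharp \mu$ for $\mu=\nu_0$ (or a component $\mu_{0_i}$), and the function is $f(\mathbf{a},\mathbf{b}) = \|\mathbf{a}-\mathbf{b}\|_2^2$ applied to the transported means. The aim is to show that tilting the input measure moves the mean of the transported output by at most a constant times the KL divergence of the tilt. The natural route is to combine the Lipschitz property of Proposition~\ref{prop:local_lipsch_OMT} with a transport–entropy inequality for the tilted family.

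\textbf{Step 1 (reduce to a Lipschitz test function).} Let $L$ be the Lipschitz constant of $T_{\textsc{omt}}$ from Proposition~\ref{prop:local_lipsch_OMT}. For any unit vector $\mathbf{u}$, the scalar map $g_{\mathbf{u}}(\mathbf{x}) = \langle \mathbf{u}, T_{\textsc{omt}}(\mathbf{x})\rangle$ is $L$-Lipschitz. Hence
\[
\left\|\mathbb{E}_{\mathcal{T}_{\mathbf{v}}\mu}[T_{\textsc{omt}}] - \mathbb{E}_{\mu}[T_{\textsc{omt}}]\right\|_2 = \sup_{\|\mathbf{u}\|=1} \left(\mathbb{E}_{\mathcal{T}_{\mathbf{v}}\mu}[g_{\mathbf{u}}] - \mathbb{E}_{\mu}[g_{\mathbf{u}}]\right) \le L\, \mathcal{W}_1(\mathcal{T}_{\mathbf{v}}\mu,\mu)
\]
by Kantorovich–Rubinstein duality.

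\textbf{Step 2 (transport–entropy bound).} We need $\mathcal{W}_1(\mathcal{T}_{\mathbf{v}}\mu,\mu)^2 \le C\, D_{KL}(\mathcal{T}_{\mathbf{v}}\mu\|\mu)$, i.e.\ a $T_1$ inequality for $\mu$. By the Bobkov–Götze / Djellout–Guillin–Wu criterion, a $T_1$ inequality holds if $\mu$ has a square-exponential moment. Exponential-family components with finite second moments (the first regularity condition), such as Gaussians, satisfy this, and finite mixtures inherit it. Where only the regularity conditions are available, I would instead go through a Poincaré/log-Sobolev route: the third condition bounds the weighted scores $\tfrac{\mu_{0_i}}{\nu_0}\|\nabla\log\mu_{0_i}\|$, which controls the score of the mixture.

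\textbf{Step 3 (conclude).} Combining the two steps gives $\|\mathbb{E}_{\mathcal{T}_{\mathbf{v}}\mu}[T_{\textsc{omt}}]-\mathbb{E}_{\mu}[T_{\textsc{omt}}]\|_2^2 \le L^2 C\, D_{KL}(\mathcal{T}_{\mathbf{v}}\mu\|\mu)$. This is $\alpha$-entropic stability with $\alpha = L^2C$, uniformly in $\mathbf{v}$. The second regularity condition (the bounded conditional second moments $\mathbb{E}_{p_{ij}(\mathbf{y}|\mathbf{x})}[\mathbf{y}\mathbf{y}^T]$) guarantees that all expectations involved are finite and that $T_{\textsc{omt}}$ is integrable under every tilt.

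\textbf{Main obstacle.} Step 2 is the hard part: establishing a $T_1$ (or log-Sobolev) constant for a general mixture from the paper's fairly weak regularity conditions. Mixtures of well-separated components can have poor functional-inequality constants. I would first handle the case of exponential-family, e.g.\ Gaussian, components, where sub-Gaussian tails give $T_1$ directly. If tails cannot be controlled in general, I would weaken the claim to a local statement for $\|\mathbf{v}\|$ bounded. In that regime $D_{KL}(\mathcal{T}_{\mathbf{v}}\mu\|\mu)\approx \tfrac12 \mathbf{v}^T\mathrm{Cov}_\mu\mathbf{v}$ and the mean shift is $\approx \mathrm{Cov}_\mu(T,\mathbf{x})\mathbf{v}$. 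Both are controlled by the finite second moments together with $L$, which gives the ratio bound by Cauchy–Schwarz.
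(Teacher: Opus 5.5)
\paragraph{Main issue: you prove a different property.}
Definition~\ref{def:tilt} tilts a single measure and compares the mean of its tilt with its own mean. In the paper, that measure is the conditional plan $\pi_{\textsc{omt}}(\cdot|\mathbf{x})=\sum_{i,j}\tilde\omega_{ij}(\mathbf{x})\,p_{ij}(\cdot|\mathbf{x})$ on $\mathcal{Y}$, whose mean is $T_{\textsc{omt}}(\mathbf{x})$, with a constant uniform in $\mathbf{x}$. This is exactly what Proposition~\ref{prop:gen_OMT_tilt_stability} feeds into Lemma~\ref{lem:gen_tilt_stable} to obtain $\|T_{\textsc{omt}}(\mathbf{z})-T_{\textsc{omt}}(\mathbf{x})\|^2\le 2C_{\pi_{\textsc{omt}}}D_{KL}(\pi_{\textsc{omt}}(\cdot|\mathbf{z})\,\|\,\pi_{\textsc{omt}}(\cdot|\mathbf{x}))$.

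Your Step~3 instead tilts the source $\mu$ in the $\mathbf{x}$-variable and compares averages of $T_{\textsc{omt}}$. This is not Definition~\ref{def:tilt} for any measure:
\begin{itemize}
\item your left side is not a function of $\mathbb{E}_{\mathcal{T}_{\mathbf{v}}\mu}[\mathbf{x}]$ and $\mathbb{E}_{\mu}[\mathbf{x}]$ unless $T_{\textsc{omt}}$ is affine;
\item $(T_{\textsc{omt}})_\sharp\mathcal{T}_{\mathbf{v}}\mu\neq\mathcal{T}_{\mathbf{v}}\bigl((T_{\textsc{omt}})_\sharp\mu\bigr)$ in general, so it is not tilt stability of your $\rho$ either.
\end{itemize}
It also cannot replace the conditional statement downstream.

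\paragraph{The missing idea.}
You are missing the Gibbs form of the entropic component plans. Since $p_{ij}(\mathbf{y}|\mathbf{x})\propto\exp\bigl(2\varepsilon_1^{-1}(\langle\mathbf{x},\mathbf{y}\rangle-\tilde\psi_{ij}(\mathbf{y}))\bigr)\mu_{1_j}(\mathbf{y})$, one has $\mathcal{T}_{\mathbf{v}}p_{ij}(\cdot|\mathbf{x})=p_{ij}(\cdot|\mathbf{x}+\boldsymbol\delta)$ with $\boldsymbol\delta=\tfrac{\varepsilon_1}{2}\mathbf{v}$. So every tilt of $\pi_{\textsc{omt}}(\cdot|\mathbf{x})$ is a reweighted mixture of the component conditionals at the shifted point $\mathbf{x}+\boldsymbol\delta$.

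The covariance of that mixture splits into within-component covariance plus the spread of the component means (the $u+v$ decomposition of Appendix~\ref{sec:app_cov}). The paper bounds this by $C_{\pi_{\textsc{omt}}}$ uniformly in $\mathbf{v}$. Lemma~\ref{lem:tilt-stable} then gives tilt stability with $f=\tfrac12\|\cdot\|^2$ and $\alpha=C_{\pi_{\textsc{omt}}}$. No Lipschitz bound and no functional inequality for $\nu_0$ is involved.

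When you write that step, note that the tilted weights are in general not $\tilde\omega_{ij}(\mathbf{x}+\boldsymbol\delta)$. It is cleaner to bound the covariance of an arbitrary mixture of the $p_{ij}(\cdot|\mathbf{x}')$ by $\max_{ij}\|\mathrm{Cov}^{ij}_{\mathbf{y}|\mathbf{x}'}\|_{\mathrm{op}}+\max_{ij,rk}\|\mathbf{m}^{ij}_{\mathbf{y}|\mathbf{x}'}-\mathbf{m}^{rk}_{\mathbf{y}|\mathbf{x}'}\|^2$. That is the quantity that must be finite uniformly in $\mathbf{x}'$.

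\paragraph{Step 2 fails in the generality of the statement.}
This holds regardless of which object you target.
\begin{itemize}
\item A $T_1$ inequality is equivalent to a finite square-exponential moment (Djellout--Guillin--Wu). The $p=q=1$ components of Section~\ref{sec:mix_exp_family} do not have one.
\item Worse, for such $\nu_0$ the tilt $\mathcal{T}_{\mathbf{v}}\nu_0$ is not even defined for large $\|\mathbf{v}\|$, because an exponential tail has a finite moment generating function only below its decay rate. The requirement for all $\mathbf{v}\in\mathbb{R}^d$ is therefore ill-posed.
\item The Poincar\'e/log-Sobolev fallback does not rescue this. Pointwise finiteness of the weighted scores yields no functional-inequality constant, and a log-Sobolev inequality would force sub-Gaussian tails anyway.
\item The bounded-$\|\mathbf{v}\|$ fallback proves only a weaker, local statement.
\end{itemize}
For Gaussian mixtures, taking Proposition~\ref{prop:local_lipsch_OMT} as given, your Steps 1--3 do go through. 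But they control how the \emph{source-averaged} output moves under tilts of the input. They do not give the tilt stability of $\pi_{\textsc{omt}}(\cdot|\mathbf{x})$ that the proposition asserts and Theorem~\ref{theorem:stability_main} relies on.
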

\begin{theorem}[Stability of OMT under perturbation]
\label{theorem:stability_main}
Let $\nu_0 \in M_{K_0}(\mathbb{R}^d)$, $\nu_0^{\prime} \in M_{K^{\prime}_0}(\mathbb{R}^d)$, and $\nu_1 \in M_{K_1}(\mathbb{R}^d)$ be mixture probability densities that admit density regularity conditions in Definition~\ref{def:regularity_conditions}. Then there exist constants $a_0, b_0 \in \mathbb{R}_+$ such that
\begin{align*}
    \mathbb{E}_{\nu_0} \left[\|T^{\nu_0 \rightarrow \nu_1}_{\textsc{omt}}(\mathbf{x}) - T^{\nu^{\prime}_0 \rightarrow \nu_1}_{\textsc{omt}}(\mathbf{x}) \| \right] & \le a_0 \mathcal{W}_2 (\nu_{0}, \nu^{\prime}_{0}) + b_0 \mathcal{W}^{1/2}_2 (\nu_{0}, \nu^{\prime}_{0}) \ .
\end{align*}
\textnormal{(Proof in Appendix~\ref{sec:app_tiltstability})}
\end{theorem}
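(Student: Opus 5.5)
The bound has two terms: a linear term in $\mathcal{W}_2(\nu_0,\nu'_0)$ and a square-root term. I will obtain them from a triangle-inequality decomposition with two pieces. The linear term will come from the Lipschitz continuity of the OMT map (Proposition~\ref{prop:local_lipsch_OMT}). The square-root term will come from the tilt stability of Proposition~\ref{prop:OMT_tilt_cnt}, combined with a Talagrand-type transport--entropy inequality.

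First, I rewrite the map. By Eq.~\eqref{eq:T_omt}, $T^{\nu_0\to\nu_1}_{\textsc{omt}}(\mathbf{x})=\sum_{i,j}\tilde\omega_{ij}(\mathbf{x})T_{ij}(\mathbf{x})$, with $T_{ij}(\mathbf{x})=\mathbb{E}_{p_{ij}(\mathbf{y}|\mathbf{x})}[\mathbf{y}]$ the entropic barycentric map. For Gibbs-form local plans, this conditional law is an exponential tilt of $\mu_{1_j}$ by the vector $\mathbf{v}=2\mathbf{x}/\varepsilon_1$, shifted by the local potential. Let $\pi'$ be an optimal $\mathcal{W}_2$ coupling of $(\nu_0,\nu'_0)$. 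For $(\mathbf{x},\mathbf{x}')\sim\pi'$, I split
\begin{align*}
\|T^{\nu_0\to\nu_1}_{\textsc{omt}}(\mathbf{x})-T^{\nu'_0\to\nu_1}_{\textsc{omt}}(\mathbf{x})\|
\le \|T^{\nu_0\to\nu_1}_{\textsc{omt}}(\mathbf{x})-T^{\nu'_0\to\nu_1}_{\textsc{omt}}(\mathbf{x}')\|
+\|T^{\nu'_0\to\nu_1}_{\textsc{omt}}(\mathbf{x}')-T^{\nu'_0\to\nu_1}_{\textsc{omt}}(\mathbf{x})\|.
\end{align*}
I then integrate against $\pi'$. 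The second term is at most $L'\,\mathbb{E}_{\pi'}\|\mathbf{x}-\mathbf{x}'\|\le L'\mathcal{W}_2(\nu_0,\nu'_0)$ by Proposition~\ref{prop:local_lipsch_OMT} applied to $\nu'_0$ and by Jensen. This contributes to $a_0$.

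The first term compares the two maps evaluated at coupled points. Both are conditional means of $\mathbf{y}$ under the respective OMT couplings, and those couplings are built from the same target components $\mu_{1_j}$. I would control each difference of conditional means with the tilt-stability inequality of Definition~\ref{def:tilt}. Taking $f(\mathbf{a},\mathbf{b})=\|\mathbf{a}-\mathbf{b}\|^2$, Proposition~\ref{prop:OMT_tilt_cnt} gives $\|\mathbb{E}_{\mathcal{T}_{\mathbf{v}}\mu}[\mathbf{y}]-\mathbb{E}_{\mu}[\mathbf{y}]\|^2\le\alpha D_{KL}(\mathcal{T}_{\mathbf{v}}\mu\|\mu)$. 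The relative entropy between two tilts with vectors $\mathbf{v},\mathbf{v}'$ is a Bregman divergence of the log-partition function. Its Hessian is the conditional covariance, which is bounded by the second regularity condition (the operator norm of $\mathbb{E}[\mathbf{y}\mathbf{y}^T]$). Hence $D_{KL}\lesssim\|\mathbf{v}-\mathbf{v}'\|\cdot C$, or $\|\mathbf{v}-\mathbf{v}'\|^2$. Using the cruder linear form and taking square roots yields $\|\Delta T\|\lesssim\|\mathbf{x}-\mathbf{x}'\|^{1/2}$. The change in the gating weights $\tilde\omega_{ij}$ and in the global matrix $\Omega$ must be handled too. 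For the weights, $\nabla\tilde\omega_{ij}$ is controlled by the third regularity condition through the score $\nabla\log\mu_{0_i}$. For $\Omega$, the uniqueness and strict biconvexity in Lemma~\ref{lemma:biconvex} and Theorem~\ref{theorem:uniqGOP} give continuity of the unique minimizer; its entropic Sinkhorn solution is Lipschitz in the cost matrix, which moves by $O(\mathcal{W}_2)$. Jensen again gives $\mathbb{E}\|\mathbf{x}-\mathbf{x}'\|^{1/2}\le\mathcal{W}_2^{1/2}$, which produces $b_0$.

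The main obstacle is this first term. The two maps come from different solutions $(\Omega,P)$ and $(\Omega',P')$, and the local potentials, not just the tilt vectors, change with the source. I would first try to absorb the potential change into the tilt, via the finite-second-moment condition and the standard $\mathcal{W}_2$-stability of entropic potentials. I would then accept the square-root loss from passing KL to a norm, which is exactly what makes the $\mathcal{W}_2^{1/2}$ term appear.
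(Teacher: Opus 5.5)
Your triangle split and your Lipschitz bound on $\|T^{\nu'_0\to\nu_1}_{\textsc{omt}}(\mathbf{x}')-T^{\nu'_0\to\nu_1}_{\textsc{omt}}(\mathbf{x})\|$ are exactly the paper's. Integrating against an optimal $\mathcal{W}_2$ coupling is also the correct choice: the paper integrates against the OMT coupling of $\nu_0,\nu'_0$, whose cost is only $\ge\mathcal{W}_2$.

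The gap is the other term, $\|T^{\nu_0\to\nu_1}_{\textsc{omt}}(\mathbf{x})-T^{\nu'_0\to\nu_1}_{\textsc{omt}}(\mathbf{x}')\|$. Your tilt/Bregman computation compares two tilts of one base measure, i.e.\ one local plan $p_{ij}$ evaluated at two points. In that setting a covariance bound gives $D_{KL}\le\tfrac{C}{2}\|\mathbf{v}-\mathbf{v}'\|^2$, hence a linear bound, so the $\mathcal{W}_2^{1/2}$ term does not arise where you put it.

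More importantly, the two maps are built from unrelated objects:
\begin{itemize}
\item components $\mu_{0_i}$, $i\le K_0$, versus $\mu'_{0_r}$, $r\le K'_0$ (possibly $K'_0\neq K_0$), hence different entropic potentials;
\item different gates $\omega_{ij}\mu_{0_i}/\nu_0$ versus $\omega'_{rk}\mu'_{0_r}/\nu'_0$;
\item weight matrices $\Omega,\Omega'$ solving entropic problems with different marginals and cost matrices of different shapes.
\end{itemize}
Each repair you list presupposes a component matching that $\mathcal{W}_2(\nu_0,\nu'_0)$ does not provide. Stability of entropic potentials controls $(\mu_{0_r},\mu_{1_k})\to(\mu'_{0_r},\mu_{1_k})$ through $\mathcal{W}_2(\mu_{0_r},\mu'_{0_r})$, which a small $\mathcal{W}_2(\nu_0,\nu'_0)$ does not make small. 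Lipschitz dependence of Sinkhorn on the cost matrix needs the same index sets and marginals. The third regularity condition bounds $\nabla_{\mathbf{x}}$ of one family of gates, not the difference between two families.

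This is not a technicality: the inequality itself fails under the stated hypotheses, because the OMT map depends on the mixture decomposition and not only on the measure. Here is a counterexample.
\begin{itemize}
\item Take $d=1$ and $\nu_0=\nu_1=\tfrac12\mathcal{N}(-m,1)+\tfrac12\mathcal{N}(m,1)$ with $K_0=K_1=2$. Let $\nu'_0$ be the same density viewed as a single component ($K'_0=1$).
\item The conditions of Definition~\ref{def:regularity_conditions} hold, since all quantities there are finite pointwise, and $\mathcal{W}_2(\nu_0,\nu'_0)=0$.
\item For $\nu'_0$, the weights $\Omega'=(\tfrac12,\tfrac12)$ are forced and the gates are constant. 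By translation equivariance of entropic OT, $T^{\nu'_0\to\nu_1}_{\textsc{omt}}=\tfrac12(S-m)+\tfrac12(S+m)=S$, the entropic barycentric map from $\nu_0$ to $\mathcal{N}(0,1)$. By Jensen, $\mathbb{E}_{\nu_0}|S|\le 1$.
\item For $\nu_0$ with $m$ large (so $m^2\gg\varepsilon_2$), $\Omega$ is nearly diagonal. Then $T^{\nu_0\to\nu_1}_{\textsc{omt}}(x)\approx\pm m+a(x\mp m)$ on the two modes, where $a\in(0,1]$ is the slope of the Gaussian entropic map.
\end{itemize}
So the left side is of order $m$ while the right side vanishes.

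The paper's proof of this term breaks at the same spot. It uses joint convexity of KL over the mixture and the symmetrized pairwise KL, which yields $2\varepsilon_1^{-1}\langle\mathbf{x}-\mathbf{z},T_{ij}(\mathbf{x})-T_{rk}(\mathbf{z})\rangle$ plus a cross-component remainder. Tilt stability then gives a quadratic inequality $u^2\le bu+c$ with $b,c\propto\|\mathbf{x}-\mathbf{z}\|$, which is where both the linear and the square-root terms come from. The remainder, however, is bounded by a Kantorovich--Rubinstein argument that treats $p_{rk}(\cdot|\mathbf{z})$ and $p_{ij}(\cdot|\mathbf{x})$ as one plan at two points. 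That is false for $(r,k)\neq(i,j)$ and for components of a different mixture.

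A provable version needs an extra hypothesis. For example, fix a matching of components and put a mixture-level discrepancy on the right-hand side, such as $\sum_r\alpha_{0_r}\mathcal{W}_2^2(\mu_{0_r},\mu'_{0_r})$ together with $\|\boldsymbol{\alpha}_0-\boldsymbol{\alpha}'_0\|$. With that in place, your potential-stability and Sinkhorn-perturbation ideas become the natural tools.
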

This result establishes that the OMT map remains robust to small variations in the data. 
We remark that the regularity conditions in Definition \ref{def:regularity_conditions} are satisfied by a broad class of distributions, including finite mixtures of sub-Gaussian, Gaussian, and, more generally, exponential-family distributions. In the subsequent sections, we focus primarily on exponential-family mixtures, as they can act as universal approximators for a wide range of density functions.
\OMTmainFig
\vspace{-.1in}
\subsection{Mixture of exponential families}
\label{sec:mix_exp_family}
\vspace{-.1in}
We introduce a general class of mixture models parameterized as $\nu(\mathbf{x})= \sum_{i}^K \alpha_i \mu_i(\mathbf{x})$, where $\mathbf{x} \in \mathbb{R}^d$. Each component, $\mu_i(\mathbf{x})$ is an asymmetric continuous density constructed through a piecewise exponential formulation defined as
\begin{align}
\label{eq:exp_family}
\mu_{i}(\mathbf{x}) = \begin{cases} a_{i} e^{- \|\mathbf{x} - \mathbf{b}_i\|^{p_i}_{\text{C}^T_i \text{C}_i}}  & \text{if } \mathbf{u}^T_i(\mathbf{x} - \mathbf{b}_i) \geq 0 \\
a_{i} e^{-\| \mathbf{x} - \mathbf{b}_i\|^{q_i}_{\text{D}^T_{i} \text{D}_{i}}}  & \text{otherwise}
\end{cases} \ ,  
\end{align}
where $a_i \in \mathbb{R}$ is a normalization constant; $\mathbf{b}_i \in \mathbb{R}^d$ denotes the location parameter; $\mathbf{u}_i \in \mathbb{R}^d$ is a unit normal vector; and $\text{C}_i, \text{D}_i \in \mathbb{R}^{d \times d}$ are scaling matrices governing the right and left tails along the normal direction, respectively. The exponents $p_i, q_i > 0$ control the shape and decay rate of the tail on each side. Here, $\| \mathbf{x} \|^p_{\text{W}} = \left( \sqrt{\mathbf{x}^T \text{W} \mathbf{x} }\right)^p$.
This formulation provides the flexibility necessary to capture complex data exhibiting skewness or heterogeneous decay rates and recovers several classical distributions as special cases through appropriate choices of the shape parameters:

(i) For $p_i = q_i = 1$, the component reduces to a double-exponential (asymmetric Laplace) form.

(ii) For $p_i = q_i = 2$, the exponent is quadratic and reduces to a Gaussian form under symmetry.

Here, we focus on the cases $p_i, q_i \in \{1, 2\}$, as they uniquely admit closed-form expressions for cross-component maps. By deriving a closed-form distance between component pairs, the computational complexity of OMT can be reduced to ${O}(K_0 K_1)$.
\vspace{-.1in}
\subsubsection{Gaussian Measures}
\label{sec:gaussian_measures}
\vspace{-.1in}
Building upon the formulation in Eq.~\ref{eq:exp_family}, when the shape parameters are $p_{0_i}=q_{0_i}=p_{1_j}=q_{1_j}=2$ and the scale matrices are symmetric, i.e., $\mathbf{C}_{0_i}=\mathbf{D}_{0_i}$ and $\mathbf{C}_{1_j}=\mathbf{D}_{1_j}$, across all components, the mixture reduces to a GMM. Specifically, the density functions for source component $i$ and target component $j$ become $\mu_{0_i}(\mathbf{x}) = \mathcal{N}(\mathbf{b}_{0_i}, (\mathbf{C}^T_{0_i}\mathbf{C}_{0_i})^{-1})$ and $\mu_{1_j}(\mathbf{y}) = \mathcal{N}(\mathbf{b}_{1_j}, (\mathbf{D}^T_{1_j}\mathbf{D}_{1_j})^{-1})$, respectively.
In this regime, the component-wise transport maps $T_{ij}(\mathbf{x})$ (Eq.~\ref{eq:T_omt}) and the coupling distributions $p_{ij}(\mathbf{x}, \mathbf{y})$ (Eq.~\ref{eq:Reg-OMT_2}) admit exact, closed-form solutions \citep{janati2020entropic}. Notably, this formulation relaxes the requirement for strictly positive-definite covariance matrices; the OMT map remains well-defined and stable even in the presence of singular covariances \citep{janati2020entropic}. 
\begin{coroll}
\label{coroll:GMM_EOMT}
Let \( \nu_0 \in G_{K_0}(\mathbb{R}^d) \) and \( \nu_1 \in G_{K_1}(\mathbb{R}^d) \) be two Gaussian mixture models (GMMs) in \( \mathbb{R}^d \) with \( K_0 \) and \( K_1 \) components, respectively. Then, the optimal mixture transport map between \( \nu_0 \) and \( \nu_1 \) is itself a Gaussian mixture model with $K$ components, where $K = K_0K_1$.~
\textnormal{(Proof in Appendix~\ref{sec:app_GOMT})}
\end{coroll}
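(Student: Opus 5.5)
The plan is to use Theorem~\ref{theorem:uniqGOP}, which reduces the OMT problem in Eq.~\ref{eq:Reg-OMT_2} to two decoupled stages. The first stage solves, for every pair $(i,j)$, a component-level entropic OT problem between $\mu_{0_i}$ and $\mu_{1_j}$. The second stage solves a discrete entropic OT problem over $\Omega$ with cost matrix $[D_{\varepsilon_1}(p^*_{ij})]$.

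For fixed $\Omega$, the local plans are decoupled, so each $p_{ij}$ minimizes $D_{\varepsilon_1}(p_{ij})$ subject to marginals $\mu_{0_i}$ and $\mu_{1_j}$. The first step is therefore to confirm that the marginal constraints in Eq.~\ref{eq:Reg-OMT_2} are met by taking each $p_{ij}\in\Pi(\mu_{0_i},\mu_{1_j})$. This holds because $\sum_j \omega_{ij}/\alpha_{0_i}=1$ and $\sum_i \omega_{ij}/\alpha_{1_j}=1$. By strict convexity (Lemma~\ref{lemma:biconvex}), this choice is the unique minimizer.

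The second step invokes the closed-form entropic OT between Gaussians \citep{janati2020entropic}. For $\mu_{0_i}=\mathcal{N}(\mathbf{b}_{0_i},\Sigma_{0_i})$ and $\mu_{1_j}=\mathcal{N}(\mathbf{b}_{1_j},\Sigma_{1_j})$, the optimal entropic coupling $p^*_{ij}$ is a Gaussian on $\mathbb{R}^{2d}$. Its mean is $(\mathbf{b}_{0_i},\mathbf{b}_{1_j})$, and its covariance is $\begin{pmatrix}\Sigma_{0_i} & C_{ij}\\ C_{ij}^T & \Sigma_{1_j}\end{pmatrix}$, where $C_{ij}$ is given explicitly in terms of $\Sigma_{0_i}^{1/2}$, $\Sigma_{1_j}$ and $\varepsilon_1$. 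The associated conditional mean gives an affine component map $T_{ij}(\mathbf{x})=\mathbf{b}_{1_j}+C_{ij}^T\Sigma_{0_i}^{-1}(\mathbf{x}-\mathbf{b}_{0_i})$; a pseudo-inverse handles the singular case. The cost $D_{\varepsilon_1}(p^*_{ij})$ is a finite scalar. Given these costs, $\Omega^*$ is the unique Sinkhorn solution of the discrete problem on the $K_0\times K_1$ cost matrix, since $\varepsilon_2>0$.

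The conclusion then follows by assembly. We have $\pi_{\textsc{omt}}=\sum_{i,j}\omega^*_{ij}\,p^*_{ij}$, a convex combination of $K=K_0K_1$ Gaussians on $\mathbb{R}^{2d}$ with weights summing to one, which is a GMM with $K$ components. Likewise, $T_{\textsc{omt}}(\mathbf{x})=\sum_{i,j}\tilde\omega_{ij}(\mathbf{x})T_{ij}(\mathbf{x})$ from Eq.~\ref{eq:T_omt} is a mixture of $K$ affine Gaussian maps gated by the GMM responsibilities $\omega_{ij}\mu_{0_i}(\mathbf{x})/\nu_0(\mathbf{x})$.

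The main obstacle is the first step, namely justifying that the coupled mixture-marginal constraints force each $p_{ij}$ into $\Pi(\mu_{0_i},\mu_{1_j})$ rather than only satisfying aggregated marginals. My first approach is to show that any feasible $P$ can be replaced by per-pair Gaussian couplings without increasing the objective. The argument is that the componentwise entropic costs with the correct marginals lower-bound the terms, and the uniqueness from Theorem~\ref{theorem:uniqGOP} then identifies this replacement as the solution. Degenerate covariances are treated as limits, following \citep{janati2020entropic}.
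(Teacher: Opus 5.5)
Your overall architecture (Theorem~\ref{theorem:uniqGOP} to get per-pair entropic couplings independent of $\Omega$, the Gaussian closed form of \citep{janati2020entropic} to see that each $p^*_{ij}$ is Gaussian on $\mathbb{R}^{2d}$, then $\pi_{\textsc{omt}}=\sum_{i,j}\omega^*_{ij}p^*_{ij}$) is exactly the paper's proof. Your Gaussian details (block covariance, affine conditional mean, Sinkhorn for $\Omega$) are fine. The gap is in Step~1 and in your plan for the ``main obstacle.'' You have spotted a real discrepancy, but you try to resolve it in a way that cannot work. First, showing that couplings $p_{ij}\in\Pi(\mu_{0_i},\mu_{1_j})$ are feasible for Eq.~\eqref{eq:Reg-OMT_2} and then invoking strict convexity (Lemma~\ref{lemma:biconvex}) does not make them the minimizer; strict convexity says the minimizer is unique, not where it is. Second, the replacement argument runs the wrong way. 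The aggregated constraints of Eq.~\eqref{eq:Reg-OMT_2} define a strictly larger feasible set than $\prod_{i,j}\Pi(\mu_{0_i},\mu_{1_j})$, and for fixed $\Omega$ they do not decouple the $p_{ij}$. So the per-pair optimum is an upper bound on the relaxed optimum, not a lower bound.

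In fact, under the aggregated reading the per-pair Gaussian plans are generally not optimal, so no argument of this kind can succeed. Take $K_0=1$, $K_1=2$, $\boldsymbol{\alpha}_1=(\tfrac12,\tfrac12)$. Then $\Omega=(\tfrac12,\tfrac12)$ is forced and the $\mathbf{y}$-marginal of $p_{1j}$ must be $\mu_{1_j}$. The $\mathbf{x}$-constraint, however, only requires the two $\mathbf{x}$-marginals to average to $\mu_{0_1}$. Split $\mu_{0_1}$ according to which target component the optimal coupling of $(\mu_{0_1},\nu_1)$ sends each point to, and use entropic plans on the two pieces. This gives objective values tending to $\mathcal{W}_2^2(\nu_0,\nu_1)$ as $\varepsilon_1\to0$. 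The per-pair choice instead tends to $\tfrac12\sum_j\mathcal{W}_2^2(\mu_{0_1},\mu_{1_j})$, which is strictly larger whenever $\mu_{1_1}\neq\mu_{1_2}$: the average of the two Brenier couplings is not induced by a map, whereas the optimal coupling of $\mu_{0_1}$ and $\nu_1$ is unique and is.

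The paper does not derive per-pair marginals from Eq.~\eqref{eq:Reg-OMT_2}; it imposes them in the proof of Theorem~\ref{theorem:uniqGOP}. There the Lagrangian~\eqref{eq:min_p_dual} carries separate potentials $\varphi_{ij},\psi_{ij}$ enforcing $\int_{\mathcal{Y}}dp_{ij}=d\mu_{0_i}$ and $\int_{\mathcal{X}}dp_{ij}=d\mu_{1_j}$, which gives the Schr\"odinger form~\eqref{eq:p_opt} pair by pair. Take that per-pair constraint as part of the problem (equivalently, cite the conclusion of Theorem~\ref{theorem:uniqGOP} directly) and drop the strict-convexity claim and the replacement argument. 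The rest of your proposal then coincides with the paper's proof.
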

Accordingly, under this framework, the OMT coupling remains strictly within the GMM family, as expressed in Eq.~\ref{eq:omt_finalform} (Figure~\ref{fig:OMTmainFig}a).
\vspace{-.1in}
\subsubsection{Factorized asymmetric exponential measures}
\label{sec:exp_subsection}
\vspace{-.1in}
For the cases where $p_i, q_i \in \{1, 2\}$, we now focus on scenarios where the features are uncorrelated, i.e., $\text{C}_i$ and $\text{D}_i$ are diagonal. Under this structural independence, the multivariate component density can be factorized into a product of independent 1D asymmetric distributions as
\begin{align}
\label{eq:exp_dist_def}
    \mu_i(\mathbf{x}) = a_i \prod_{k=1}^d \exp \left( - \begin{cases} (c_{k_i} (x_k - b_{k_i}))^{p_i} & \text{if } x_k \geq b_{k_i} \\ (d_{k_i} (b_{k_i} - x_k))^{q_i} & \text{otherwise} \end{cases} \right) \ .
\end{align}
This factorization draws a direct theoretical connection to Sliced OT \citep{bonneel2015sliced}. Sliced OT approximates the Wasserstein distance by projecting distributions onto random 1D lines, solving the transport by integrating over all directions. Consequently, $\mathcal{W}_2^2$, decomposes additively across dimensions \citep{villani2021topics}. 
Using this property, the local transport map $T_{ij}(\mathbf{x})$ between source component $i$ and target component $j$ can be computed element-wise via the cumulative distribution function of the source, $F_{0_i}$, and the quantile function of the target, $F_{1_j}$ \citep{villani2021topics}:
\begin{align}
\label{eq:local_1d_map}
    T_{ij}(\mathbf{x}) = \Big[ F^{-1}_{1_{j}}\big(F_{0_{i}}(x_1)\big), \dots, F^{-1}_{1_{j}}\big(F_{0_{i}}(x_d)\big) \Big]^T 
\end{align}
The global transport map for the full mixture is then obtained via a barycentric projection (weighted sum) of these element-wise local maps (Figures~\ref{fig:OMTmainFig}b,c). Complete derivations and closed-form expressions are provided in Appendix~\ref{sec:mix_exp_derivations}.
\setcounter{theorem}{1}
\setcounter{prop}{1}

%% file: main/3-relatedwork.tex
To address the limitations of Sinkhorn-based methods, recent work has turned to deep learning, giving rise to Neural Optimal Transport (Neural OT)~\citep{makkuva2020optimal, korotin2022neural}, which uses neural networks to learn a continuous mapping between distributions, while enforcing theoretical constraints~\citep{genevay2018learning, buzun2024expectile}.
A related line of work directly parameterizes the transport map with neural networks, learning transformations that push samples from a source distribution to a target distribution. This approach is widely used in domain adaptation and generative modeling, where models such as normalizing flows learn invertible maps from simple to complex distributions. A prominent example of neural OT connects diffusion models with the theory of Schrödinger Bridges~\citep{de2021diffusion, shi2023diffusion, gushchin2024entropic, gushchin2024light}, a classic stochastic transport problem. This establishes a learning framework for diffusion models equivalent to solving a Schrödinger Bridge problem, which can be viewed as a form of neural EOT~\citep{gushchin2024entropic}. Broadly, stochastic and neural OT methods require extensive training on large datasets and are often unstable, especially under non-Gaussian noise (stability analysis, Figure~\ref{fig:Noise}).

Alongside neural methods, several non-neural OT solvers have demonstrated competitive performance. P{\scriptsize{ROG}}OT~\citep{kassraie2024progressive} constructs the transport map sequentially, backed by theoretical stability bounds. While this approach can be parallelized using libraries like OTT-Jax~\citep{cuturi2022optimal}, it remains memory-intensive and faces scalability challenges on massive datasets. Alternatively, LOT methods~\citep{scetbon2021low, scetbon2022low, halmos2024low, halmos2025hierarchical} are proposed to enhance efficiency by factorizing the coupling matrix into low-rank components. Within this category, HiRef \citep{halmos2025hierarchical} utilizes a hierarchical refinement strategy across multi-scale partitions to achieve state-of-the-art results among LOT solvers. Nonetheless, it is restricted to learning bijective Monge maps between datasets of equal sample size; it cannot compute transport couplings involving mass splitting or merging, which is a critical requirement for many real-world applications. Moreover, HiRef cannot guarantee a unique solution when the cost matrix is not \textit{r-Monge separable}~\citep{halmos2025hierarchical}.

GMM-based solvers (GMM-OT) represent another class of non-neural methods. These approaches simplify the transport problem by assuming that data distributions belong to a GMM family. This body of work, often formulated as the aggregated Wasserstein distance \citep{chen2019aggregated} or $G\mathcal{W}_2$ \citep{delon2020wasserstein, yachimura2024scegot, fernandes2024optimal}, approximates the global distance by considering transport between individual Gaussian components. Since optimal transport between Gaussians admits closed-form solutions, the computational complexity scales primarily with the number of mixture components.
One problem in $G\mathcal{W}_2$ lies in optimizing component weights, which reduces to a discrete OT problem that often suffers from non-uniqueness and numerical instability, particularly when component covariances are singular or ill-conditioned (see Appendix \ref{sec:OMTvsGMMOT}). The solver proposed in \citep{yachimura2024scegot} addresses this issue using entropic regularization. However, while transport between individual Gaussian components enjoys well-understood stability properties, the stability of the resulting coupling maps under the $G\mathcal{W}_2$ framework remains largely unexplored.
In this work, we extend mixture-based OT to an entropic mixture transport setting by adopting exponential-family distributions for each mixture component. We demonstrate that our formulation is strictly biconvex, ensuring a unique solution for both the transport plan over mixing weights and the component-level distributions. Furthermore, we address a key theoretical gap in existing mixture-based frameworks by showing that the resulting transport map is stable and bounded under perturbations of the marginal distributions.

%% file: main/5-experiment.tex
{\bf Impact of mixture parametrization.} 
We evaluate the proposed method on synthetic 2D and 3D tasks with diverse target distributions to demonstrate its capability and flexibility. Figures \ref{fig:OMTmainFig}, \ref{fig:ExpVsGauss}, and \ref{fig:PathShape} (Appendix~\ref{sec:Appsynth_data}) show that OMT successfully recovers the target shapes across all evaluated scenarios.
Specifically, we evaluate the representational efficiency of OMT under different power terms. Given sufficient components, the framework consistently transports samples to accurately match the target measure. As shown in Figures \ref{fig:ExpVsGauss} and \ref{fig:PathShape}, while the underlying geometry can influence efficiency, e.g., exponential mixtures may require fewer components for structures with sharp boundaries and Gaussians for smoother shapes, both families act as robust universal approximators.

{\bf Stability under noise.} 
To evaluate the robustness of OMT, we conduct an ablation study where noise is injected into the source distribution during training, while evaluation is performed on clean data. We consider two types of perturbations: additive Gaussian noise with variance $\sigma^2$ and dropout noise with probability $p$, capturing both Gaussian and non-Gaussian settings.
The stability of the computed transport maps is assessed on the W2-Benchmark task~\citep{korotin2021neural}. We compare OMT against representative state-of-the-art approaches across different categories: amortized W2-OT~\citep{amos2022amortizing}, ExNOT~\citep{buzun2024enot}, ENOT~\citep{gushchin2024entropic}, P{\scriptsize{ROG}}OT~\citep{kassraie2024progressive}, and GMM-OT~\citep{delon2020wasserstein}. We exclude LOT solvers (HiRef) as they are tailored to matching problems rather than learning transport maps that can be used for unseen samples.
Figures~\ref{fig:Noise} and~\ref{fig:AppBenchMark} (Appendix) summarize the results in terms of relative mean squared error degradation under noise, compared to the noise-free setting, across dimensions $d \in \{2, 16, 64, 128, 256\}$. We report results for OMT with $p,q \in \{1,2\}$. OMT exhibits stronger stability than competing methods, particularly under non-Gaussian perturbations. Notably, while methods such as P{\scriptsize{ROG}}OT operate at the sample level, OMT, similar to neural OT approaches, estimates a continuous transport map at the distribution level, which contributes to improved robustness and computational efficiency (Figure~\ref{fig:RunTime}). Additional experimental results are detailed in Appendix~\ref{sec:AppBenchmark}. Moreover, we provide the stability distinctions between GMM-OT and OMT ($p=q=2$) in Appendix~\ref{sec:OMTvsGMMOT}.

For the remaining experiments, we use the symmetric quadratic OMT formulation, which typically requires fewer mixture components, especially for PCA- or nonlinearly projected, normalized embeddings.

\Noize

\TabsciPlex

{\bf Single-cell Data.} OT has emerged as a powerful tool in computational biology, with applications such as aligning cell populations across conditions and inferring their trajectories over time~\citep{tong2020trajectorynet, bunne2023learning, bunne2024optimal}. Here, we focus on two single-cell modalities, (i) single-cell RNA sequencing (scRNA-seq) and (ii) single-cell transcriptomic profiling with multiplexed error-robust fluorescence \textit{in situ} hybridization (MERFISH). scRNA-seq generates high-dimensional molecular profiles by measuring the expression of thousands of genes at single-cell resolution, whereas MERFISH provides spatially resolved transcriptomic profiles, capturing the organization of cells within tissue.
We consider three scRNA-seq datasets: one human dataset, sci-Plex~\citep{srivatsan2020massively}, which is used for benchmarking~\citep{cuturi2023monge,janati2020entropic}, and two recent 10x Genomics mouse brain datasets, one collected during development~\citep{gao2024continuous} and the other during aging~\citep{jin2025brain}. 
For the sci-Plex dataset, following prior work, we focus on a subset comprising three cell lines (A549, K562, and MCF7) exposed to five cancer treatments for 24 hours. Following the preprocessing steps recommended in \cite{cuturi2023monge}, the final dataset contains $77,920$ cells and $34,636$ genes~(see Appendix~\ref{sec:AppscRNAeq}) . Similarly, for the transport analysis, we perform dimensionality reduction using PCA, retaining the same number of PCs as in~\cite{kassraie2024progressive}.
Table~\ref{tab:sciPlex} and Figure~\ref{fig:AppsciplexPCA} summarize the performance of OMT against comparable non-neural solvers, demonstrating that OMT consistently outperforms these baselines across all treatment conditions.
%
\scRNAseq
\TabMERFISH
We note that, here, the data subset for each task is relatively small ($ \ll 10^5$). While this scale is computationally feasible for sample-based approaches like P{\scriptsize{ROG}}OT, it does not represent the large datasets in modern single-cell studies. 
For further validation, we extend our analysis by applying OMT to large-scale mouse brain scRNA-seq datasets spanning the full lifespan from development to aging.

For brain development, we use data from the visual cortex spanning a wide period from embryonic days to postnatal days~(E11.5-P28)~\citep{gao2024continuous}. 
For aging, we consider data from ~\cite{jin2025brain} collected from $108$ mice, spanning six brain regions at two timepoints: adult (P53–69) and aged (P540–553).
Our analysis focuses on the cellular dynamics of the oligodendrocyte lineage, including oligodendrocyte precursor cells (OPCs) and mature oligodendrocytes (Oligos). These glial cells, which are responsible for myelinating axons to facilitate neural communication, exhibit significant heterogeneity in their lifespan and function, making them a suitable candidate for studying time-dependent cellular transitions~\citep{marques2016oligodendrocyte, jin2025brain}. 
After preprocessing (Appendix~\ref{sec:AppscRNAeq}), the data includes $32,998$ cells and $9,900$ highly variable genes (HVGs) from the developmental data, alongside $253,468$ cells and $9,359$ HVGs from the aging dataset. 
We utilized a VAE model to learn a compressed representation of the cells. The OMT model was then trained on these low-dimensional embeddings~($d_{z}=10$). 
OMT is applied across 11 consecutive time pairs between E11.5 and P28 for the developmental data, and between adult and aged time points for aging data. 
\MERFISH
Figure~\ref{fig:scRNAseq} summarizes the analysis of the mouse datasets. The UMAP plots show that the cell population transported by the model, whether forward or backward in time, closely mirrors the empirical cell distribution at the target timepoints. This demonstrates the model's ability to learn the global distribution across cell subclasses.
The right panels of the figure illustrate the developmental and aging trajectories revealed by our method. The transport map reveals the known developmental pathway, beginning with neuroepithelial cells (NECs) that mature into radial glia (RG). 
See Appendix~\ref{sec:AppscMouseFigs} for additional details and evaluations.

We next extend our study to another data modality, with much larger single-cell samples, using MERFISH data from the Vizgen MERFISH Mouse Brain Receptor Map dataset. We select the same two slices studied in \cite{halmos2025hierarchical}, containing $85,958$ and $84,172$ cells, respectively. The alignment is formulated as a transport problem using only the spatial coordinates of the cells. Using the computed transport map, we impute gene expression in aligned cells based on the source slice data~(Figures~\ref{fig:MERFISH}, \ref{fig:AppMERFISHGMMs}, \ref{fig:AppMERFISHgenes}). Table~\ref{tab:MERFISH} benchmarks OMT against low-rank OTs, and GMM-OT. Here, we report performance across $5$ marker genes based on cosine similarity and transport cost. The former measures the agreement between imputed gene expression at the transported locations and the ground-truth expression of spatially adjacent cells in the target slice. OMT consistently outperforms all baselines in both metrics. See Appendices~\ref{sec:scTraining} and \ref{sec:AppscMERFISHFigs} for additional details and assessments.

\ImageNet
{\bf Image Datasets.} To evaluate the scalability of OMT in high-dimensional settings, we perform an alignment task on ImageNet \citep{deng2009imagenet} following the setup of~\cite{halmos2025hierarchical} (Table~\ref{tab:ImageNet}). We further assess OMT on unpaired image-to-image translation tasks using MNIST \citep{lecun1998mnist} and CIFAR-10 \citep{krizhevskycifar} (see Figures~\ref{fig:ImageMNISTCIFAR},~\ref{fig:AppImagTran}, and Table~\ref{tab:fid} in the Appendix). Additional experimental results, qualitative visualizations, and implementation details are provided in Appendix~\ref{sec:AppImageTasks}. These results demonstrate the applicability and effectiveness of OMT in high-dimensional image tasks.
\ImageMNISTCIFAR

%% file: main/6-conclusion.tex
This work introduced OMT, a framework that leverages entropic mixture transport to enhance discrete sample-based transportation through mixture-level transport. OMT achieves high computational efficiency through a strictly biconvex formulation and by exploiting the closed-form structure of exponential-family distributions in solving the mixture transport problem. We demonstrated, both theoretically and empirically, that OMT transport maps remain stable and bounded under perturbations of the underlying distributions. Across diverse data modalities, OMT consistently outperformed established non-neural OT solvers.
Although OMT is currently less suited for high-resolution image generation, its strong stability and computational efficiency make it a promising framework for high-dimensional alignment and biological trajectory inference.
Several directions remain for future work, including extending OMT to unbalanced OT settings for applications with unequal component masses and generalizing the framework from static transport plans to dynamic transport maps for modeling continuous-time trajectories.

%% file: appendix/1-proofs.tex
\begin{lemma}
For any $\varepsilon_1, \varepsilon_2 > 0$, $\mathcal{L}_{\varepsilon_1, \varepsilon_2}(\Omega, P)$ is strictly biconvex.
\end{lemma}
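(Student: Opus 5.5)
We prove strict biconvexity by fixing one block of variables at a time and showing that $\mathcal{L}_{\varepsilon_1,\varepsilon_2}$ is strictly convex in the other block over its feasible set. First we check that the feasible sets are convex. For fixed $P$, the admissible $\Omega$ lie in the transport polytope $\{\Omega\ge 0:\mathbf{1}^T\Omega=\boldsymbol{\alpha}_0,\ \Omega^T\mathbf{1}=\boldsymbol{\alpha}_1\}$ intersected with the component-marginal constraints in Eq.~\ref{eq:Reg-OMT_2}. Those constraints are linear in $\Omega$ once $P$ is fixed, so the set is convex. For fixed $\Omega$, the admissible $P=[p_{ij}]$ lie in $\mathcal{P}^K(\mathcal{X}\times\mathcal{Y})$ subject to marginal constraints that are linear in the measures $p_{ij}$, so this set is also convex (it is a generalized coupling polytope).

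\textbf{Convexity in $\Omega$ for fixed $P$.} Each $D_{\varepsilon_1}(p_{ij})$ is then a fixed scalar $c_{ij}$, so
\begin{equation*}
\mathcal{L}(\Omega,P)=\sum_{i,j}\omega_{ij}c_{ij}+\varepsilon_2\sum_{i,j}\omega_{ij}\log\omega_{ij}.
\end{equation*}
This is a linear term plus $\varepsilon_2$ times the negative Shannon entropy. The Hessian is $\mathrm{diag}(\varepsilon_2/\omega_{ij})\succ 0$ on the relative interior. Strict convexity extends to the closed simplex because $t\log t$ is strictly convex on $[0,\infty)$. So for $\varepsilon_2>0$ the function $\Omega\mapsto\mathcal{L}(\Omega,P)$ is strictly convex.

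\textbf{Convexity in $P$ for fixed $\Omega$.} The objective becomes $\sum_{ij}\omega_{ij}\big(\int\|\mathbf{x}-\mathbf{y}\|^2dp_{ij}-\varepsilon_1H(p_{ij})\big)$. The transport-cost term is linear in each $p_{ij}$. Using $-H(p)=\int p\log p$ (relative to Lebesgue measure, or to $\nu_0\otimes\nu_1$), the negative differential entropy is strictly convex on densities, since $t\mapsto t\log t$ is strictly convex pointwise. Equality in Jensen's inequality for a convex combination of $p$ and $p'$ therefore forces $p=p'$ a.e. A nonnegative weighted sum of strictly convex functions of distinct blocks is strictly convex in the whole block $P$ provided every weight is positive. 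The main obstacle is exactly this point: when some $\omega_{ij}=0$, the corresponding $p_{ij}$ does not appear in the objective, and strictness in that coordinate fails. I would handle this by restricting $P$ to the support of $\Omega$ (components with $\omega_{ij}=0$ carry no mass in $\pi_{\textsc{omt}}$ and are immaterial, so we identify them). Alternatively, one can note that the entropic term in the $\Omega$-step forces the minimizer to be interior, so $\omega_{ij}>0$ at all relevant points. We also need finiteness of $H(p_{ij})$, which follows from the finite second moments of the component marginals, since entropy is bounded above by the Gaussian entropy with matching covariance. This keeps $D_{\varepsilon_1}$ well defined and proper.

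Combining the two steps, $\mathcal{L}_{\varepsilon_1,\varepsilon_2}$ is strictly convex in each block with the other fixed, on convex feasible sets. This is the definition of strict biconvexity in \citep{gorski2007biconvex}.
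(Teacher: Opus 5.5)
Your argument is correct, but it takes a different route from the paper's proof.

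\textbf{The paper's route.} The paper uses the four-point characterization of biconvexity (Theorem~3.1 of \cite{gorski2007biconvex}). It rewrites the two entropies as $\varepsilon_2 D_{KL}(\Omega\|\boldsymbol{\alpha}_0\otimes\boldsymbol{\alpha}_1)$ and $\varepsilon_1 D_{KL}(p_{ij}\|\mu_{0_i}\otimes\mu_{1_j})$, then expands $\mathcal{L}_{\varepsilon_1,\varepsilon_2}(\Omega_\lambda,P_\tau)$. It observes that the bilinear cost terms coincide exactly with the corresponding four-corner combination. Strictness then comes from strict convexity of KL, applied to the mixed component plans and to the mixed weight matrix.

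\textbf{Your route.} You verify the definition block by block. The ingredients are the same: the cost is bilinear in $(\Omega,P)$, the entropic terms are strictly convex, and the choice of reference measure only changes terms linear in each block. The paper's computation mainly buys a single displayed inequality in the form GOP works with.

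\textbf{What your version adds.} It handles two points the paper's proof does not address. First, you check that the coupled constraints of Eq.~\eqref{eq:Reg-OMT_2} have convex sections. Second, you isolate the degeneracy at zero weights, and that degeneracy is real. If $\omega_{ij}=0$, then $p_{ij}$ disappears from both the objective and the constraints, so $P\mapsto\mathcal{L}_{\varepsilon_1,\varepsilon_2}(\Omega,P)$ is not strictly convex. In the paper's proof, the strict KL inequality for the component plans is multiplied by $(\Omega_\lambda)_{ij}$, so it only gives strictness where that weight is positive. The paper imposes positive weights only later, in the proof of Theorem~\ref{theorem:uniqGOP}. Your restriction (positive weights, or $P$ taken modulo null components) is therefore the accurate form of the statement.

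\textbf{One minor correction.} Finite second moments bound $H(p_{ij})$ only from above, so $D_{\varepsilon_1}(p_{ij})$ takes values in $(-\infty,+\infty]$ rather than being finite. Strict convexity should be read on the effective domain. Under the paper's normalization that domain is nonempty: the product coupling $\mu_{0_i}\otimes\mu_{1_j}$ is feasible and has zero KL relative to $\mu_{0_i}\otimes\mu_{1_j}$.
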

\begin{proof}
A function \( f: \mathcal{X} \times \mathcal{Y} \rightarrow \mathbb{R} \) is called \textit{biconvex} if, for fixed \( x \in \mathcal{X} \), the function \( f(x, y) \) is convex in \( y \), and for fixed \( y \in \mathcal{Y} \), it is convex in \( x \). According to \textbf{Theorem 3.1}~\cite{gorski2007biconvex}, \( f(x, y) \) is biconvex if and only if for all \( (x_1, y_1), (x_1, y_2), (x_2, y_1), (x_2, y_2) \in \mathcal{X} \times \mathcal{Y} \) and all \( \lambda, \tau \in [0,1] \), the following inequality holds:
\begin{equation*}
f(x_{\lambda}, y_{\tau}) \leq \lambda \tau f(x_1, y_1) + (1-\lambda) \tau f(x_2, y_1) + \lambda (1 - \tau) f(x_1, y_2) + (1-\lambda)(1 - \tau) f(x_2, y_2),
\end{equation*}
where \( (x_{\lambda}, y_{\tau}) := \left( \lambda x_1 + (1 - \lambda) x_2,\; \tau y_1 + (1 - \tau) y_2 \right) \). 

Then, for given $\omega_{ij_{\lambda}} = \lambda \tilde{\omega}_{ij} + (1- \lambda) \dbtilde{\omega}_{ij}$ and $dp_{ij_{\tau}} = \tau \tilde{dp}_{ij} + (1- \tau) \dbtilde{dp}_{ij}$, the following inequality must hold.
\begin{equation}
\label{eq:strick_biconvex}
\mathcal{L}_{\varepsilon_1, \varepsilon_2}(\Omega_{\lambda}, P_{\tau}) < \lambda \tau \mathcal{L}_{\varepsilon_1, \varepsilon_2}(\tilde{\Omega}, \tilde{P}) + (1-\lambda)\tau \mathcal{L}_{\varepsilon_1, \varepsilon_2}(\dbtilde{\Omega}, \tilde{P}) + \lambda (1- \tau)\mathcal{L}_{\varepsilon_1, \varepsilon_2}(\tilde{\Omega}, \dbtilde{P})  + (1-\lambda)(1- \tau)\mathcal{L}_{\varepsilon_1, \varepsilon_2}(\dbtilde{\omega}, \dbtilde{P})
\end{equation}
\begin{eqnarray}
    \mathcal{L}_{\varepsilon_1, \varepsilon_2}(\Omega_{\lambda}, P_{\tau}) &=& \displaystyle{\sum^K_{i,j} \left(\lambda \tilde{\omega}_{ij} + (1- \lambda) \dbtilde{\omega}_{ij} \right) \left[\int_{\mathcal{X} \times \mathcal{Y}} \| \mathbf{x}-\mathbf{y} \|^2_2 \ \left(\tau \tilde{dp}_{ij}(\mathbf{x},\mathbf{y}) + (1- \tau) \dbtilde{dp}_{ij}(\mathbf{x},\mathbf{y}) \right) \right]} + \nonumber \\
    && \varepsilon_2 D_{KL}(\Omega_{\lambda} \| \boldsymbol{\alpha}_0 \otimes \boldsymbol{\alpha}_1) + \displaystyle{\sum^K_{i,j} \left(\lambda \tilde{\omega}_{ij} + (1- \lambda) \dbtilde{\omega}_{ij} \right) \varepsilon_1 D_{KL}(p_{{ij}_{\tau}} \| \mu_{0_i} \otimes \mu_{1_j}) } \nonumber \\
    &=& \displaystyle{ \underbrace{\lambda \tau \sum_{i,j} \tilde{\omega}_{ij} \int \| \mathbf{x} - \mathbf{y}\|^2_2 \tilde{dp}_{ij}(\mathbf{x}, \mathbf{y})}_{A}  + \underbrace{(1-\lambda)\tau \sum_{i,j} \dbtilde{\omega}_{ij} \int \| \mathbf{x} - \mathbf{y}\|^2_2 \tilde{dp}_{ij}(\mathbf{x}, \mathbf{y})}_{B}} + \nonumber \\
    && \displaystyle{\underbrace {\lambda (1-\tau) \sum_{i,j} \tilde{\omega}_{ij} \int \| \mathbf{x} - \mathbf{y}\|^2_2 \dbtilde{dp}_{ij}(\mathbf{x}, \mathbf{y})}_{C} + \underbrace{(1-\lambda)(1-\tau) \sum_{i,j} \dbtilde{\omega}_{ij} \int \| \mathbf{x} - \mathbf{y}\|^2_2 \dbtilde{dp}_{ij}(\mathbf{x}, \mathbf{y})}_{D}} + \nonumber \\
    && \displaystyle{\varepsilon_1  \sum_{i,j} \left(\lambda \tilde{\omega}_{ij} + (1- \lambda) \dbtilde{\omega}_{ij} \right) \underbrace{D_{KL}(p_{{ij}_{\tau}} \| \mu_{0_i} \otimes \mu_{1_j})}_{E} + \varepsilon_2 \underbrace{D_{KL}(\Omega_{\lambda} \| \boldsymbol{\alpha}_0 \otimes \boldsymbol{\alpha}_1)}_{F}}.
\end{eqnarray}
\begin{eqnarray}
\lambda \tau \mathcal{L}_{\varepsilon_1, \varepsilon_2}(\tilde{\Omega}, \tilde{P}) &+& (1-\lambda)\tau \mathcal{L}_{\varepsilon_1, \varepsilon_2}(\dbtilde{\Omega}, \tilde{P}) + \lambda (1- \tau)\mathcal{L}_{\varepsilon_1, \varepsilon_2}(\tilde{\Omega}, \dbtilde{P})  + (1-\lambda)(1- \tau)\mathcal{L}_{\varepsilon_1, \varepsilon_2}(\dbtilde{\omega}, \dbtilde{P}) = \nonumber \\
&& \displaystyle{\underbrace{\lambda \tau \sum_{i,j} \tilde{\omega}_{ij} \int \| \mathbf{x} - \mathbf{y}\|^2_2 \tilde{dp}_{ij}(\mathbf{x}, \mathbf{y})}_{A}  + \varepsilon_1 \sum_{i,j} \lambda \tilde{\omega}_{ij} \left(\tau D_{KL}(\tilde{p}_{{ij}_{\tau}} \| \mu_{0_i} \otimes \mu_{1_j}) \right)} + \nonumber \\
&& \varepsilon_2 \tau \lambda D_{KL}(\tilde{\Omega} \| \boldsymbol{\alpha}_0 \otimes \boldsymbol{\alpha}_1) + \displaystyle{\underbrace{(1-\lambda)\tau \sum_{i,j} \dbtilde{\omega}_{ij} \int \| \mathbf{x} - \mathbf{y}\|^2_2 \tilde{dp}_{ij}(\mathbf{x}, \mathbf{y})}_{B}} + \nonumber \\
&& \varepsilon_1 \sum_{i,j} (1-\lambda) \dbtilde{\omega}_{ij} \left(\tau D_{KL}(\tilde{p}_{{ij}_{\tau}} \| \mu_{0_i} \otimes \mu_{1_j}) \right) + \varepsilon_2 \tau (1-\lambda) D_{KL}(\dbtilde{\Omega} \| \boldsymbol{\alpha}_0 \otimes \boldsymbol{\alpha}_1) + \nonumber \\
&& \displaystyle{\underbrace{\lambda (1-\tau) \sum_{i,j} \tilde{\omega}_{ij} \int \| \mathbf{x} - \mathbf{y}\|^2_2 \dbtilde{dp}_{ij}(\mathbf{x}, \mathbf{y})}_{C}} + \nonumber 
\end{eqnarray}
\begin{eqnarray}
&& \displaystyle{\varepsilon_1 \sum_{i,j} \lambda \tilde{\omega}_{ij} \left((1-\tau) D_{KL}(\dbtilde{p}_{{ij}_{\tau}} \| \mu_{0_i} \otimes \mu_{1_j}) \right)} + \varepsilon_2 (1-\tau) \lambda D_{KL}(\tilde{\Omega} \| \boldsymbol{\alpha}_0 \otimes \boldsymbol{\alpha}_1) + \nonumber \\
&& \displaystyle{\underbrace{(1-\lambda)(1-\tau) \sum_{i,j} \dbtilde{\omega}_{ij} \int \| \mathbf{x} - \mathbf{y}\|^2_2 \dbtilde{dp}_{ij}(\mathbf{x}, \mathbf{y}) }_{D}+ \varepsilon_1 \sum_{i,j} (1- \lambda) \dbtilde{\omega}_{ij} \left( (1-\tau) D_{KL}(\dbtilde{p}_{ij} \| \mu_{0_i} \otimes \mu_{1_j}) \right)} +  \nonumber \\
&& \displaystyle{\varepsilon_2 (1-\tau)(1- \lambda) D_{KL}(\dbtilde{\Omega} \| \boldsymbol{\alpha}_0 \otimes \boldsymbol{\alpha}_1)}, \nonumber \\ [.25in]
&=& A+B+C+D + \varepsilon_2 \left(\underbrace{\lambda  D_{KL}(\tilde{\Omega} \| \boldsymbol{\alpha}_0 \otimes \boldsymbol{\alpha}_1) + (1-\lambda) D_{KL}(\dbtilde{\Omega} \| \boldsymbol{\alpha}_0 \otimes \boldsymbol{\alpha}_1)}_{G} \right) + \nonumber \\
&& \varepsilon_1 \displaystyle{\sum_{i,j} \left(\lambda \tilde{\omega}_{ij} + (1-\lambda )\dbtilde{\omega}_{ij}\right) \left( \underbrace{\tau D_{KL}(\tilde{p}_{ij} \| \mu_{0_i} \otimes \mu_{1_j}) + (1-\tau) D_{KL}(\dbtilde{p}_{ij} \| \mu_{0_i} \otimes \mu_{1_j})}_{H} \right)}.
\end{eqnarray}
For any fixed $q$, $D_{KL}(p || q)$ is strictly convex in $p$. Consequently, we have $E < H$ and $F < G$, which together imply that inequality \ref{eq:strick_biconvex} holds.
\end{proof}
\begin{theorem}
\label{theorem:app_uniqGOP}
For the optimization problem defined in \ref{eq:Reg-OMT_2}, the GOP algorithm converges to a unique solution in a single iteration.
\end{theorem}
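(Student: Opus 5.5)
The plan is to show that the biconvex problem in Eq.~\ref{eq:Reg-OMT_2} decouples: the optimal local plans $P$ do not depend on $\Omega$, so one GOP pass (primal step in $P$, then the relaxed dual step in $\Omega$) already lands on the global minimizer. First I rewrite $\mathcal{L}_{\varepsilon_1,\varepsilon_2}$ in the KL form used in the proof of Lemma~\ref{lemma:biconvex}:
\begin{equation*}
\mathcal{L}_{\varepsilon_1,\varepsilon_2}(\Omega,P)=\sum_{i,j}\omega_{ij}\Big[\int\|\mathbf{x}-\mathbf{y}\|_2^2\,dp_{ij}+\varepsilon_1 D_{KL}(p_{ij}\,\|\,\mu_{0_i}\otimes\mu_{1_j})\Big]+\varepsilon_2 D_{KL}(\Omega\,\|\,\boldsymbol{\alpha}_0\otimes\boldsymbol{\alpha}_1)+c,
\end{equation*}
where $c$ is a constant. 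I then impose the component-level marginal constraints as the sufficient pairwise constraints that $p_{ij}$ has marginals $\mu_{0_i}$ and $\mu_{1_j}$. With these, the weighted sums in Eq.~\ref{eq:Reg-OMT_2} reduce to $\sum_j \omega_{ij}/\alpha_{0_i}=1$, which is exactly the constraint on $\Omega$. The feasible set therefore becomes a product $\mathcal{S}_\Omega\times\prod_{i,j}\Pi(\mu_{0_i},\mu_{1_j})$.

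\emph{Step 1 (primal in $P$).} I fix any feasible $\Omega$ with $\omega_{ij}>0$. Since $\omega_{ij}\ge 0$ multiplies the separate terms, minimizing over $P$ splits into $K_0K_1$ independent entropic OT problems $\min_{p\in\Pi(\mu_{0_i},\mu_{1_j})} D_{\varepsilon_1}(p)$. Each has finite second moments by Definition~\ref{def:regularity_conditions}, and strict convexity of KL (as in Lemma~\ref{lemma:biconvex}) gives a unique minimizer $p^*_{ij}$, namely the Schr\"odinger/Sinkhorn solution. Crucially, $p^*_{ij}$ is independent of $\Omega$. For entries with $\omega_{ij}=0$, the $p_{ij}$ term does not affect the objective, so I take $p^*_{ij}$ by convention, since it is irrelevant to $\pi_{\textsc{omt}}$.

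\emph{Step 2 (relaxed dual / $\Omega$ step).} Substituting $P^*$ gives a discrete entropic OT problem $\min_{\Omega}\sum_{ij}\omega_{ij}C_{ij}+\varepsilon_2 D_{KL}(\Omega\|\boldsymbol{\alpha}_0\otimes\boldsymbol{\alpha}_1)$ with cost $C_{ij}=D_{\varepsilon_1}(p^*_{ij})$, over the transportation polytope. This objective is strictly convex on a compact convex set, so its minimizer $\Omega^*$ is unique; it is the Sinkhorn solution, and it has strictly positive entries. Consequently, for every feasible $(\Omega,P)$ we have $\mathcal{L}(\Omega,P)\ge\mathcal{L}(\Omega,P^*)\ge\mathcal{L}(\Omega^*,P^*)$, so $(\Omega^*,P^*)$ is the global minimizer.

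\emph{Step 3 (uniqueness and GOP termination).} For uniqueness of $\pi_{\textsc{omt}}$, suppose another minimizer $(\Omega',P')$ exists. Then equality in the second inequality forces $\Omega'=\Omega^*>0$ entrywise. Equality in the first inequality, combined with positive weights and strict convexity, then forces $P'=P^*$. To connect this with GOP (Floudas--Visweswaran, as described in \cite{gorski2007biconvex}), I argue as follows. The Lagrangian lower bound from the relaxed dual at the first primal iterate coincides with the primal value: the primal solution in $P$ is constant in the complicating variable $\Omega$, so the qualifying linearization is exact and the lower and upper bounds meet after one iteration.

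The main obstacle is Step 3's formal link to GOP's bounding mechanism. Specifically, I must argue that the relaxed dual's linearization of the Lagrangian around $P^*$ introduces no gap, because $P^*$ is $\Omega$-invariant. A second delicate point is justifying that the aggregated marginal constraints in Eq.~\ref{eq:Reg-OMT_2} may be replaced by the pairwise ones. I would handle this by noting that these constraints define the intended OMT feasible set, and that the original formulation in Eq.~\ref{eq:omt} is built precisely to have such component-wise couplings.
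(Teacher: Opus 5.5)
Your proposal is correct and follows essentially the same route as the paper. The paper also fixes an $\Omega$ with positive entries and solves the $P$-block through per-pair multipliers $\varphi_{ij},\psi_{ij}$, so it imposes exactly the pairwise marginal constraints you flag; it then notes that the Gibbs-form $p^*_{ij}$ is independent of $\Omega$, solves the strictly convex discrete entropic problem in $\Omega$, and observes that a second pass returns the same $(\Omega^*,P^*)$. Your chain $\mathcal{L}(\Omega,P)\ge\mathcal{L}(\Omega,P^*)\ge\mathcal{L}(\Omega^*,P^*)$ states global optimality and uniqueness more explicitly than the paper does, since the paper treats the GOP step as this alternating iteration reaching a fixed point after one pass rather than analyzing relaxed-dual bounds.
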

\begin{proof}
Consider the biconvex optimization problem defined as follows:
\[
\min{\{\mathcal{L}_{\varepsilon_1 \varepsilon_2}(\Omega, P), \ (\Omega, P)\in \Lambda \}}.
\]
Let's begin by selecting an arbitrary initial point $Z_0 = (\Omega_0, P_0) \in \Lambda$, $\Omega_0 = [\omega_{0_{ij}}]$ and set the iteration index $s = 0$. Without loss of generality, we assume that $\omega_{0_{ij}} >0$, for all $i,j$.
We then solve the following convex optimization problem with respect to $P$, keeping $\Omega_s$ fixed.
\begin{gather}
\label{eq:min_p}
\displaystyle{\min_{P} \sum_{i,j} \omega_{0_{ij}} \left[\int_{\mathcal{X} \times \mathcal{Y}} \| \mathbf{x}-\mathbf{y} \|^2_2 \ dp_{ij}(\mathbf{x},\mathbf{y}) + \varepsilon_1 D_{KL}(p_{ij} \| \mu_{0_i} \otimes \mu_{1_j}) \right]} \nonumber \\
\text{s.t.} \hspace{.25in} \displaystyle{\int_{\mathcal{Y}} dP(\mathbf{x}, \mathbf{y}) = \boldsymbol{\mu}_{0}}, \hspace{.1in} \displaystyle{\int_{\mathcal{X}} dP(\mathbf{x}, \mathbf{y}) = \boldsymbol{\mu}_{1}}
\end{gather}
Since the objective function in \ref{eq:min_p} is convex in $P$ for any fixed $\Omega$ and Slater’s condition is satisfied~\cite{floudas1993primal}, strong duality holds. Accordingly, problem \ref{eq:min_p} admits the following strong dual formulation:
\begin{gather}
\label{eq:min_p_dual}
    \displaystyle{\max_{\Phi ,\Psi} \min_{P} \sum_{i,j} \omega_{0_{ij}} \left[\int_{\mathcal{X} \times \mathcal{Y}} \| \mathbf{x}-\mathbf{y} \|^2_2 \ dp_{ij}(\mathbf{x},\mathbf{y}) + \varepsilon_1 D_{KL}(p_{ij} \| \mu_{0_i} \otimes \mu_{1_j}) \right]} - \\  
    \displaystyle{\sum_{i,j} \omega^0_{ij} \left[ \int_{\mathcal{X}} \varphi_{ij}(\mathbf{x}) \left(\int_{\mathcal{Y}} dp_{ij}(\mathbf{x}, \mathbf{y}) - d\mu_{0_i}(\mathbf{x}) \right) + 
 \int_{\mathcal{Y}} \psi_{ij}(\mathbf{y}) \left(\int_{\mathcal{X}} dp_{ij}(\mathbf{x}, \mathbf{y}) - d\mu_{1_j}(\mathbf{y}) \right) \right]}.
\end{gather}
Here, $\phi_{ij}, \psi_{ij} \geq 0$ are the Lagrange multipliers associated with the marginal constraints.

Denoting the inner minimization problem in \ref{eq:min_p_dual} by $\displaystyle{\min_{P}{f(P, \Phi, \Psi)}}$, we seek the optimal policy that minimizes the loss in \ref{eq:min_p_dual}. To do so, we compute the functional derivative of the loss with respect to $dp_{ij}(\mathbf{x}, \mathbf{y})$.
\begin{eqnarray}
\label{eq:p_der}
    \displaystyle{\frac{d {f}}{dp_{ij}(\mathbf{x}, \mathbf{y})}} &=& \omega_{0_{ij}} \left(\| \mathbf{x}-\mathbf{y} \|^2_2 + \varepsilon_1 \log{\displaystyle{\frac{dp_{ij}(\mathbf{x}, \mathbf{y})}{d\mu_{0_i}(\mathbf{x}) d\mu_{1_j}(\mathbf{y})}}} - \varphi_{ij}(\mathbf{x}) - \psi_{ij}(\mathbf{y}) \right). 
\end{eqnarray}
Since the objective is strictly convex in $P$, it admits a unique solution that is independent of $\Omega$.
\begin{eqnarray}
\label{eq:p_opt}
    dp_{ij}^*(\mathbf{x}, \mathbf{y}) &=& \displaystyle{\exp{\left(\frac{\varphi_{ij}(\mathbf{x}) + \psi_{ij}(\mathbf{y}) - \| \mathbf{x}-\mathbf{y} \|^2_2}{\varepsilon_1} \right)}} d\mu_{0_i}(\mathbf{x}) d\mu_{1_j}(\mathbf{y}). \\ \nonumber
\end{eqnarray}
Substituting $dp^*_{ij}$ in \ref{eq:min_p_dual}, the dual form can be written as:
\begin{eqnarray}
\label{eq:max_p_dual}
\displaystyle{\max_{\Phi, \Psi} \sum_{i,j} \omega_{0_{ij}} \left[\int_{\mathcal{X}} \varphi_{ij}(\mathbf{x}) d\mu_{0_i}(\mathbf{x}) - \int_{\mathcal{Y}} \psi_{ij}(\mathbf{y}) d\mu_{0_j}(\mathbf{y}) - \varepsilon_1 \left(\int_{\mathcal{X} \times \mathcal{Y}} dp^*_{i,j}(\mathbf{x}, \mathbf{y}) -1 \right) \right]}.
\end{eqnarray}
To determine the optimal Lagrange multipliers that maximize the dual objective, we differentiate the loss function in \ref{eq:max_p_dual}, denoted as $g(\Phi, \Psi, P^*)$ with respect to each multiplier as follows.
\begin{eqnarray}
\displaystyle{\frac{d g}{d\varphi_{ij}(\mathbf{x})}} &=&  \displaystyle{\omega_{0_{ij}} \left(d\mu_{0_i}(\mathbf{x}) -  \int_{\mathcal{Y}}\exp{(\frac{\varphi_{ij}(\mathbf{x}) + \psi_{ij}(\mathbf{y}) - \|\mathbf{x} - \mathbf{y} \|^2_2}{\varepsilon_1})} d\mu_{0_i}(\mathbf{x}) d\mu_{1_j}(\mathbf{y}) \right)} \ \\ 
\displaystyle{\frac{d g}{d\psi_{ij}(\mathbf{y})}} &=&  \displaystyle{\omega_{0_{ij}} \left(d\mu_{1_j}(\mathbf{y})  -  \int_{\mathcal{X}}\exp{(\frac{\varphi_{ij}(\mathbf{x}) + \psi_{ij}(\mathbf{y}) - \|\mathbf{x} - \mathbf{y} \|^2_2}{\varepsilon_1})} d\mu_{0_i}(\mathbf{x}) d\mu_{1_j}(\mathbf{y}) \right) } \  \\ \nonumber
\end{eqnarray}
Assuming that for all $i,j$, the measures $\mu_{0_i}$ and $\mu_{1_j}$ have finite second-order moments, the pair $(\varphi_{ij}, \psi_{ij})$ is optimal if and only if the following conditions are satisfied.
\begin{equation}
\label{eq:dual_form_condition}
\int_{\mathcal{Y}}\exp{(\frac{\varphi_{ij}(\mathbf{x}) + \psi_{ij}(\mathbf{y}) - \|\mathbf{x} - \mathbf{y} \|^2_2}{\varepsilon_1})} d\mu_{1_j}(\mathbf{y}) = 1, \hspace{.25in}
\int_{\mathcal{X}}\exp{(\frac{\varphi_{ij}(\mathbf{x}) + \psi_{ij}(\mathbf{y}) - \|\mathbf{x} - \mathbf{y} \|^2_2}{\varepsilon_1})} d\mu_{0_i}(\mathbf{x}) = 1,
\end{equation}
which is equivalent to the following expressions for the optimal multipliers:
\begin{eqnarray}
\varphi_{ij}(\mathbf{x}) &=& \displaystyle{- \varepsilon_1 \log{\int_{\mathcal{Y}} \exp{\left(\frac{\psi_{ij}(\mathbf{y}) - \| \mathbf{x}-\mathbf{y} \|^2_2}{\varepsilon_1} \right)}d\mu_{1_j}(\mathbf{y})}}, \label{eq:phi_opt}\\
 \psi_{ij}(\mathbf{y}) &=& \displaystyle{- \varepsilon_1 \log{\int_{\mathcal{X}} \exp{\left(\frac{\varphi_{ij}(\mathbf{x}) - \| \mathbf{x}-\mathbf{y} \|^2_2}{\varepsilon_1} \right)}d\mu_{0_i}(\mathbf{x})}}. \label{eq:psi_opt}\\ \nonumber
\end{eqnarray}
As observed, the optimal Lagrange multipliers are also independent of $\Omega$. Therefore, the unique optimal solution of \ref{eq:min_p_dual} remains the same for any fixed $\Omega_s$, implying
\begin{eqnarray}
\label{eq:p_conditions}
\forall s, \hspace{.1in} \mathcal{L}_{\varepsilon_1 \varepsilon_2}(\Omega_s, P^*) &\leq& \mathcal{L}_{\varepsilon_1 \varepsilon_2}(\Omega_s, P), \nonumber \\ [.1in ]
\forall s \neq s', \hspace{.1in} \mathcal{L}_{\varepsilon_1 \varepsilon_2}(\Omega_s, P^*) &=& \mathcal{L}_{\varepsilon_1 \varepsilon_2}(\Omega_{s'}, P^*), \nonumber \\ [.1in ]
\forall s \neq s', \hspace{.1in} \mathcal{L}_{\varepsilon_1 \varepsilon_2}(\Omega_s, P^*) &=& \mathcal{L}_{\varepsilon_1 \varepsilon_2}(\Omega_{s'}, P_{s'}), \ \text{iff} \ P_{s'}=P^*
\end{eqnarray}
Proceeding to the next step, we set $s=1$, which gives $P_1=P^*$. For a given $\varepsilon_2 > 0$ , we solve the following strictly convex optimization problem with respect to $\Omega=[\omega_{ij}]$, keeping $P$ fixed.
\begin{gather}
\label{eq:min_omega}
\displaystyle{\min_{\Omega} \sum_{i,j} \omega_{ij} \left[\int_{\mathcal{X} \times \mathcal{Y}} \| \mathbf{x}-\mathbf{y} \|^2_2 \ dp^*_{ij}(\mathbf{x},\mathbf{y}) + \varepsilon_1 D_{KL}(p^*_{ij} \| \mu_{0_i} \otimes \mu_{1_j}) \right]} + \varepsilon_2 D_{KL}(\Omega \| \boldsymbol{\alpha}_0 \otimes \boldsymbol{\alpha}_1) \\
\text{s.t.} \hspace{.25in} \mathbf{1}\Omega = \boldsymbol{\alpha}_0, \hspace{.25in} \Omega^T\mathbf{1} = \boldsymbol{\alpha}_1 
\end{gather}
This formulation, similar to equation \ref{eq:min_p}, admits the following dual form:
\begin{gather}
\label{eq:min_omega_dual}
    \displaystyle{\max_{\boldsymbol{\lambda}, \boldsymbol{\tau}} \min_{\Omega} \sum_{i,j} \omega_{ij} \left[\int_{\mathcal{X} \times \mathcal{Y}} \| \mathbf{x}-\mathbf{y} \|^2_2 \ dp^*_{ij}(\mathbf{x},\mathbf{y}) + \varepsilon_1 D_{KL}(p^*_{ij} \| \mu_{0_i} \otimes \mu_{1_j}) \right]} + \varepsilon_2 D_{KL}(\Omega \| \boldsymbol{\alpha}_0 \otimes \boldsymbol{\alpha}_1) - \\
     \displaystyle{\sum_i \lambda_i \left(\sum_j \omega_{ij} - \alpha_{0_i} \right) - \sum_j \tau_j \left(\sum_i \omega_{ij} - \alpha_{1_j}\right)}.
\end{gather}
The inner minimization in \ref{eq:min_omega_dual}, denoted $f^{\prime}(\Omega, \boldsymbol{\lambda}, \boldsymbol{\tau})$, admits a unique solution for the optimal weights. These weights can be derived by computing the functional derivative of the objective with respect to $\omega_{ij}$, yielding:
\begin{eqnarray}
\label{eq:omega_der}
\displaystyle{\frac{d f^{\prime}}{d\omega_{ij}}} &=& \displaystyle{ \underbrace{\int_{\mathcal{X} \times \mathcal{Y}} \| \mathbf{x}-\mathbf{y} \|^2_2 \ dp^*_{ij}(\mathbf{x},\mathbf{y}) + \varepsilon_1 D_{KL}(p^*_{ij} \| \mu_{0_i} \otimes \mu_{1_j})}_{\mathcal{L}_{p^*_{ij}}} + \varepsilon_2 \log{\frac{\omega_{ij}}{\alpha_{0_i} \alpha_{1_j}}} - \lambda_i - \tau_j} \\ 
\omega^*_{ij} &=& \displaystyle{\exp{\left(\frac{\lambda_i+ \tau_j - \mathcal{L}_{p^*_{ij}}}{\varepsilon_2}\right)}} \alpha_{0_i} \alpha_{1_j} \label{eq:omega_opt} 
\end{eqnarray}
To obtain the optimal Lagrangian multipliers that maximize the loss in \ref{eq:min_omega_dual}, denoted $\displaystyle{\max_{\boldsymbol{\lambda}, \boldsymbol{\tau}}g^{\prime}(\boldsymbol{\lambda}, \boldsymbol{\tau}, \Omega^*)}$, we compute the partial derivatives of $g^{\prime}$ with respect to each multiplier.
\begin{eqnarray}
\label{eq:lambda_tau_der}
\displaystyle{\frac{d g^{\prime}}{d\lambda_i}} &=& \displaystyle{ \alpha_{0_i} - \sum_{j} \exp{\left(\frac{\lambda_i + \tau_j - \mathcal{L}_{p^*_{ij}}}{\varepsilon_2}\right)} \alpha_{0_i} \alpha_{1_j}} \\ 
\displaystyle{\frac{d g^{\prime}}{d\tau_j}} &=& \displaystyle{ \alpha_{1_j} - \sum_{i} \exp{\left(\frac{\lambda_i + \tau_j - \mathcal{L}_{p^*_{ij}}}{\varepsilon_2}\right)} \alpha_{0_i} \alpha_{1_j}}
\end{eqnarray}
Solving these yields the optimal multipliers as:
\begin{gather}
\label{eq:lambda_tau_opt}
\lambda_i = -\varepsilon_2 \displaystyle{\log{ \sum_{j} \exp{\left(\frac{\tau_j - \mathcal{L}_{p^*_{ij}}}{\varepsilon_2} \right)} \alpha_{1_j}}}, \hspace{.25in}  \tau_j = -\varepsilon_2 \displaystyle{\log{ \sum_{i} \exp{\left(\frac{\lambda_i - \mathcal{L}_{p^*_{ij}}}{\varepsilon_2} \right)} \alpha_{0_i}}}
\end{gather}
Since the optimal weight in \ref{eq:omega_opt} minimizes $\mathcal{L}_{\varepsilon_1 \varepsilon_2}(\Omega, P_1)$ uniquely for the fixed choice $P_1=P^*$, it follows that:
\begin{gather}
\mathcal{L}_{\varepsilon_1 \varepsilon_2}(\Omega^*, P_1) \leq \mathcal{L}_{\varepsilon_1 \varepsilon_2}(\Omega_0, P_1) \leq \mathcal{L}_{\varepsilon_1 \varepsilon_2}(\Omega_0, P_0)
\end{gather}
Now, let's update the optimization by setting $\Omega_1 = \Omega^*$, and advancing to step $s=2$. According to \ref{eq:p_conditions}, the next update satisfies:
\begin{gather}
\mathcal{L}_{\varepsilon_1 \varepsilon_2}(\Omega_1, P^*) 
 = \displaystyle{\min_{P} \mathcal{L}_{\varepsilon_1 \varepsilon_2}(\Omega_1, P)}. 
\end{gather}
Since we find that $P_2=P_1=P^*$ and consequently, $\Omega_2=\Omega_1=\Omega^*$, the stopping criterion of the overall alternating optimization procedure is met after just a single iteration.
\end{proof}
\subsection{Transport map in OMT}
\label{sec:app_omtmap}
By the definition of the OMT problem in \ref{eq:emot}, the corresponding transport map satisfies
\begin{gather*}
   T^{\nu_0 \rightarrow \nu_1}_{\textsc{omt}}(\mathbf{x}) =  \int \mathbf{y} \pi_{\textsc{omt}}(\mathbf{y} | \mathbf{x}) d\mathbf{y},
\end{gather*}
where $\pi_{\textsc{omt}}(\mathbf{x}, \mathbf{y}) \in M_{K}(\mathbb{R}^d)$ (Eq.~\ref{eq:omt_finalform}). The conditional distribution can be expressed as
\begin{align}
\label{eq:OMT_cpl}
   \pi_{\textsc{omt}}(\mathbf{y} | \mathbf{x}) &= \displaystyle{\frac{\pi_{\textsc{omt}}(\mathbf{x}, \mathbf{y})}{\int_{\mathbf{y}^{\prime}} \pi_{\textsc{omt}}(\mathbf{x}, \mathbf{y}^{\prime}) d\mathbf{y}^{\prime}}} \nonumber \\[.1in] 
   &= \displaystyle{\frac{ \displaystyle \sum_{i,j} \omega_{ij} p_{ij}(\mathbf{x}, \mathbf{y})}{\displaystyle \sum_{r,k}
  {\omega}_{rk} \int_{\mathbf{y}^{\prime}} p_{rk}(\mathbf{x}, \mathbf{y}^{\prime}) d\mathbf{y}^{\prime}}} = \displaystyle{\frac{ \displaystyle \sum_{i,j} \omega_{ij} p_{i}(\mathbf{x}) p_{ij}(\mathbf{y} | \mathbf{x})}{\displaystyle \sum_{r,k} {\omega}_{rk} \int_{\mathbf{y}^{\prime}} p_{rk}(\mathbf{x}, \mathbf{y}^{\prime}) d\mathbf{y}^{\prime}}} \nonumber \\[.1in]
   &= \displaystyle{\sum_{i,j}} \underbrace{\displaystyle{\frac{\omega_{ij} p_{i}(\mathbf{x})}{\displaystyle \sum_{r,k}
  {\omega}_{rk} p_{r}(\mathbf{x})}}}_{\tilde{\omega}_{ij}(\mathbf{x})} p_{ij}(\mathbf{y} | \mathbf{x}) = \displaystyle \sum_{i,j} \tilde{\omega}_{ij}(\mathbf{x}) p_{ij}(\mathbf{y} | \mathbf{x}), 
\end{align}
where $\displaystyle \sum_{i,j} \tilde{\omega}_{ij}(\mathbf{x}) = 1$. Consequently, the transport map can be decomposed as
\begin{align}
   T^{\nu_0 \rightarrow \nu_1}_{\textsc{omt}}(\mathbf{x}) &=  \displaystyle{\sum_{i,j}} \tilde{\omega}_{ij}(\mathbf{x}) \int \mathbf{y} p_{ij}(\mathbf{y} | \mathbf{x}) d\mathbf{y} = \displaystyle{\sum_{i,j}} \tilde{\omega}_{ij}(\mathbf{x}) T^{{\mu}_{i_0} \to {\mu}_{j_1}}(\mathbf{x}), 
\label{eq:omtMap}
\end{align}
where $T^{{\mu}_{i_0} \to {\mu}_{j_1}}(\mathbf{x})$ denotes the entropic map transporting the $i$-th component of $\nu_0$ to the $j$-th component of $\nu_1$. For notational simplicity, in what follows we denote $T^{\nu_0 \rightarrow \nu_1}_{\textsc{omt}}(\mathbf{x}) := T_{\textsc{omt}}(\mathbf{x})$, with
\begin{gather}
\label{eq:app_T_omt}
       T_{\textsc{omt}}(\mathbf{x}) =  \displaystyle{\sum_{i,j}} \tilde{\omega}_{ij}(\mathbf{x}) T_{ij}(\mathbf{x}) \ .
\end{gather}
\subsection{Covariance of $\pi_{\textsc{omt}}(\mathbf{y} | \mathbf{x})$}
\label{sec:app_cov}
The conditional covariance of $\pi_{\textsc{omt}}(\mathbf{y}|\mathbf{x})$ (Eq.~\ref{eq:OMT_cpl}) is expressed as:
\begin{align*}
\text{Cov}_{\pi_{\textsc{omt}}}(Y|X=\mathbf{x}) = E_{\pi_{\textsc{omt}}}[\mathbf{y}\mathbf{y}^T] - \overline{\mathbf{m}}_{\mathbf{y}|\mathbf{x}}\overline{\mathbf{m}}_{\mathbf{y}|\mathbf{x}}^T
\end{align*}
where $\overline{\mathbf{m}}_{\mathbf{y}|\mathbf{x}}$ denotes the global conditional mean, defined as
\begin{gather}
\label{eq:cond_mean}
\overline{{\mathbf{m}}}_{\mathbf{y}|\mathbf{x}} := \sum_{i,j} \tilde{\omega}_{ij}(\mathbf{x}) \mathbf{m}^{ij}_{\mathbf{y}|\mathbf{x}} \ 
\end{gather}
Moreover, the total second moment is calculated as the weighted sum of the local second moments:
\begin{align*}
E_{\pi_{\textsc{omt}}}[\mathbf{y}\mathbf{y}^T] = \displaystyle \sum_{i,j} \tilde{\omega}_{ij}(\mathbf{x}) \int \mathbf{y}\mathbf{y}^T p_{ij}(\mathbf{y}|\mathbf{x}) d\mathbf{y} = \sum_{i,j} \tilde{\omega}_{ij}(\mathbf{x}) E_{ij}[\mathbf{y}\mathbf{y}^T] \ .
\end{align*}
By substituting the identity for the local covariance of each component, $E_{ij}[\mathbf{y}\mathbf{y}^T] = \text{Cov}^{ij}_{\mathbf{y}|\mathbf{x}} + \mathbf{m}^{ij}_{\mathbf{y}|\mathbf{x}} (\mathbf{m}^{ij}_{\mathbf{y}|\mathbf{x}})^T$, into the expression for the total second moment, we obtain:
\begin{align}
\label{eq:total_cov}
    \text{Cov}_{\pi_{\textsc{omt}}}(Y|X=\mathbf{x}) &= \displaystyle \sum_{i,j} \tilde{\omega}_{ij}(\mathbf{x}) \text{Cov}^{ij}_{\mathbf{y}|\mathbf{x}}  + \displaystyle \sum_{i,j} \tilde{\omega}_{ij}(\mathbf{x}) \left(\mathbf{m}^{ij}_{\mathbf{y}|\mathbf{x}} -\overline{\mathbf{m}}_{\mathbf{y}|\mathbf{x}} \right)\left(\mathbf{m}^{{ij}}_{\mathbf{y}|\mathbf{x}} - \overline{\mathbf{m}}_{\mathbf{y}|\mathbf{x}} \right)^T,
\end{align}
where $\text{Cov}^{ij}_{\mathbf{y}|\mathbf{x}}$ and $\mathbf{m}^{ij}_{\mathbf{y}|\mathbf{x}}$ represent the conditional covariance and mean of the local transport plan $p_{ij}(\mathbf{y}|\mathbf{x})$, respectively:
\begin{gather}
\label{eq:cond_cov_ij}
\text{Cov}^{ij}_{\mathbf{y}|\mathbf{x}} := \text{Cov}_{{p_{ij}(\mathbf{y}|\mathbf{x})}}(Y|X=\mathbf{x}) \ 
\end{gather}
Applying the operator norm and the triangle inequality gives
 \begin{align}
 \label{eq:overallCov_u_v}
     \|\text{Cov}_{\pi_{\textsc{omt}}}(Y|X=\mathbf{x})\|_{\text{op}} &= \| \displaystyle \sum_{i,j} \tilde{\omega}_{ij}(\mathbf{x}) \text{Cov}^{ij}_{\mathbf{y}|\mathbf{x}}  + \displaystyle \sum_{i,j} \tilde{\omega}_{ij}(\mathbf{x}) \left(\mathbf{m}^{ij}_{\mathbf{y}|\mathbf{x}} -\overline{\mathbf{m}}_{\mathbf{y}|\mathbf{x}} \right)\left(\mathbf{m}^{{ij}}_{\mathbf{y}|\mathbf{x}} - \overline{\mathbf{m}}_{\mathbf{y}|\mathbf{x}} \right)^T \|_{\text{op}} \nonumber \\
     & \le \displaystyle \|  \sum_{i,j} \tilde{\omega}_{ij}(\mathbf{x}){\text{Cov}^{ij}_{\mathbf{y}|\mathbf{x}}} \|_{\text{op}} + \| \displaystyle \sum_{i,j} \tilde{\omega}_{ij}(\mathbf{x}) \left(\mathbf{m}^{ij}_{\mathbf{y}|\mathbf{x}} -\overline{\mathbf{m}}_{\mathbf{y}|\mathbf{x}} \right)\left(\mathbf{m}^{{ij}}_{\mathbf{y}|\mathbf{x}} - \overline{\mathbf{m}}_{\mathbf{y}|\mathbf{x}} \right)^T \|_{\text{op}} \nonumber \\
     & \le \displaystyle \underbrace{\sum_{i,j} \tilde{\omega}_{ij}(\mathbf{x})  \|{\text{Cov}^{ij}_{\mathbf{y}|\mathbf{x}}} \|_{\text{op}}}_{u(\mathbf{x})} + \underbrace{\sum_{i,j} \tilde{\omega}_{ij}(\mathbf{x}) \| \left(\mathbf{m}^{ij}_{\mathbf{y}|\mathbf{x}} -\overline{\mathbf{m}}_{\mathbf{y}|\mathbf{x}} \right)\left(\mathbf{m}^{{ij}}_{\mathbf{y}|\mathbf{x}} - \overline{\mathbf{m}}_{\mathbf{y}|\mathbf{x}} \right)^T \|_{\text{op}} }_{v(\mathbf{x})} \\
     &\leq u_{\text{max}} + v_{\text{max}} \ .
     \label{eq:sup_total_cov}
\end{align}
where 
\begin{gather}
\label{eq:U_max}
u_{\text{max}}:= \displaystyle \sup_{\mathbf{x}} \{ \sum_{i,j} \tilde{\omega}_{ij}(\mathbf{x}) \|{\text{Cov}^{ij}_{\mathbf{y}|\mathbf{x}}} \|_{\text{op}} \}
\end{gather}
and
\begin{align}
\label{eq:bound_m_m_ij}
     v_{\text{max}} & = \displaystyle \sup_{\mathbf{x}} \  \{ \displaystyle \sum_{i,j}  \tilde{\omega}_{ij}(\mathbf{x})  \|\left(\mathbf{m}^{ij}_{\mathbf{y}|\mathbf{x}} -\bar{\mathbf{m}}_{\mathbf{y}|\mathbf{x}} \right)\left(\mathbf{m}^{{ij}}_{\mathbf{y}|\mathbf{x}} - \bar{\mathbf{m}}_{\mathbf{y}|\mathbf{x}} \right)^T \|_{\text{op}} \} \nonumber \\
     & = \displaystyle \sup_{\mathbf{x}} \  \{ \displaystyle \sum_{i,j} \tilde{\omega}_{ij}(\mathbf{x}) \left( \mathbf{m}^{{ij}}_{\mathbf{y}|\mathbf{x}} - \bar{\mathbf{m}}_{\mathbf{y}|\mathbf{x}}\right)^T \left( \mathbf{m}^{{ij}}_{\mathbf{y}|\mathbf{x}} - \bar{\mathbf{m}}_{\mathbf{y}|\mathbf{x}}\right)  \} 
\end{align}
where the normalized weights are defined as
\begin{align}
    \tilde{\omega}_{ij}(\mathbf{x}) &= \displaystyle{\frac{\omega_{ij} \ \mu_{0_{i}}(\mathbf{x})}{\nu_0(\mathbf{x})}} \ ,  \hspace{.5in} \text{where }\displaystyle \sum_{i,j} \omega_{ij}=1 \nonumber \\
\end{align}
and 
\begin{align}
\text{Cov}^{ij}_{\mathbf{y}|\mathbf{x}} =& \mathbb{E}_{p_{ij}(\mathbf{y} | \mathbf{x})} \left[\mathbf{y}\mathbf{y}^T \right] - \mathbb{E}_{p_{ij}(\mathbf{y} | \mathbf{x})} \left[\mathbf{y}\right] \mathbb{E}^T_{p_{ij}(\mathbf{y} | \mathbf{x})} \left[\mathbf{y}\right] \ , \nonumber \\
=& \mathbb{E}_{p_{ij}(\mathbf{y} | \mathbf{x})} \left[\mathbf{y}\mathbf{y}^T \right] - \mathbf{m}^{{ij}}_{\mathbf{y}|\mathbf{x}} {\mathbf{m}^{ij}}^T_{\mathbf{y}|\mathbf{x}} \ .
\end{align}
Under the regularity conditions specified in Definition~\ref{def:regularity_conditions}, the weighted second moments are bounded as $ \displaystyle \frac{\mu_{0_i}(\mathbf{x})}{\nu_0(\mathbf{x})} \| \mathbb{E}_{p_{ij}(\mathbf{y}|\mathbf{x})} \left[\mathbf{y}\mathbf{y}^T \right]\|_{\text{op}} < C_i$. 

For a random vector $\mathbf{y} \in \mathbb{R}^d$, we have:
$$\mathbb{E}[\|\mathbf{y}\|^2] = \mathbb{E}[\text{Tr}(\mathbf{y}\mathbf{y}^T)] = \text{Tr}(\mathbb{E}[\mathbf{y}\mathbf{y}^T]) \le d \| \mathbb{E}[\mathbf{y} \mathbf{y}^T]\|_{\text{op}} $$ 
By Jensen's inequality, it further holds that $\|\mathbb{E}[\mathbf{y}]\|^2 \le \mathbb{E}[\|\mathbf{y}\|^2]$. Consequently, the magnitudes of both the local mean and the covariance are bounded as follows.
\begin{gather}
\label{eq:bound_m_ij}
 \displaystyle \frac{\mu_{0_i}(\mathbf{x})}{\nu_0(\mathbf{x})} \| \mathbf{m}^{ij}_{\mathbf{y}|\mathbf{x}} {\mathbf{m}^{ij}}^T_{\mathbf{y}|\mathbf{x}}\|_{\text{op}} = \displaystyle \frac{\mu_{0_i}(\mathbf{x})}{\nu_0(\mathbf{x})} \| \mathbf{m}^{ij}_{\mathbf{y}|\mathbf{x}} \|^2 < \infty  \\
\displaystyle \frac{\mu_{0_i}(\mathbf{x})}{\nu_0(\mathbf{x})} \|\text{Cov}^{ij}_{\mathbf{y}|\mathbf{x}} \|_{\text{op}} < \infty \ .
\end{gather}
Consequently, the operator norm of the OMT conditional covariance is uniformly bounded as follows.
\begin{gather}
\label{eq:omt_sup_cov}
    \displaystyle \sup_{\mathbf{x}} \ \{ \|  \text{Cov}_{\pi_{\textsc{omt}}}(Y|X=\mathbf{x}) \|_{\text{op}} \} = u_{\text{max}} + v_{\text{max}}=C_{\pi_{\textsc{omt}}} \ < \infty
\end{gather}
\subsection{Lipschitz continuity of $T_{\textsc{omt}}(\mathbf{x})$}
\label{sec:app_localLip}
\begin{define}[Lipschitzness]
\label{define:Lipschitzness}
    A function \(f : \mathbb{R}^d \to \mathbb{R}^m\) is Lipschitz continuous if there exists a constant \(L > 0\) such that
\begin{gather*}
    \forall x, x' \in \mathbb{R}^d, \quad \| f(x) - f(x') \| \le L \| x - x' \|.
\end{gather*}
\end{define}
\begin{prop}
Let $\nu_0(\mathbf{x}) \in M_{K_0}(\mathbb{R}^d)$, $\nu_1(\mathbf{y}) \in M_{K_1}(\mathbb{R}^d)$ be mixture densities satisfying the regularity conditions in Definition~\ref{def:regularity_conditions}. Then $T^{\nu_0 \rightarrow \nu_1}_{\textsc{omt}}(\mathbf{x})$ is Lipschitz continuous.
\end{prop}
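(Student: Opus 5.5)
The plan is to bound the Jacobian of $T_{\textsc{omt}}$ uniformly in operator norm. A map on a convex domain whose Jacobian has uniformly bounded operator norm is Lipschitz, by the mean value inequality $\|T(\mathbf{x})-T(\mathbf{x}')\| \le \sup_{\mathbf{z}}\|\nabla T(\mathbf{z})\|_{\textsc{op}}\|\mathbf{x}-\mathbf{x}'\|$. The starting point is the decomposition of Appendix~\ref{sec:app_omtmap}: $T_{\textsc{omt}}(\mathbf{x})=\mathbb{E}_{\pi_{\textsc{omt}}(\mathbf{y}|\mathbf{x})}[\mathbf{y}]=\sum_{i,j}\tilde{\omega}_{ij}(\mathbf{x})\,\mathbf{m}^{ij}_{\mathbf{y}|\mathbf{x}}$. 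By the product rule,
\begin{equation*}
\nabla T_{\textsc{omt}}(\mathbf{x}) = \sum_{i,j}\tilde{\omega}_{ij}(\mathbf{x})\,\nabla \mathbf{m}^{ij}_{\mathbf{y}|\mathbf{x}} + \sum_{i,j}\mathbf{m}^{ij}_{\mathbf{y}|\mathbf{x}}\,\nabla\tilde{\omega}_{ij}(\mathbf{x})^T .
\end{equation*}
I will bound the two sums separately.

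\textbf{First term.} Each $p_{ij}$ is the entropic plan of Theorem~\ref{theorem:uniqGOP}, so by Eq.~\ref{eq:p_opt}
\[
p_{ij}(\mathbf{y}|\mathbf{x})\propto \exp\bigl((\psi_{ij}(\mathbf{y})-\|\mathbf{x}-\mathbf{y}\|^2)/\varepsilon_1\bigr)\mu_{1_j}(\mathbf{y}).
\]
In $\mathbf{x}$ this is an exponential tilt of a fixed measure by $2\mathbf{x}^T\mathbf{y}/\varepsilon_1$. Differentiating under the integral gives the standard entropic identity
\[
\nabla\mathbf{m}^{ij}_{\mathbf{y}|\mathbf{x}}=\tfrac{2}{\varepsilon_1}\,\mathrm{Cov}^{ij}_{\mathbf{y}|\mathbf{x}}.
\]
Hence the first sum has operator norm at most $\tfrac{2}{\varepsilon_1}\sum_{i,j}\tilde{\omega}_{ij}(\mathbf{x})\|\mathrm{Cov}^{ij}_{\mathbf{y}|\mathbf{x}}\|_{\textsc{op}}\le \tfrac{2}{\varepsilon_1}u_{\max}$. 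The bound $u_{\max}<\infty$ was established in Appendix~\ref{sec:app_cov} from the second regularity condition.

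\textbf{Second term.} Write $\tilde{\omega}_{ij}=\omega_{ij}\mu_{0_i}/\nu_0$. Then
\[
\nabla\tilde{\omega}_{ij}=\tilde{\omega}_{ij}\bigl(\nabla\log\mu_{0_i}-\nabla\log\nu_0\bigr),
\qquad
\nabla\log\nu_0=\sum_{r}\tfrac{\alpha_{0_r}\mu_{0_r}}{\nu_0}\nabla\log\mu_{0_r}.
\]
Because $\sum_{i,j}\nabla\tilde{\omega}_{ij}=0$, the mean $\overline{\mathbf{m}}_{\mathbf{y}|\mathbf{x}}$ can be subtracted for free, giving $\sum_{i,j}(\mathbf{m}^{ij}-\overline{\mathbf{m}})\nabla\tilde{\omega}_{ij}^T$. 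Its operator norm is at most
\[
\sum_{i,j}\tilde{\omega}_{ij}\|\mathbf{m}^{ij}-\overline{\mathbf{m}}\|\bigl(\|\nabla\log\mu_{0_i}\|+\|\nabla\log\nu_0\|\bigr).
\]
The factor $\|\mathbf{m}^{ij}-\overline{\mathbf{m}}\|$ is controlled through $v_{\max}$ and the bound Eq.~\ref{eq:bound_m_ij}, which come from the second regularity condition. The score factors weighted by $\mu_{0_i}/\nu_0$ are finite by the third regularity condition, and $\|\nabla\log\nu_0\|\le\sum_r \tfrac{\alpha_{0_r}\mu_{0_r}}{\nu_0}\|\nabla\log\mu_{0_r}\|$ is finite by the same condition. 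Cauchy--Schwarz over the probability weights $\tilde{\omega}_{ij}$ then separates the two factors.

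\textbf{Main obstacle.} The difficulty is making this product bound uniform in $\mathbf{x}$. Each regularity condition bounds a single quantity multiplied by one factor $\mu_{0_i}/\nu_0$, whereas the product of mean deviation and score carries that weight only once. I would first use $\mu_{0_i}/\nu_0\le 1/\alpha_{0_i}$, which holds because $\nu_0\ge\alpha_{0_i}\mu_{0_i}$. This lets one weight factor be split into two square-root factors, and the pointwise finiteness in Definition~\ref{def:regularity_conditions} should be interpreted as a uniform (supremum) bound, as was already done for $u_{\max}$ and $v_{\max}$. Combining the two terms gives a constant $L=\tfrac{2}{\varepsilon_1}u_{\max}+C$, and the mean value inequality then yields the claimed Lipschitz continuity.
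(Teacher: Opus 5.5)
Your route is the paper's: you bound $\|\nabla T_{\textsc{omt}}\|_{\textsc{op}}$ via $\nabla \mathbf{m}^{ij}_{\mathbf{y}|\mathbf{x}} = \tfrac{2}{\varepsilon_1}\mathrm{Cov}^{ij}_{\mathbf{y}|\mathbf{x}}$ and $\nabla\tilde{\omega}_{ij} = \tilde{\omega}_{ij}(\nabla\log\mu_{0_i} - \nabla\log\nu_0)$. The paper's $\overline{g(\mathbf{x})}$ is exactly $\nabla\log\nu_0$, and your centering by $\overline{\mathbf{m}}_{\mathbf{y}|\mathbf{x}}$ is a harmless refinement. Once the conditions are read as uniform bounds, the covariance sum and the $\nabla\log\nu_0$ part are fine.

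The gap is the remaining piece $\sum_{i,j}\tilde{\omega}_{ij}\|\mathbf{m}^{ij}_{\mathbf{y}|\mathbf{x}}-\overline{\mathbf{m}}_{\mathbf{y}|\mathbf{x}}\|\,\|\nabla\log\mu_{0_i}\|$, and your proposed fix does not close it. Write $w_i = \mu_{0_i}/\nu_0$. The hypotheses give $w_i\|\mathbf{m}^{ij}_{\mathbf{y}|\mathbf{x}}\|^2\le C$ and $w_i\|\nabla\log\mu_{0_i}\|\le C'$. Hence they give only $w_i^{3/2}\|\mathbf{m}^{ij}_{\mathbf{y}|\mathbf{x}}\|\,\|\nabla\log\mu_{0_i}\|\le \sqrt{C}\,C'$. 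The quantity you need carries an extra factor $w_i^{-1/2}$, which is unbounded exactly where component $i$ is dominated ($w_i\to 0$). Passing from $w_i^{3/2}$ to $w_i$ needs a \emph{lower} bound on $w_i$; the inequality $w_i\le 1/\alpha_{0_i}$ is an upper bound and points the wrong way. Likewise, Cauchy--Schwarz over $\tilde{\omega}_{ij}$ produces $\sum_i \alpha_{0_i} w_i\|\nabla\log\mu_{0_i}\|^2$. This is a second moment of the score, which the first-order third condition does not control. The paper does not resolve this either; it simply asserts $s_{\max}<\infty$ from the regularity conditions.

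This is not a removable technicality. Under the literal, pointwise reading of Definition~\ref{def:regularity_conditions}, Gaussian mixtures qualify, as the paper intends, yet $T_{\textsc{omt}}$ can fail to be Lipschitz. Take $\nu_0 = \tfrac12\mathcal{N}((-1,0),I)+\tfrac12\mathcal{N}((1,0),I)$ and $\nu_1 = \tfrac12\mathcal{N}((-5,0),I)+\tfrac12\mathcal{N}((5,0),9I)$. Then:
\begin{itemize}
\item $\tilde{\omega}_{ij} = 2\omega_{ij}\pi_i(x_1)$ with $\pi_1 = (1+e^{2x_1})^{-1}$;
\item each $\mathbf{m}^{ij}_{\mathbf{y}|\mathbf{x}}$ is affine with slope $a_j I$, where $a_1\neq a_2$;
\item $\omega_{11}\neq\omega_{21}$, because $\mathcal{L}_{p_{11}}+\mathcal{L}_{p_{22}}\neq\mathcal{L}_{p_{12}}+\mathcal{L}_{p_{21}}$.
\end{itemize}
A direct computation gives $\partial_{x_1}(T_{\textsc{omt}})_2(0,x_2) = (\omega_{21}-\omega_{11})(a_1-a_2)\,x_2$, which is unbounded in $x_2$. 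Under the uniform reading you adopt, by contrast, even a single Gaussian component violates the third condition: its score grows linearly while $\mu_{0_i}/\nu_0\equiv 1$.

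So what is missing is an additional hypothesis that controls the product directly, for example $\sup_{\mathbf{x}}\frac{\mu_{0_i}(\mathbf{x})}{\nu_0(\mathbf{x})}\|\mathbf{m}^{ij}_{\mathbf{y}|\mathbf{x}}\|\,\|\nabla\log\mu_{0_i}(\mathbf{x})\|<\infty$. This is what the paper's $s_{\max}<\infty$ tacitly assumes; with it, your argument goes through.
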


\begin{proof}
To establish the Lipschitz continuity of the transport map, it is sufficient to show that the operator norm of its Jacobian is bounded. We therefore begin by computing the Jacobian of the transport map.
\begin{gather}
\label{eq:DerivOMT}
    \nabla T_{\textsc{omt}}(\mathbf{x}) = \displaystyle \sum_{i,j} \tilde{\omega}_{ij}(\mathbf{x}) \nabla T_{ij}(\mathbf{x}) + T_{ij}(\mathbf{x}) \nabla^T \tilde{\omega}_{ij}(\mathbf{x})  ,
\end{gather}
where 
\begin{gather}
\label{eq:DerivT}
    \forall i,j \hspace{.1in} \nabla T_{ij}(\mathbf{x}) = \displaystyle \int \mathbf{y} \nabla_{x} dp_{ij}(\mathbf{y} | \mathbf{x}), \hspace{.15in} \nabla \tilde{\omega}_{ij}(\mathbf{x}) = \nabla \left(\displaystyle{\frac{\omega_{ij} \int_{\mathbf{y}} dp_{ij}(\mathbf{x}, \mathbf{y})}{\displaystyle \sum_{r,k} \omega_{rk} \int_{\mathbf{y}^{\prime}} dp_{rk}(\mathbf{x}, \mathbf{y}^{\prime})}} \right) \ .
\end{gather}
For notational convenience, define $dp_{ij}(\mathbf{y} | \mathbf{x}) = p_{ij}(\mathbf{y} | \mathbf{x}) d\mathbf{y}$, $d\mu_{0_i}(\mathbf{x}) = \mu_{0_i}(\mathbf{x}) d\mathbf{x}$, and $d\mu_{1_j}(\mathbf{y}) = \mu_{1_j}(\mathbf{y}) d\mathbf{y}$.

Following the derivation in proof of Theorem~\ref{theorem:app_uniqGOP}, we obtain
\begin{gather*}
    p_{ij}(\mathbf{x} , \mathbf{y}) = \displaystyle{\exp{\left(\frac{\varphi_{ij}(\mathbf{x}) + \psi_{ij}(\mathbf{y}) - \| \mathbf{x}-\mathbf{y} \|^2_2}{\varepsilon_1} \right)}} \mu_{0_i}(\mathbf{x}) \mu_{1_j}(\mathbf{y}), \\
    \varphi_{ij}(\mathbf{x}) = \displaystyle{- \varepsilon_1 \log{\int_{\mathcal{Y}} \exp{\left(\frac{\psi_{ij}(\mathbf{y}) - \| \mathbf{x}-\mathbf{y} \|^2_2}{\varepsilon_1} \right)}\mu_{1_j}(\mathbf{y}) d\mathbf{y}}}.
\end{gather*}
This implies
\begin{align}
    p_{ij}(\mathbf{y} | \mathbf{x}) &= \displaystyle{\frac{p_{ij}(\mathbf{x}, \mathbf{y})}{\int_{\mathbf{y}^{\prime}} {p_{ij}(\mathbf{x}, \mathbf{y}^{\prime}) d\mathbf{y}^{\prime}}}} \\
    &= \displaystyle{\frac{\exp{\left(\frac{\varphi_{ij}(\mathbf{x})}{\varepsilon_1} \right)} \mu_{0_i}(\mathbf{x}) \exp{\left(\frac{\psi_{ij}(\mathbf{y}) - \| \mathbf{x}-\mathbf{y} \|^2_2}{\varepsilon_1} \right)} \mu_{1_j}(\mathbf{y})}{\exp{\left(\frac{\varphi_{ij}(\mathbf{x})}{\varepsilon_1} \right)} \mu_{0_i}(\mathbf{x}) \int_{\mathbf{y}^{\prime}} {\exp{\left(\frac{\psi_{ij}(\mathbf{y}^{\prime}) - \| \mathbf{x}-\mathbf{y}^{\prime} \|^2_2}{\varepsilon_1} \right)} \mu_{1_j}(\mathbf{y}^{\prime}) d\mathbf{y}^{\prime}}}} \nonumber \\[.1in]
    & = \displaystyle{\frac{\exp{\left(\frac{\psi_{ij}(\mathbf{y}) - \| \mathbf{x}-\mathbf{y} \|^2_2}{\varepsilon_1} \right)} \mu_{1_j}(\mathbf{y})}{\int_{\mathbf{y}^{\prime}} {\exp{\left(\frac{\psi_{ij}(\mathbf{y}^{\prime}) - \| \mathbf{x}-\mathbf{y}^{\prime} \|^2_2}{\varepsilon_1} \right)} \mu_{1_j}(\mathbf{y}^{\prime})d\mathbf{y}^{\prime}}}}
\label{eq:condtranspmap}
\end{align}
\begin{align}
    \log{p_{ij}(\mathbf{y} | \mathbf{x})} &= \log{\left(\exp{\left(\frac{\psi_{ij}(\mathbf{y}) - \| \mathbf{x}-\mathbf{y} \|^2_2}{\varepsilon_1} \right)} \mu_{1_j}(\mathbf{y})\right)} - \log{\left(\int_{\mathbf{y}^{\prime}} {\exp{\left(\frac{\psi_{ij}(\mathbf{y}^{\prime}) - \| \mathbf{x}-\mathbf{y}^{\prime} \|^2_2}{\varepsilon_1} \right)} \mu_{1_j}(\mathbf{y}^{\prime})d\mathbf{y}^{\prime}}\right)} \nonumber \\[.1in]
    &= \left(\frac{\psi_{ij}(\mathbf{y}) - \| \mathbf{x}-\mathbf{y} \|^2_2}{\varepsilon_1} \right) + \log{\mu_{1_j}(\mathbf{y})} - \log{\left(\int_{\mathbf{y}^{\prime}} {\exp{\left(\frac{\psi_{ij}(\mathbf{y}^{\prime}) - \| \mathbf{x}-\mathbf{y}^{\prime} \|^2_2}{\varepsilon_1} \right)} \mu_{1_j}(\mathbf{y}^{\prime})d\mathbf{y}^{\prime}}\right)} \nonumber \\[.1in]
    &= \left(\frac{\psi_{ij}(\mathbf{y}) - \| \mathbf{x}-\mathbf{y} \|^2_2}{\varepsilon_1} \right) + \log{\mu_{1_j}(\mathbf{y})} + \varepsilon^{-1} \varphi_{ij}(\mathbf{x}) \label{eq:cond_pij}
\end{align}
Differentiating with respect to $\mathbf{x}$,
\begin{align}
    \nabla_{\mathbf{x}} \log{p_{ij}(\mathbf{y} | \mathbf{x})}
    &= -2 \varepsilon^{-1} (\mathbf{x} - \mathbf{y})^{T} +\varepsilon^{-1} \nabla \varphi_{ij}(\mathbf{x}) . \label{eq:JacLogCondP}
\end{align}
Next, using the definition of $\varphi_{ij}(\mathbf{x})$,
\begin{align}
    \nabla \varphi_{ij}(\mathbf{x})
    &= 2 \displaystyle{\frac{\int_{\mathbf{y}} (\mathbf{x} - \mathbf{y})^T\exp{\left(\frac{\psi_{ij}(\mathbf{y}) - \| \mathbf{x}-\mathbf{y} \|^2_2}{\varepsilon_1} \right)} \mu_{1_j}(\mathbf{y}) d\mathbf{y}}{\int_{\mathbf{y}^{\prime}} {\exp{\left(\frac{\psi_{ij}(\mathbf{y}^{\prime}) - \| \mathbf{x}-\mathbf{y}^{\prime} \|^2_2}{\varepsilon_1} \right)} \mu_{1_j}(\mathbf{y}^{\prime}) d\mathbf{y}^{\prime}}}} \nonumber \\[.1in]
    &= 2\mathbf{x}^T - 2 \displaystyle \int_{\mathbf{y}} \mathbf{y}^{T} \displaystyle{\frac{\exp{\left(\frac{\psi_{ij}(\mathbf{y}) - \| \mathbf{x}-\mathbf{y} \|^2_2}{\varepsilon_1} \right)} \mu_{1_j}(\mathbf{y}) d\mathbf{y}}{\int_{\mathbf{y}^{\prime}} {\exp{\left(\frac{\psi_{ij}(\mathbf{y}^{\prime}) - \| \mathbf{x}-\mathbf{y}^{\prime} \|^2_2}{\varepsilon_1} \right)} \mu_{1_j}(\mathbf{y}^{\prime}) d\mathbf{y}^{\prime}}}} \nonumber \\
    &= 2\mathbf{x}^T - 2 \displaystyle \int_{\mathbf{y}} \mathbf{y}^T p_{ij}(\mathbf{y} | \mathbf{x}) d\mathbf{y} \nonumber \\
    &= 2(\mathbf{x} - T_{ij}(\mathbf{x}))^T \label{eq:JacPhi}
\end{align}
Substituting Eq.~\ref{eq:JacPhi} into Eq.~\ref{eq:JacLogCondP} yields
\begin{align}
    \nabla_{\mathbf{x}} \log{p_{ij}(\mathbf{y} | \mathbf{x})}
    &= -2 \varepsilon_{1}^{-1} (\mathbf{x} - \mathbf{y})^{T} + 2\varepsilon_{1}^{-1} (\mathbf{x} - T_{ij}(\mathbf{x}))^T \nonumber \\
    &= 2\varepsilon_{1}^{-1} (\mathbf{y} - T_{ij}(\mathbf{x}))^T
\end{align}
Applying the logarithmic derivative formula 
\begin{align}
\label{eq:nabla_p_ij}
    \nabla_{\mathbf{x}} p_{ij}(\mathbf{y} | \mathbf{x}) &= \nabla_{x} \log{p_{ij}(\mathbf{y} | \mathbf{x})} p_{ij}(\mathbf{y} | \mathbf{x}) \nonumber \\
    &= 2\varepsilon_{1}^{-1} (\mathbf{y} - T_{ij}(\mathbf{x}))^{T} p_{ij}(\mathbf{y} | \mathbf{x}).
\end{align}
Hence, the Jacobian of $T_{ij}$ can be defined as 
\begin{align}
    \nabla_{\mathbf{x}} T_{ij}(\mathbf{x}) &= \displaystyle \int \mathbf{y} \nabla_{x} p_{ij}(\mathbf{y} | \mathbf{x}) d\mathbf{y} \nonumber \\
    &= 2\varepsilon_{1}^{-1} \displaystyle \int \mathbf{y} (\mathbf{y} - T_{ij}(\mathbf{x}))^{T} p_{ij}(\mathbf{y} | \mathbf{x}) d\mathbf{y} \nonumber\\
    &=  2\varepsilon_{1}^{-1} \left(\displaystyle \int \mathbf{y} \mathbf{y}^T p_{ij}(\mathbf{y} | \mathbf{x}) d\mathbf{y} - \int \mathbf{y} p_{ij}(\mathbf{y} | \mathbf{x}) T^T_{ij}(\mathbf{x}) d\mathbf{y}\right)  \nonumber \\
    &= 2\varepsilon_{1}^{-1} \left(\displaystyle \int \mathbf{y} \mathbf{y}^T p_{ij}(\mathbf{y} | \mathbf{x}) d\mathbf{y} - T_{ij}(\mathbf{x}) T^T_{ij}(\mathbf{x}) \right) \nonumber \\
    &= 2\varepsilon_{1}^{-1} \left(\mathbb{E}_{p_{ij}(\mathbf{y} | \mathbf{x})} \left[\mathbf{y}\mathbf{y}^T \right] - \mathbb{E}_{p_{ij}(\mathbf{y} | \mathbf{x})} \left[\mathbf{y}\right] \mathbb{E}^T_{p_{ij}(\mathbf{y} | \mathbf{x})} \left[\mathbf{y}\right] \right) = 2 \varepsilon_{1}^{-1} \text{Cov}_{{p_{ij}(\mathbf{y}|\mathbf{x})}}({Y} | {X}=\mathbf{x}) = 2\varepsilon_{1}^{-1} \text{Cov}^{ij}_{\mathbf{y}|\mathbf{x}}
    \label{eq:TijDeriv}
\end{align}
Accordingly,
\begin{align}
    T_{ij}(\mathbf{x}) &= \mathbf{E}_{p_{ij}(\mathbf{y}|\mathbf{x})}[\mathbf{y}] = \mathbf{m}^{ij}_{\mathbf{y}|\mathbf{x}} 
    \label{eq:TGauss}\\
    \nabla T_{ij}(\mathbf{x}) &= 2 \varepsilon_{1}^{-1} \text{Cov}^{ij}_{\mathbf{y}|\mathbf{x}} \label{eq:dTGauss}
\end{align}
Now, we focus on the derivative of the mixture weights $\tilde{\omega}_{ij}$ in Eq.~\ref{eq:OMT_cpl}. Recall that
\begin{align*}
    \tilde{\omega}_{ij}(\mathbf{x}) &=  \left(\displaystyle{\frac{\omega_{ij} \int_{\mathbf{y}} p_{ij}(\mathbf{x}, \mathbf{y}) d\mathbf{y}}{\displaystyle \sum_{r,k} \omega_{rk} \int_{\mathbf{y}^{\prime}} p_{rk}(\mathbf{x}, \mathbf{y}^{\prime}) d\mathbf{y}^{\prime}}} \right)
\end{align*}
    Following the derivation in earlier section, Eq.~\ref{eq:omega_opt}, the optimality conditions yield
\begin{gather}
    \omega_{ij} = \displaystyle{\exp{\left(\frac{\lambda_i+ \tau_j - \mathcal{L}_{p_{ij}}}{\varepsilon_2}\right)}} \alpha_{0_i} \alpha_{1_j} \label{eq:omega_opt_2} 
\end{gather}
where
\begin{gather*}
    \mathcal{L}_{p_{ij}} = \mathbb{E}_{p_{ij}}\left[\varphi_{ij}(\mathbf{x}) \right] + \mathbb{E}_{p_{ij}}\left[\psi_{ij}(\mathbf{y}) \right] = \displaystyle \int \int \left(\varphi_{ij}(\mathbf{x}) + \psi_{ij}(\mathbf{y}) \right) p_{ij}(\mathbf{x}, \mathbf{y}) d\mathbf{x} d\mathbf{y}
\end{gather*}
and the corresponding dual potentials are given by
\begin{gather}
    \lambda_i = -\varepsilon_2 \displaystyle{\log{ \sum_{j} \exp{\left(\frac{\tau_j - \mathcal{L}_{p_{ij}}}{\varepsilon_2} \right)} \alpha_{1_j}}}, \hspace{.25in}  \tau_j = -\varepsilon_2 \displaystyle{\log{ \sum_{i} \exp{\left(\frac{\lambda_i - \mathcal{L}_{p_{ij}}}{\varepsilon_2} \right)} \alpha_{0_i}}}.
    \label{eq:discrete_potentials} 
\end{gather}
Expanding the component distributions $p_{ij}$ in exponential form and substituting the optimality conditions from Eq.~\ref{eq:omega_opt_2}-\ref{eq:discrete_potentials}, we obtain
\begin{align*}
    \tilde{\omega}_{ij}(\mathbf{x}) &=  \displaystyle{\frac{\omega_{ij} \mu_{0_i}(\mathbf{x}) \displaystyle{\exp{\left(\frac{\varphi_{ij}(\mathbf{x})}{\varepsilon_1}\right)}}  \int_\mathbf{y} \displaystyle{\exp{\left(\frac{\psi_{ij}(\mathbf{y}) - \| \mathbf{x} - \mathbf{y} \|^2_2}{\varepsilon_1}\right)}} \mu_{1_j}(\mathbf{y}) d\mathbf{y}}{\displaystyle \sum_{r,k} \displaystyle \displaystyle{\exp{\left(\frac{\lambda_r + \tau_k - \mathcal{L}_{p_{rk}}}{\varepsilon_2}\right)}} \alpha_{0_r}  \alpha_{1_k} \mu_{0_r}(\mathbf{x}) \displaystyle{\exp{\left(\frac{\varphi_{rk}(\mathbf{x})}{\varepsilon_1}\right)}} \int_{\mathbf{y}^{\prime}} \displaystyle{\exp{\left(\frac{\psi_{rk}(\mathbf{y}^{\prime}) - \| \mathbf{x} - \mathbf{y}^{\prime} \|^2_2}{\varepsilon_1}\right)}} \mu_{1_j}(\mathbf{y}^{\prime}) d\mathbf{y}^{\prime}}}.
\end{align*}
Having exponential terms cancel out, yielding the simplified expression.
\begin{align*}
    \tilde{\omega}_{ij}(\mathbf{x}) &= \displaystyle{\frac{\omega_{ij}  \mu_{0_i}(\mathbf{x})\displaystyle{\exp{\left(\frac{\varphi_{ij}(\mathbf{x})}{\varepsilon_1}\right)}} \displaystyle{\exp{\left(\frac{-\varphi_{ij}(\mathbf{x})}{\varepsilon_1}\right)}} }{\displaystyle \sum_{r} \displaystyle{\exp{\left(\frac{\lambda_r}{\varepsilon_2}\right)}} \alpha_{0_r} \mu_{0_r}(\mathbf{x}) \sum_{k}  \displaystyle{\exp{\left(\frac{\tau_k - \mathcal{L}_{p_{rk}}(\mathbf{x}, \mathbf{y})}{\varepsilon_2}\right)}} \displaystyle{\exp{\left(\frac{\varphi_{rk}(\mathbf{x})}{\varepsilon_1}\right)}} \displaystyle{\exp{\left(\frac{-\varphi_{rk}(\mathbf{x})}{\varepsilon_1}\right)}} \alpha_{1_k}}} \\[.1in]
    &= \displaystyle{\frac{ \omega_{ij} \mu_{0_i}(\mathbf{x})}{\displaystyle \sum_{r} \displaystyle{\exp{\left(\frac{\lambda_r}{\varepsilon_2}\right)}} \alpha_{0_r} \mu_{0_r}(\mathbf{x}) \underbrace{\sum_{k}  \displaystyle{\exp{\left(\frac{\tau_k - \mathcal{L}_{p_{rk}}(\mathbf{x}, \mathbf{y})}{\varepsilon_2}\right)}} \alpha_{1_k} }_{\displaystyle{\exp{\left(\frac{-\lambda_r}{\varepsilon_2}\right)}}}}} \\
    &= \displaystyle{\frac{\omega_{ij} \mu_{0_i}(\mathbf{x})}{\displaystyle \sum_{r} \alpha_{0_r} \mu_{0_r}(\mathbf{x})}} = \displaystyle{\frac{\omega_{ij} \mu_{0_i}(\mathbf{x})}{\nu_{0}(\mathbf{x})}} \ .
\end{align*}
Differentiating $\tilde{\omega}_{ij}(\mathbf{x})$ with respect to $\mathbf{x}$ gives
\begin{align}
\label{eq:OmegaDeriv}
    \nabla  \tilde{\omega}_{ij}(\mathbf{x}) &= \displaystyle{\frac{\omega_{ij}}{\nu_0(\mathbf{x})}} \nabla \mu_{0_{i}}(\mathbf{x}) - \displaystyle{\frac{\tilde{\omega}_{ij}(\mathbf{x})}{\nu_0(\mathbf{x})}} \nabla \nu_{0}(\mathbf{x}) \nonumber \\
    &= \displaystyle{\frac{\omega_{ij}}{\nu_0(\mathbf{x})}} \nabla \mu_{0_{i}}(\mathbf{x}) - \displaystyle{\frac{\tilde{\omega}_{ij}(\mathbf{x})}{\nu_0(\mathbf{x})}} \sum_{r,k} \omega_{rk} \nabla \mu_{0_{r}}(\mathbf{x}) \nonumber \\
    &= \tilde{\omega}_{ij}(\mathbf{x}) \left( \underbrace{\nabla \log \mu_{0_{i}}(\mathbf{x})}_{g_i(\mathbf{x})} - \underbrace{\sum_{r,k} \tilde{\omega}_{rk}(\mathbf{x}) \nabla \log \mu_{0_{r}}(\mathbf{x})}_{\overline{g(\mathbf{x})}} \right) 
\end{align}
By substituting the expressions in Eqs.~\ref{eq:TijDeriv}, \ref{eq:TGauss}, and \ref{eq:OmegaDeriv} into Eq.~\ref{eq:DerivOMT}, the gradient can be compactly expressed as
\begin{align*}
\nabla T_{\textsc{omt}}(\mathbf{x}) &= \displaystyle \sum_{i,j} 2 \varepsilon_{1}^{-1} \tilde{\omega}_{ij}(\mathbf{x})   \ \text{Cov}^{ij}_{\mathbf{y}|\mathbf{x}} + \tilde{\omega}_{ij}(\mathbf{x}) \mathbf{m}^{ij}_{\mathbf{y|\mathbf{x}}}  \left(g_i(\mathbf{x}) - \overline{g(\mathbf{x})}\right)^T
\end{align*}
Taking norms on both sides and applying the triangle inequality, we obtain
\begin{align}
\label{eq:Jac_norm_b1}
    \| \nabla T_{\textsc{omt}}(\mathbf{x}) \|_{\text{op}} &= \|2 \varepsilon_{1}^{-1}  \displaystyle \sum_{i,j} \tilde{\omega}_{ij}(\mathbf{x})   \ \text{Cov}^{ij}_{\mathbf{y}|\mathbf{x}} + \displaystyle \sum_{i,j} \tilde{\omega}_{ij}(\mathbf{x}) \mathbf{m}^{ij}_{\mathbf{y|\mathbf{x}}}  \left(g_i(\mathbf{x}) - \overline{g(\mathbf{x})}\right)^T   \|_{\text{op}}  \nonumber \\
    &  \leq \displaystyle 2 \varepsilon_{1}^{-1}  \sum_{i,j} \tilde{\omega}_{ij}(\mathbf{x}) \| {\text{Cov}}^{ij}_{\mathbf{y}|\mathbf{x}} \|_{\text{op}} +   \sum_{i,j} \tilde{\omega}_{ij}(\mathbf{x}) \| \mathbf{m}^{ij}_{\mathbf{y|\mathbf{x}}} \left(g_i(\mathbf{x}) - \overline{g(\mathbf{x})}\right)^T \|_{\text{op}} \nonumber \\
    & = \displaystyle 2 \varepsilon_{1}^{-1}  \sum_{i,j} \tilde{\omega}_{ij}(\mathbf{x}) \| {\text{Cov}}^{ij}_{\mathbf{y}|\mathbf{x}} \|_{\text{op}} +   \underbrace{\sum_{i,j} \tilde{\omega}_{ij}(\mathbf{x}) \| \mathbf{m}^{ij}_{\mathbf{y|\mathbf{x}}}  \| \  \|\left(g_i(\mathbf{x}) - \overline{g(\mathbf{x})}\right)^T \| }_{s(\mathbf{x})}
\end{align}
Regularity conditions ensure that all weighted norms exist and are finite. Consequently,
\begin{align}
\label{eq:l_nu}
  L_{\nu_0} &:= \displaystyle \sup_{\mathbf{x}} \| \nabla T_{\textsc{omt}}(\mathbf{x}) \| = 2 \varepsilon_1^{-1} u_{\text{max}} + s_{\text{max}} < \infty  
\end{align}
where $s_{\max}  = \displaystyle \sup_{\mathbf{x}} \{ s(\mathbf{x})\}$, and $u_{\max}$ is defined in Eq.~\ref{eq:U_max}.
\end{proof}
\subsection{Tilt Stability of $T_{\textsc{omt}}$}
\label{sec:app_tiltstability}
%
%
\begin{lemma}[Lemma 40 \cite{chen2022localization}]
\label{lem:tilt-stable}
Let $\nu$ be a probability measure on $\mathbb{R}^d$ with finite second moment. Suppose that for every $\mathbf{v} \in \mathbb{R}^d$ one has
$$ \|\textup{Cov}(\mathcal{T}_{\mathbf{v}} \nu) \|_{\textup{op}}\le \alpha $$
Then $\nu$ is  tilt-stable with constant $\alpha \in \mathbb{R}^+$ with respect to the function $f(x, y) = \frac{1}{2} \|\mathbf{x} - \mathbf{y}\|_2^2$.
\end{lemma}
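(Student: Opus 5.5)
The plan is to work entirely with the log-Laplace transform of $\nu$ and to recast both sides of the tilt-stability inequality in terms of it. Define $\Lambda(\mathbf{v}) := \log \int e^{\langle \mathbf{v}, \mathbf{z}\rangle}\, d\nu(\mathbf{z})$. The hypothesis presupposes that every tilt $\mathcal{T}_{\mathbf{v}}\nu$ exists and has a covariance, so I take $\Lambda$ to be finite on all of $\mathbb{R}^d$. Then $\Lambda$ is smooth there, since differentiation under the integral is justified by dominated convergence on a neighbourhood of each $\mathbf{v}$. The two standard identities are
\begin{equation*}
\nabla \Lambda(\mathbf{v}) = \mathbb{E}_{\mathcal{T}_{\mathbf{v}}\nu}[\mathbf{x}] =: \mathbf{m}_{\mathbf{v}}, \qquad \nabla^2 \Lambda(\mathbf{v}) = \textup{Cov}(\mathcal{T}_{\mathbf{v}}\nu).
\end{equation*}
In particular $\mathbf{m}_{\mathbf{0}} = \mathbb{E}_\nu[\mathbf{x}]$. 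Combined with the hypothesis, this says $\Lambda$ is convex and $\alpha$-smooth: $0 \preceq \nabla^2\Lambda \preceq \alpha I$ everywhere.

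Next I express the KL term as a Bregman divergence of $\Lambda$. The density of $\mathcal{T}_{\mathbf{v}}\nu$ with respect to $\nu$ is $e^{\langle \mathbf{v},\mathbf{x}\rangle - \Lambda(\mathbf{v})}$. Using $\Lambda(\mathbf{0}) = 0$, this gives
\begin{equation*}
D_{KL}(\mathcal{T}_{\mathbf{v}}\nu \,\|\, \nu) = \langle \mathbf{v}, \mathbf{m}_{\mathbf{v}}\rangle - \Lambda(\mathbf{v}) = \Lambda(\mathbf{0}) - \Lambda(\mathbf{v}) - \langle \nabla\Lambda(\mathbf{v}), \mathbf{0}-\mathbf{v}\rangle .
\end{equation*}
The right-hand side is exactly the Bregman divergence $D_\Lambda(\mathbf{0},\mathbf{v})$.

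The key step is the co-coercivity (Baillon--Haddad type) inequality for convex $\alpha$-smooth functions:
\begin{equation*}
D_\Lambda(\mathbf{u},\mathbf{v}) \ge \frac{1}{2\alpha}\|\nabla\Lambda(\mathbf{u}) - \nabla\Lambda(\mathbf{v})\|_2^2 .
\end{equation*}
I would prove it in the standard way. Fix $\mathbf{v}$ and set $h(\mathbf{w}) := \Lambda(\mathbf{w}) - \langle \nabla\Lambda(\mathbf{v}),\mathbf{w}\rangle$. This $h$ is convex and $\alpha$-smooth, and it is minimized at $\mathbf{w}=\mathbf{v}$. Apply the descent lemma to $h$ at the point $\mathbf{u} - \alpha^{-1}\nabla h(\mathbf{u})$. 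This yields $h(\mathbf{v}) \le h(\mathbf{u}) - \frac{1}{2\alpha}\|\nabla h(\mathbf{u})\|^2$, which rearranges to the displayed inequality.

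Taking $\mathbf{u}=\mathbf{0}$ gives
\begin{equation*}
\tfrac12\|\mathbf{m}_{\mathbf{v}} - \mathbf{m}_{\mathbf{0}}\|_2^2 \le \alpha\, D_{KL}(\mathcal{T}_{\mathbf{v}}\nu\|\nu).
\end{equation*}
This is precisely tilt stability with $f(\mathbf{x},\mathbf{y}) = \frac12\|\mathbf{x}-\mathbf{y}\|_2^2$ in the sense of Definition~\ref{def:tilt}. An equivalent route is to note that the Legendre dual $\Lambda^*$ is $\alpha^{-1}$-strongly convex, that $\Lambda^*(\mathbf{m}_{\mathbf{0}}) = 0$ and $\nabla\Lambda^*(\mathbf{m}_{\mathbf{0}})=\mathbf{0}$, and that $D_{KL} = \Lambda^*(\mathbf{m}_{\mathbf{v}})$.

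The main obstacle I expect is not the convex-analytic inequality but its analytic justification. One must ensure that $\Lambda$ is finite and twice differentiable on all of $\mathbb{R}^d$, with the interchange of derivative and integral valid. Finite second moment of $\nu$ alone does not give this. I would therefore read the hypothesis as including the existence of every tilt, and obtain local exponential moments, hence smoothness, from finiteness of $\Lambda$ on open neighbourhoods. If $\Lambda$ were finite only on a convex open domain $U$, I would instead run the descent-lemma argument along the segment from $\mathbf{v}$ to $\mathbf{0}$, which lies in $U$. This restricts attention to $\mathbf{v}\in U$, and there the same bound holds.
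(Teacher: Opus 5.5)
The paper does not prove this lemma; it imports it verbatim from Chen--Eldan and only invokes it later, so there is no in-paper argument to compare against.

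\textbf{Your main argument is correct and self-contained.} With $\Lambda<\infty$ on $\mathbb{R}^d$, three steps give exactly $\tfrac12\|\mathbb{E}_{\mathcal{T}_{\mathbf{v}}\nu}[\mathbf{x}]-\mathbb{E}_{\nu}[\mathbf{x}]\|_2^2\le\alpha\,D_{KL}(\mathcal{T}_{\mathbf{v}}\nu\,\|\,\nu)$: the identity $D_{KL}(\mathcal{T}_{\mathbf{v}}\nu\,\|\,\nu)=D_\Lambda(\mathbf{0},\mathbf{v})$, the hypothesis read as $0\preceq\nabla^2\Lambda\preceq\alpha I$, and co-coercivity at $\mathbf{u}=\mathbf{0}$. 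That is Definition~\ref{def:tilt} with $f(\mathbf{x},\mathbf{y})=\tfrac12\|\mathbf{x}-\mathbf{y}\|_2^2$. Your dual route via the $\alpha^{-1}$-strong convexity of $\Lambda^*$ is the same fact seen from the conjugate side.

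What your version buys over the bare citation is that it makes the operative hypothesis explicit. Finite second moment, as written in the statement, does not even guarantee that any nontrivial tilt exists. The argument instead needs $\Lambda$ finite on all of $\mathbb{R}^d$, and you correctly read this into the quantifier ``for every $\mathbf{v}\in\mathbb{R}^d$''.

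\textbf{You should drop the closing fallback} for $\Lambda$ finite only on a convex open $U$; it does not work as described. The descent-lemma step evaluates $h$ at $\mathbf{u}-\alpha^{-1}\nabla h(\mathbf{u})$ and uses that $\mathbf{v}$ minimizes $h$ globally. Neither is a statement about the segment $[\mathbf{0},\mathbf{v}]$, and the first point need not lie in $U$.

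A Hessian bound along the segment alone cannot substitute when $d\ge 2$. With $\Sigma_t:=\nabla^2\Lambda(t\mathbf{v})$ one has $\mathbf{m}_{\mathbf{v}}-\mathbf{m}_{\mathbf{0}}=\int_0^1\Sigma_t\mathbf{v}\,dt$ and $D_{KL}(\mathcal{T}_{\mathbf{v}}\nu\,\|\,\nu)=\int_0^1 t\,\mathbf{v}^T\Sigma_t\mathbf{v}\,dt$. Take $\alpha=1$ and $\mathbf{v}=\mathbf{e}_1$, with $\Sigma_t$ the projection onto $(1,1)/\sqrt2$ for $t<\tfrac12$ and onto $(0.1,\sqrt{0.99})$ for $t\ge\tfrac12$. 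Then the left side $\tfrac12\|\int_0^1\Sigma_t\mathbf{v}\,dt\|^2\approx 0.077$ exceeds $\int_0^1 t\,\mathbf{v}^T\Sigma_t\mathbf{v}\,dt=0.06625$. This is the usual failure of co-coercivity for smooth convex functions on proper open domains.

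Since the lemma assumes the covariance bound for every $\mathbf{v}\in\mathbb{R}^d$, this case lies outside its scope, and your main argument stands.
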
 
\vspace{.1in}
\begin{coroll}
\label{coroll:comp_tilt_stability}
Let $\mu_{i}(\mathbf{x})$ and $\mu_{j}(\mathbf{y})$ be two measures with finite second moments.  
Then the entropic transport map from $\mu_i$ to $\mu_j$ is tilt-stable with stability constant
\begin{gather}
\label{eq:comp_tilt_cnt}
c_{ij} = \sup_{\mathbf{x}} \| \textsc{Cov}^{ij}_{\mathbf{y}|\mathbf{x}}\|_{\textsc{op}} \ .
\end{gather}
\end{coroll}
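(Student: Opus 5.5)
The plan is to apply Lemma~\ref{lem:tilt-stable} to the conditional measure $p_{ij}(\cdot\,|\,\mathbf{x})$ associated with the component entropic plan. The entropic map is $T_{ij}(\mathbf{x})=\mathbb{E}_{p_{ij}(\mathbf{y}|\mathbf{x})}[\mathbf{y}]$ by Eq.~\ref{eq:TGauss}. From Eq.~\ref{eq:condtranspmap}, the conditional density has the Gibbs form
\begin{gather*}
p_{ij}(\mathbf{y}\,|\,\mathbf{x}) \propto \exp\left(\frac{\psi_{ij}(\mathbf{y}) - \|\mathbf{y}\|^2_2}{\varepsilon_1} + \frac{2\langle \mathbf{x},\mathbf{y}\rangle}{\varepsilon_1}\right)\mu_{1_j}(\mathbf{y}),
\end{gather*}
since the factor $e^{-\|\mathbf{x}\|^2/\varepsilon_1}$ cancels in the normalization. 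Define the base measure $\rho_{ij}(d\mathbf{y}) \propto \exp\left((\psi_{ij}(\mathbf{y}) - \|\mathbf{y}\|^2)/\varepsilon_1\right)\mu_{1_j}(d\mathbf{y})$, which is $p_{ij}(\cdot|\mathbf{0})$. Then $p_{ij}(\cdot|\mathbf{x}) = \mathcal{T}_{2\mathbf{x}/\varepsilon_1}\rho_{ij}$. The family of conditional laws is therefore exactly the family of exponential tilts of a single measure. Moreover, tilting $p_{ij}(\cdot|\mathbf{x})$ by any $\mathbf{v}$ gives $p_{ij}(\cdot|\mathbf{x}+\varepsilon_1\mathbf{v}/2)$, so the tilt family is closed under further tilts.

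Next, I verify the hypotheses of Lemma~\ref{lem:tilt-stable} for $\rho_{ij}$ (equivalently, for any $p_{ij}(\cdot|\mathbf{x})$).
\begin{itemize}
\item \textbf{Finite second moment.} This follows from the finite second moment of $\mu_{1_j}$. The Schr\"odinger potential satisfies $\psi_{ij}(\mathbf{y})\le \|\mathbf{x}-\mathbf{y}\|^2 - \varphi_{ij}(\mathbf{x})$-type bounds from the dual optimality conditions Eq.~\ref{eq:phi_opt}--\ref{eq:psi_opt}, so the density of $p_{ij}(\cdot|\mathbf{x})$ with respect to $\mu_{1_j}$ is controlled. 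Alternatively, one can invoke the regularity condition on $\mathbb{E}_{p_{ij}(\mathbf{y}|\mathbf{x})}[\mathbf{y}\mathbf{y}^T]$ directly.
\item \textbf{Uniform covariance bound.} For every $\mathbf{v}$,
\begin{gather*}
\|\mathrm{Cov}(\mathcal{T}_{\mathbf{v}}\rho_{ij})\|_{\textsc{op}} = \|\mathrm{Cov}^{ij}_{\mathbf{y}|\mathbf{x}=\varepsilon_1\mathbf{v}/2}\|_{\textsc{op}} \le \sup_{\mathbf{x}}\|\mathrm{Cov}^{ij}_{\mathbf{y}|\mathbf{x}}\|_{\textsc{op}} = c_{ij}.
\end{gather*}
\end{itemize}
Lemma~\ref{lem:tilt-stable} then gives $\alpha=c_{ij}$ for $f(\mathbf{x},\mathbf{y})=\tfrac12\|\mathbf{x}-\mathbf{y}\|^2$. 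Written in terms of the map, the lemma's conclusion $\tfrac12\|T_{ij}(\mathbf{x}')-T_{ij}(\mathbf{x})\|^2 \le c_{ij} D_{KL}(p_{ij}(\cdot|\mathbf{x}')\|p_{ij}(\cdot|\mathbf{x}))$ is the tilt stability of the entropic map.

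The main obstacle is finiteness of $c_{ij}$, since the corollary as stated only assumes finite second moments. In the Gaussian case (Sec.~\ref{sec:gaussian_measures}) the conditional covariance is constant in $\mathbf{x}$, which settles it. In general I would bound $\mathrm{Cov}^{ij}_{\mathbf{y}|\mathbf{x}}\preceq \mathbb{E}_{p_{ij}(\mathbf{y}|\mathbf{x})}[\mathbf{y}\mathbf{y}^T]$ and appeal to the regularity conditions in Definition~\ref{def:regularity_conditions}, as in Sec.~\ref{sec:app_cov}. Failing that, I would accept the statement with the convention that the constant $c_{ij}$ may be $+\infty$, and use finiteness only downstream under regularity.
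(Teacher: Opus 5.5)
Your proof is essentially the paper's: the paper rewrites $p_{ij}(\mathbf{y}|\mathbf{x})$ through the Brenier potentials $\tilde{\varphi}_{ij},\tilde{\psi}_{ij}$, shows $\mathcal{T}_{\mathbf{v}}p_{ij}(\cdot|\mathbf{x})=p_{ij}(\cdot|\mathbf{x}+\tfrac12\varepsilon_1\mathbf{v})$, bounds the covariance of every tilt by $c_{ij}$, and invokes Lemma~\ref{lem:tilt-stable}, which is your base-measure argument with $\rho_{ij}=p_{ij}(\cdot|\mathbf{0})$ written pointwise in $\mathbf{x}$. The paper also never proves $c_{ij}<\infty$ from the finite-second-moment hypothesis (it simply defines $c_{ij}$ as the supremum), so your caveat is fair; note, however, that your fallback via Definition~\ref{def:regularity_conditions} would not close this gap, since those conditions only control the weighted quantity $\frac{\mu_{0_i}(\mathbf{x})}{\nu_0(\mathbf{x})}\|\mathbb{E}_{p_{ij}(\mathbf{y}|\mathbf{x})}[\mathbf{y}\mathbf{y}^T]\|_{\text{op}}$ pointwise, not a uniform unweighted bound.
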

\begin{proof}
Our derivation follows the same proof for Lemma 2.2 and Corollary 2.3 in \cite{divol2025tight}.
\begin{align*}
    p_{ij}(\mathbf{y} | \mathbf{x}) &= \displaystyle \exp\left({\frac{\varphi_{ij}(\mathbf{x}) +\psi_{ij}(\mathbf{y}) - \| \mathbf{x}-\mathbf{y} \|^2_2}{\varepsilon_1}}\right)\mu_{1_j}(\mathbf{y}) \\
    &= \displaystyle \exp{\left(\frac{\langle \mathbf{x}, \mathbf{y} \rangle - \tilde{\varphi}_{ij}(\mathbf{x}) - \tilde{\psi}_{ij}(\mathbf{y})}{1/2\varepsilon_1}\right)}\mu_{1_j}(\mathbf{y}) 
\end{align*}
where $\tilde{\varphi}_{ij}(\mathbf{x}) := 1/2(\|x\|^2 - \varphi_{ij}(\mathbf{x})) $ and $\tilde{\psi}_{ij}(\mathbf{y}) := 1/2(\|y\|^2 - \psi_{ij}(\mathbf{y}))$, which are also known as Brenier potentials. 

\begin{align*}
    \mathbb{E}_{p_{ij}(\mathbf{y}|\mathbf{x})}[e^{\langle \mathbf{v}, \mathbf{y} \rangle}] &= \displaystyle \exp{\frac{- 2\tilde{\varphi}_{ij}(\mathbf{x})}{\varepsilon_1}} \int \exp{\left(\frac{\langle \mathbf{x}, \mathbf{y} \rangle + 1/2\varepsilon_1\langle \mathbf{v}, \mathbf{y} \rangle - \tilde{\psi}_{ij}(\mathbf{y})}{1/2\varepsilon_1}\right)} \mu_{1_j}(\mathbf{y}) d\mathbf{y} \\
    &= \displaystyle \exp{\frac{- 2\tilde{\varphi}_{ij}(\mathbf{x})}{\varepsilon_1}} \underbrace{\int \exp{\left(\frac{\langle \mathbf{x} + 1/2\varepsilon_1\mathbf{v}, \mathbf{y} \rangle  - \tilde{\psi}_{ij}(\mathbf{y})}{1/2\varepsilon_1}\right)} \mu_{1_j}(\mathbf{y}) d\mathbf{y}}_{\exp{\left( \frac{\tilde{\varphi}_{ij}(\mathbf{x} + 1/2\varepsilon_1 \mathbf{v})}{1/2 \varepsilon_1} \right)}} \\
    &= \displaystyle \exp{\left( \frac{\tilde{\varphi}_{ij}(\mathbf{x} + 1/2\varepsilon_1 \mathbf{v}) - \tilde{\varphi}_{ij}(\mathbf{x})}{1/2 \varepsilon_1} \right)} = h_{{\varepsilon_1}_{ij}}
\end{align*}
Based on Definition~\ref{def:tilt}, the tilted coupling measure is
\begin{align}
\label{eq:tilt_v_measure_ij}
\mathcal{T}_{\mathbf{v}}p_{ij}(\mathbf{y}|\mathbf{x}) &= \displaystyle \frac{1}{h_{{\varepsilon_1}_{ij}}} \exp{\left( \frac{\langle \mathbf{x} + 1/2\varepsilon_1\mathbf{v}, \mathbf{y} \rangle - \tilde{\varphi}_{ij}(\mathbf{x}) - \tilde{\psi}_{ij}(\mathbf{y})}{1/2 \varepsilon_1} \right)} \mu_{1_j}(\mathbf{y}) \nonumber \\
&= \displaystyle \exp{\left( \frac{\langle \mathbf{x} + 1/2\varepsilon_1\mathbf{v}, \mathbf{y} \rangle - \tilde{\varphi}_{ij}(\mathbf{x}+ 1/2\varepsilon_1\mathbf{v}) - \tilde{\psi}_{ij}(\mathbf{y})}{1/2 \varepsilon_1} \right)} \mu_{1_j}(\mathbf{y}) \nonumber \\
&= p_{ij}(\mathbf{y} | \mathbf{x}+ 1/2\varepsilon_1\mathbf{v})
\end{align}
Thus,
\begin{gather}
    \mathcal{T}_{\mathbf{v}}p_{ij}(\mathbf{y}|\mathbf{x}) =p_{ij}(\mathbf{y}|\mathbf{x} + \frac{1}{2}\varepsilon_1 \mathbf{v})
\end{gather}
\begin{align}
\label{eq:cov_ij_tilt}
    \text{Cov}_{\mathcal{T}_{\boldsymbol{v}} p_{ij}}(Y|X=\mathbf{x}) &= \text{Cov}_{p_{ij}}(Y|X=\mathbf{x} + \frac{1}{2}\varepsilon_1 \mathbf{v}) \nonumber \\
    &= \text{Cov}^{ij}_{\mathbf{y}|\mathbf{x}} 
\end{align}
Finally,
\begin{gather*}
    \| \text{Cov}_{\mathcal{T}_{\boldsymbol{v}} p_{ij}}\|_{\text{op}} \le \displaystyle \sup_{\mathbf{x}} \| \text{Cov}^{ij}_{\mathbf{y}|\mathbf{x}}\|_{\text{op}} \ = c_{ij} \ .
\end{gather*}
\end{proof}
\begin{prop}
\label{prop:app_OMT_tilt_cnt}
 Suppose $\nu_0(\mathbf{x}) \in M_{K_0}(\mathbb{R}^d), \nu_1(\mathbf{y}) \in M_{K_1}(\mathbb{R}^d)$ are mixture models admitting regularity conditions in Definition~\ref{def:regularity_conditions}. Then, the OMT map between these two probability measures is tilt-stable. 
\end{prop}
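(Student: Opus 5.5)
The plan is to reduce tilt stability of the OMT map to a uniform bound on the covariance of every exponential tilt of the OMT conditional $\pi_{\textsc{omt}}(\mathbf{y}|\mathbf{x})$, and then invoke Lemma~\ref{lem:tilt-stable} (Lemma 40 of \cite{chen2022localization}). Concretely, for each $\mathbf{x}$ and each $\mathbf{v}\in\mathbb{R}^d$ we aim to show
\begin{gather*}
\sup_{\mathbf{x}}\ \|\mathrm{Cov}(\mathcal{T}_{\mathbf{v}}\pi_{\textsc{omt}}(\cdot|\mathbf{x}))\|_{\textsc{op}} \le C_{\pi_{\textsc{omt}}},
\end{gather*}
with $C_{\pi_{\textsc{omt}}}=u_{\max}+v_{\max}$ the constant of Eq.~\ref{eq:omt_sup_cov}. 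The conditional $\pi_{\textsc{omt}}(\cdot|\mathbf{x})$ has finite second moment under the regularity conditions, since its second moment is a $\tilde{\omega}_{ij}$-weighted sum of the bounded quantities in Definition~\ref{def:regularity_conditions}. This is the hypothesis needed by the lemma.

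The first step is to compute the tilt of the mixture conditional. By Eq.~\ref{eq:OMT_cpl}, $\pi_{\textsc{omt}}(\mathbf{y}|\mathbf{x})=\sum_{i,j}\tilde{\omega}_{ij}(\mathbf{x})p_{ij}(\mathbf{y}|\mathbf{x})$. Tilting by $e^{\langle\mathbf{v},\mathbf{y}\rangle}$ and renormalising gives again a mixture,
\begin{gather*}
\mathcal{T}_{\mathbf{v}}\pi_{\textsc{omt}}(\mathbf{y}|\mathbf{x})=\sum_{i,j}\hat{\omega}_{ij}(\mathbf{x},\mathbf{v})\,\mathcal{T}_{\mathbf{v}}p_{ij}(\mathbf{y}|\mathbf{x}),\qquad \hat{\omega}_{ij}\propto\tilde{\omega}_{ij}(\mathbf{x})\,h_{\varepsilon_{1_{ij}}}(\mathbf{x},\mathbf{v}),
\end{gather*}
where $h_{\varepsilon_{1_{ij}}}$ is the moment generating factor computed in the proof of Corollary~\ref{coroll:comp_tilt_stability}. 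By that corollary, $\mathcal{T}_{\mathbf{v}}p_{ij}(\cdot|\mathbf{x})=p_{ij}(\cdot|\mathbf{x}+\tfrac12\varepsilon_1\mathbf{v})$. Hence each tilted component is just a component conditional evaluated at a shifted point $\mathbf{x}'=\mathbf{x}+\tfrac12\varepsilon_1\mathbf{v}$, with covariance $\mathrm{Cov}^{ij}_{\mathbf{y}|\mathbf{x}'}$ and mean $\mathbf{m}^{ij}_{\mathbf{y}|\mathbf{x}'}$.

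The second step applies the law of total covariance exactly as in Eq.~\ref{eq:total_cov}, now with weights $\hat{\omega}_{ij}$. This gives
\begin{gather*}
\|\mathrm{Cov}(\mathcal{T}_{\mathbf{v}}\pi_{\textsc{omt}})\|_{\textsc{op}}\le\sum_{i,j}\hat{\omega}_{ij}\|\mathrm{Cov}^{ij}_{\mathbf{y}|\mathbf{x}'}\|_{\textsc{op}}+\sum_{i,j}\hat{\omega}_{ij}\|\mathbf{m}^{ij}_{\mathbf{y}|\mathbf{x}'}-\bar{\mathbf{m}}\|^2 .
\end{gather*}
The between-component spread can be bounded crudely by $\sum\hat{\omega}_{ij}\|\mathbf{m}^{ij}_{\mathbf{y}|\mathbf{x}'}\|^2$, using that the variance is at most the second moment.

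The main obstacle is that the new weights $\hat{\omega}_{ij}$ are no longer of the form $\omega_{ij}\mu_{0_i}(\mathbf{x}')/\nu_0(\mathbf{x}')$. The regularity conditions, however, only control the component covariances and means when they are multiplied by $\mu_{0_i}/\nu_0$. I would first try to avoid the issue by arguing that the tilted mixture weights are still a probability vector. On that route, bound each summand by $\max_{i,j}\sup_{\mathbf{x}}\|\mathrm{Cov}^{ij}_{\mathbf{y}|\mathbf{x}}\|_{\textsc{op}}=\max c_{ij}$ from Eq.~\ref{eq:comp_tilt_cnt} (and similarly for the means), which yields a constant $\alpha=\max_{ij}c_{ij}+\max_{ij}\sup_{\mathbf{x}}\|\mathbf{m}^{ij}_{\mathbf{y}|\mathbf{x}}\|^2$. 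This requires the unweighted component quantities to be bounded, so I expect to need a step showing that the weighted bound in Definition~\ref{def:regularity_conditions} transfers. The natural argument is that wherever $\mu_{0_i}/\nu_0$ is small, the component's contribution to $T_{\textsc{omt}}$ is negligible. If that fails, the fallback is to restrict the tilt so the weights remain comparable to the original ones, and to state stability with the constant $C_{\pi_{\textsc{omt}}}$ obtained after taking the supremum over $\mathbf{x}'$. With $\alpha$ in hand, Lemma~\ref{lem:tilt-stable} gives tilt stability with respect to $f(\mathbf{x},\mathbf{y})=\tfrac12\|\mathbf{x}-\mathbf{y}\|^2$.
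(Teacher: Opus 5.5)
Your skeleton matches the paper's proof. Tilting $\pi_{\textsc{omt}}(\cdot|\mathbf{x})=\sum_{i,j}\tilde{\omega}_{ij}(\mathbf{x})p_{ij}(\cdot|\mathbf{x})$ gives a mixture of the shifted components $p_{ij}(\cdot|\mathbf{x}')$, $\mathbf{x}'=\mathbf{x}+\tfrac12\varepsilon_1\mathbf{v}$, with weights $\hat{\omega}_{ij}\propto\tilde{\omega}_{ij}(\mathbf{x})\,\mathbb{E}_{p_{ij}(\cdot|\mathbf{x})}[e^{\langle\mathbf{v},\mathbf{y}\rangle}]$. This is followed by the law of total covariance and Lemma~\ref{lem:tilt-stable}. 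The gap is exactly the step you flag, and none of your proposed exits closes it.

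\textbf{Unweighted constant.} The constant $\alpha=\max_{ij}c_{ij}+\max_{ij}\sup_{\mathbf{x}}\|\mathbf{m}^{ij}_{\mathbf{y}|\mathbf{x}}\|^2$ is not implied by Definition~\ref{def:regularity_conditions}, which controls these quantities only after multiplication by $\mu_{0_i}/\nu_0$. It is also infinite in the basic case: for Gaussian components $\mathbf{m}^{ij}_{\mathbf{y}|\mathbf{x}}=T_{ij}(\mathbf{x})$ is affine with nonzero slope. Bounding the between-component spread by second moments additionally discards translation invariance. The natural quantity is $\max_{ij,rk}\|T_{ij}(\mathbf{x}')-T_{rk}(\mathbf{x}')\|^2$, but that is also unbounded once two component maps have different slopes.

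\textbf{Negligibility heuristic.} The argument that components with small $\mu_{0_i}/\nu_0$ contribute negligibly does not carry over to the tilted measure. The weight $\hat{\omega}_{ij}$ is built from $\tilde{\omega}_{ij}(\mathbf{x})$ and the component moment generating functions, not from $\mu_{0_i}(\mathbf{x}')/\nu_0(\mathbf{x}')$. The tilt can therefore place substantial mass on a component at a point $\mathbf{x}'$ where the regularity conditions say nothing about it.

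\textbf{Restricting the tilt.} This proves a different statement, because Definition~\ref{def:tilt} and Lemma~\ref{lem:tilt-stable} both quantify over all $\mathbf{v}\in\mathbb{R}^d$.

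\textbf{The paper's step.} The paper closes this step by comparing the tilted weights with the OMT weights at the shifted point. Via Eq.~\eqref{eq:phi_mu_ij} it writes the tilt factor as $\frac{\tilde{\omega}_{ij}(\mathbf{x}')}{\tilde{\omega}_{ij}(\mathbf{x})}\frac{f_{ij}(\mathbf{x}')}{f_{ij}(\mathbf{x})}$ and bounds the kernel ratio by its maximum over $(i,j)$. It then concludes $\tilde{\omega}_{ij}(\mathbf{x},\boldsymbol{\delta})\le\tilde{\omega}_{ij}(\mathbf{x}')$, hence $\mathrm{Cov}(\mathcal{T}_{\mathbf{v}}\pi_{\textsc{omt}}(\cdot|\mathbf{x}))\preceq\mathrm{Cov}_{\pi_{\textsc{omt}}}(Y|X=\mathbf{x}')$, which is bounded by the \emph{weighted} constant $C_{\pi_{\textsc{omt}}}$ of Eq.~\eqref{eq:omt_sup_cov}. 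This is the idea your proposal lacks, but you should not import it as is, for two reasons:
\begin{itemize}
\item The normalization of $p_{ij}(\cdot|\mathbf{x})$ used in the proof of Corollary~\ref{coroll:comp_tilt_stability} forces $f_{ij}=e^{2\tilde{\varphi}_{ij}/\varepsilon_1}$. The tilt factor is therefore just $f_{ij}(\mathbf{x}')/f_{ij}(\mathbf{x})$, with no ratio of $\tilde{\omega}$'s.
\item A componentwise inequality between two probability vectors forces equality. Equality fails whenever two pairs $(i,j_1),(i,j_2)$ have different conditional moment generating functions at $\mathbf{v}$. Indeed, $\tilde{\omega}_{ij_1}(\mathbf{x}')/\tilde{\omega}_{ij_2}(\mathbf{x}')=\omega_{ij_1}/\omega_{ij_2}$ for every $\mathbf{x}'$, whereas $\hat{\omega}_{ij_1}/\hat{\omega}_{ij_2}$ equals that ratio times the ratio of the two moment generating functions.
\end{itemize}

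\textbf{What would be needed.} The obstacle you identified is therefore genuine. One option is stronger hypotheses, such as uniform unweighted bounds on $\|\mathrm{Cov}^{ij}_{\mathbf{y}|\mathbf{x}}\|_{\textsc{op}}$ and on $\|T_{ij}(\mathbf{x})-T_{rk}(\mathbf{x})\|$. Under these, any reweighting gives $\|\mathrm{Cov}\|_{\textsc{op}}\le\max_{ij}c_{ij}+\sup_{\mathbf{x}}\max_{ij,rk}\|T_{ij}(\mathbf{x})-T_{rk}(\mathbf{x})\|^2$, and Lemma~\ref{lem:tilt-stable} applies. The other option is an argument for tilt stability that does not require a uniform covariance bound over all tilts.
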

\begin{proof}
According to the definition of Eq.~\ref{eq:OMT_cpl} and Eq.~\ref{eq:cond_pij}, we have
\begin{align*}
    \pi_{\textsc{omt}}(\mathbf{y}|\mathbf{x}) &= \displaystyle \sum_{i,j} \tilde{\omega}_{ij}(\mathbf{x}) p_{ij}(\mathbf{y} | \mathbf{x})\\
    &= \displaystyle \sum_{i,j} \tilde{\omega}_{ij}(\mathbf{x}) \exp{\left(\frac{\langle \mathbf{x}, \mathbf{y} \rangle - \tilde{\varphi}_{ij}(\mathbf{x}) - \tilde{\psi}_{ij}(\mathbf{y})}{1/2\varepsilon_1}\right)}\mu_{1_j}(\mathbf{y})
\end{align*}
Similar to derivations for Corollary~\ref{coroll:comp_tilt_stability}, we have
\begin{align*}
    \mathbb{E}_{\pi_{\textsc{omt}}(\mathbf{y}|\mathbf{x})}[e^{\langle \mathbf{v}, \mathbf{y} \rangle}] &= \displaystyle \sum_{i,j} \tilde{\omega}_{ij}(\mathbf{x}) \exp{\frac{- 2\tilde{\varphi}_{ij}(\mathbf{x})}{\varepsilon_1}} \int \exp{\left(\frac{\langle \mathbf{x}, \mathbf{y} \rangle + 1/2\varepsilon_1\langle \mathbf{v}, \mathbf{y} \rangle - \tilde{\psi}_{ij}(\mathbf{y})}{1/2\varepsilon_1}\right)} \mu_{1_j}(\mathbf{y}) d\mathbf{y} \\
    &= \displaystyle \sum_{i,j} \tilde{\omega}_{ij}(\mathbf{x}) \exp{\frac{- 2\tilde{\varphi}_{ij}(\mathbf{x})}{\varepsilon_1}} \underbrace{\int \exp{\left(\frac{\langle \mathbf{x} + 1/2\varepsilon_1\mathbf{v}, \mathbf{y} \rangle  - \tilde{\psi}_{ij}(\mathbf{y})}{1/2\varepsilon_1}\right)} \mu_{1_j}(\mathbf{y}) d\mathbf{y}}_{\exp{\left( \frac{\tilde{\varphi}_{ij}(\mathbf{x} + 1/2\varepsilon_1 \mathbf{v})}{1/2 \varepsilon_1} \right)}} \\
    &= \displaystyle \sum_{i,j} \tilde{\omega}_{ij}(\mathbf{x}) \exp{\left( \frac{\tilde{\varphi}_{ij}(\mathbf{x} + 1/2\varepsilon_1 \mathbf{v}) - \tilde{\varphi}_{ij}(\mathbf{x})}{1/2 \varepsilon_1} \right)} \\
    & = \displaystyle \sum_{i,j} \tilde{\omega}_{ij}(\mathbf{x}) h_{{\varepsilon_1}_{ij}}(\mathbf{x}, \mathbf{v}) = \overline{h}_{\varepsilon_1}(\mathbf{x}, \mathbf{v}), \quad \forall \mathbf{v} \in \mathbb{R}^d.
\end{align*}
Based on Definition~\ref{def:tilt}, the tilted coupling measure 
\begin{align}
\label{eq:tilt_v_measure}
    \mathcal{T}_{\mathbf{v}}\pi_{\textsc{omt}}(\mathbf{y}|\mathbf{x}) &= \displaystyle \frac{1}{\overline{h}_{\varepsilon_1}(\mathbf{x}, \mathbf{v})} \sum_{i,j} \tilde{\omega}_{ij}(\mathbf{x}) \exp{\left( \frac{\langle \mathbf{x} + 1/2\varepsilon_1\mathbf{v}, \mathbf{y} \rangle - \tilde{\varphi}_{ij}(\mathbf{x}) - \tilde{\psi}_{ij}(\mathbf{y})}{1/2 \varepsilon_1} \right)} \mu_{1_j}(\mathbf{y}) \nonumber \\
&= \displaystyle \frac{1}{\overline{h}_{\varepsilon_1}(\mathbf{x}, \mathbf{v})} \sum_{i,j} \tilde{\omega}_{ij}(\mathbf{x}) \frac{{h}_{{\varepsilon_1}_{ij}}(\mathbf{x}, \mathbf{v})}{{h}_{{\varepsilon_1}_{ij}}(\mathbf{x}, \mathbf{v})} \exp{\left( \frac{\langle \mathbf{x} + 1/2\varepsilon_1\mathbf{v}, \mathbf{y} \rangle - \tilde{\varphi}_{ij}(\mathbf{x}) - \tilde{\psi}_{ij}(\mathbf{y})}{1/2 \varepsilon_1} \right)} \mu_{1_j}(\mathbf{y}) \nonumber \\
&= \displaystyle \sum_{i,j} \tilde{\omega}_{ij}(\mathbf{x}) \frac{{h}_{{\varepsilon_1}_{ij}}(\mathbf{x}, \mathbf{v})}{\overline{h}_{\varepsilon_1}(\mathbf{x}, \mathbf{v})} \exp{\left( \frac{\langle \mathbf{x} + 1/2\varepsilon_1\mathbf{v}, \mathbf{y} \rangle - \tilde{\varphi}_{ij}(\mathbf{x}+ 1/2\varepsilon_1\mathbf{v}) - \tilde{\psi}_{ij}(\mathbf{y})}{1/2 \varepsilon_1} \right)} \mu_{1_j}(\mathbf{y}) \nonumber \\
&= \displaystyle \sum_{i,j} \tilde{\omega}_{ij}(\mathbf{x}) \frac{{h}_{{\varepsilon_1}_{ij}}(\mathbf{x}, \mathbf{v})}{\overline{h}_{\varepsilon_1}(\mathbf{x}, \mathbf{v})} p_{ij}(\mathbf{y} | \mathbf{x}+ 1/2\varepsilon_1\mathbf{v})
\end{align}
Let $\boldsymbol{\delta} = \frac{1}{2}\varepsilon_1 \mathbf{v}$.
\begin{align}
\label{eq:tilt_omt_delta}
\mathcal{T}_{\mathbf{\boldsymbol{\delta}}} \pi_{\textsc{omt}}(\mathbf{y}|\mathbf{x}) &= \displaystyle \sum_{i,j} \underbrace{\tilde{\omega}_{ij}(\mathbf{x}) \frac{\exp{\left( \frac{\tilde{\varphi}_{ij}(\mathbf{x} + \boldsymbol{\delta}) - \tilde{\varphi}_{ij}(\mathbf{x})}{1/2 \varepsilon_1} \right)}}{\sum_{i,j} \tilde{\omega}_{ij}(\mathbf{x})\exp{\left( \frac{\tilde{\varphi}_{ij}(\mathbf{x} + \boldsymbol{\delta}) - \tilde{\varphi}_{ij}(\mathbf{x})}{1/2 \varepsilon_1} \right)}}}_{\tilde{\omega}_{ij}(\mathbf{x}, \boldsymbol{\delta})} \ p_{ij}(\mathbf{y} | \mathbf{x}+ \boldsymbol{\delta}) \nonumber \\
    &= \displaystyle \sum_{i,j} \tilde{\omega}_{ij}(\mathbf{x}, \boldsymbol{\delta}) \ p_{ij}(\mathbf{y} | \mathbf{x}+ \boldsymbol{\delta})
\end{align}
Similar to the covariance derivation in \ref{sec:app_cov}, the conditional covariance of $\mathcal{T}_{\boldsymbol{\delta}}\pi_{\textsc{omt}}$ is expressed as:
\begin{align}
\label{eq:total_cov_tilt}
\text{Cov}_{\mathcal{T}_{\boldsymbol{\delta}} \pi_{\textsc{omt}}}(Y|X=\mathbf{x}+ \boldsymbol{\delta}) &= \displaystyle \sum_{i,j} \tilde{\omega}_{ij}(\mathbf{x}, \boldsymbol{\delta}) \ \text{Cov}^{ij}_{\mathbf{y}|\mathbf{x}+ \boldsymbol{\delta}}   + \displaystyle \sum_{i,j} \tilde{\omega}_{ij}(\mathbf{x}, \boldsymbol{\delta}) \left(\mathbf{m}^{ij}_{\mathbf{y}|\mathbf{x}+ \boldsymbol{\delta}} -\bar{\mathbf{m}}_{\mathbf{y}|\mathbf{x}+ \boldsymbol{\delta}} \right)\left(\mathbf{m}^{{ij}}_{\mathbf{y}|\mathbf{x}+ \boldsymbol{\delta}} - \bar{\mathbf{m}}_{\mathbf{y}|\mathbf{x}+ \boldsymbol{\delta}} \right)^T
\end{align}
In the entropic transport framework, satisfying the marginal densities requires 
\begin{align}
\label{eq:marginal_const_source}
  \nu_{0}(\mathbf{x}) &= \int_\mathcal{Y} \displaystyle \sum_{i,j} \omega_{ij} p_{ij}(\mathbf{x}, \mathbf{y}) d\mathbf{y} \nonumber\\
  &= \int_{\mathcal{Y}}\sum_{i,j} {\omega}_{ij}\exp{\left(\frac{\langle \mathbf{x}, \mathbf{y} \rangle - \tilde{\varphi}_{ij}(\mathbf{x}) - \tilde{\psi}_{ij}(\mathbf{y})}{1/2\varepsilon_1}\right)}\mu_{0_i}(\mathbf{x}) \mu_{1_j}(\mathbf{y})  d\mathbf{y} \nonumber \\
  &= \sum_{i,j} {\omega}_{ij} \mu_{0_i}(\mathbf{x}) \exp{\left(\frac{-\tilde{\varphi}_{ij}(\mathbf{x})}{1/2\varepsilon_1}\right)} \underbrace{\int_{\mathcal{Y}} \exp{\left(\frac{\langle \mathbf{x}, \mathbf{y} \rangle - \tilde{\psi}_{ij}(\mathbf{y})}{1/2\varepsilon_1}\right)} \mu_{1_j}(\mathbf{y})  d\mathbf{y}}_{f_{ij}(\mathbf{x}) \hspace{.1in} \text{(kernel integral)}} \nonumber \\
  1 & = \sum_{i,j} {\omega}_{ij}  \frac{\mu_{0_i}(\mathbf{x})}{\nu_0{\mathbf{x}}}\exp{\left(\frac{-\tilde{\varphi}_{ij}(\mathbf{x})}{1/2\varepsilon_1}\right)} {f_{ij}(\mathbf{x})} \nonumber \\
  \sum_{i,j} {\omega}_{ij} &= \sum_{i,j} {\omega}_{ij} \frac{\mu_{0_i}(\mathbf{x})}{\nu_0(\mathbf{x})}\exp{\left(\frac{-\tilde{\varphi}_{ij}(\mathbf{x})}{1/2\varepsilon_1}\right)} {f_{ij}(\mathbf{x})} 
\end{align}
which implies 
\begin{align}
\label{eq:phi_mu_ij}
    \exp(2\tilde{\varphi}_{ij}(\mathbf{x})/\varepsilon_1) = \displaystyle f_{ij}(\mathbf{x}) \frac{\mu_{0_i}(\mathbf{x})}{\nu_0(\mathbf{x})}
\end{align}
Accordingly, in Eq.~\ref{eq:tilt_omt_delta}, the growth of the Brenier potential, i.e., $\Delta \tilde{\varphi}_{ij} := \tilde{\varphi}_{ij}(\mathbf{x} + \boldsymbol{\delta}) - \tilde{\varphi}_{ij}(\mathbf{x})$ can be stated as:
\begin{align*}
    \exp\left(\frac{\Delta \tilde{\varphi}_{ij}}{1/2\varepsilon_1}\right) & = \left(\frac{\mu_{0_i}(\mathbf{x}+\boldsymbol{\delta})}{\nu_{0}(\mathbf{x} + \boldsymbol{\delta})}\right) \left(\frac{\nu_{0}(\mathbf{x})}{\mu_{0_i}(\mathbf{x})}\right) \left( \frac{f_{ij}(\mathbf{x} + \boldsymbol{\delta})}{f_{ij}(\mathbf{x})}\right) \\
    &= \left(\frac{\omega_{ij} \mu_{0_i}(\mathbf{x}+\boldsymbol{\delta})}{\nu_{0}(\mathbf{x} + \boldsymbol{\delta})}\right) \left(\frac{\nu_{0}(\mathbf{x})}{\omega_{ij} \mu_{0_i}(\mathbf{x})}\right) \left( \frac{f_{ij}(\mathbf{x} + \boldsymbol{\delta})}{f_{ij}(\mathbf{x})}\right) \\
    &= \left(\frac{\tilde{\omega}_{ij}(\mathbf{x} + \boldsymbol{\delta})}{\tilde{\omega}_{ij}(\mathbf{x})} \right) \left( \frac{f_{ij}(\mathbf{x} + \boldsymbol{\delta})}{f_{ij}(\mathbf{x})}\right)
\end{align*}
Bounding the kernel ratio by its maximum over all $(i,j)$, denoted $s(\mathbf{x}, \boldsymbol{\delta})$, yields
\begin{align*}
    \exp\left(\frac{\Delta \tilde{\varphi}_{ij}}{1/2\varepsilon_1}\right) \leq s(\mathbf{x}, \boldsymbol{\delta}) \frac{\tilde{\omega}_{ij}(\mathbf{x} + \boldsymbol{\delta})}{\tilde{\omega}_{ij}(\mathbf{x})}
\end{align*}
Since the tilted weights satisfy
\begin{align*}
\tilde{\omega}_{ij}(\mathbf{x}, \boldsymbol{\delta}) 
&= \displaystyle \frac{ \tilde{\omega}_{ij}(\mathbf{x}) 
\exp \left( \displaystyle \frac{\Delta \tilde{\varphi}_{ij}(\mathbf{x})}{\tfrac{1}{2} \varepsilon_1} \right)}
{ \displaystyle \sum_{r,k} \tilde{\omega}_{rk}(\mathbf{x})  
\exp \left( \frac{\Delta \tilde{\varphi}_{rk}(\mathbf{x})}{\tfrac{1}{2} \varepsilon_1} \right)} \ ,
\end{align*}
we obtain the bound
\begin{align*}
\tilde{\omega}_{ij}(\mathbf{x}, \boldsymbol{\delta})  \leq \displaystyle \frac{ s(\mathbf{x}, \boldsymbol{\delta}) \tilde{\omega}_{ij}(\mathbf{x} + \boldsymbol{\delta}) }
{ \displaystyle \sum_{r,k} s(\mathbf{x}, \boldsymbol{\delta}) \tilde{\omega}_{rk}(\mathbf{x} + \boldsymbol{\delta})} = \tilde{\omega}_{ij}(\mathbf{x} + \boldsymbol{\delta}) \ .
\end{align*}
Let ${\mathbf{x}'} := \mathbf{x} + \boldsymbol{\delta}$. Then the conditional covariance under the tilted measure satisfies
\begin{align}
\label{eq:total_cov_tilt_2}
\text{Cov}_{\mathcal{T}_{\boldsymbol{\delta}} \pi_{\textsc{omt}}}(Y|X={\mathbf{x}'}) &= \displaystyle \sum_{i,j} \tilde{\omega}_{ij}(\mathbf{x}, \boldsymbol{\delta}) \ \text{Cov}^{ij}_{\mathbf{y}|{\mathbf{x}'}}   + \displaystyle \sum_{i,j} \tilde{\omega}_{ij}(\mathbf{x}, \boldsymbol{\delta}) \left(\mathbf{m}^{ij}_{\mathbf{y}|\tilde{\mathbf{x}}} -\bar{\mathbf{m}}_{\mathbf{y}|{\mathbf{x}'}} \right)\left(\mathbf{m}^{{ij}}_{\mathbf{y}|{\mathbf{x}'}} - \bar{\mathbf{m}}_{\mathbf{y}|{\mathbf{x}'}} \right)^T \nonumber \\
& \leq \displaystyle \sum_{i,j} \tilde{\omega}_{ij}({\mathbf{x}'}) \ \text{Cov}^{ij}_{\mathbf{y}|{\mathbf{x}'}}   + \displaystyle \sum_{i,j} \tilde{\omega}_{ij}({\mathbf{x}'}) \left(\mathbf{m}^{ij}_{\mathbf{y}|{\mathbf{x}'}} -\bar{\mathbf{m}}_{\mathbf{y}|{\mathbf{x}'}} \right)\left(\mathbf{m}^{{ij}}_{\mathbf{y}|{\mathbf{x}'}} - \bar{\mathbf{m}}_{\mathbf{y}|{\mathbf{x}'}} \right)^T \nonumber \\
& = \text{Cov}_{\pi_{\textsc{omt}}}(Y|X={\mathbf{x}'})
\end{align}
Importantly, these stability results hold for all $\mathbf{x} \in \mathcal{X} \subset \mathbb{R}^d$. For any tilted value ${\mathbf{x}'} = \mathbf{x} + \boldsymbol{\delta}$ that falls outside this subset, the density $\nu_0(\mathbf{x}')$ is not well-defined, and thus the stability of the transport map is not defined.

Finally, 
\begin{gather}
\label{eq:total_tilt_cnt}
\displaystyle \|\text{Cov}_{\mathcal{T}_{\boldsymbol{\delta}}\pi_{\textsc{omt}}}(Y|X=\mathbf{x}+ \frac{1}{2} \varepsilon_1\boldsymbol{v})\|_{\text{op}} 
     \le u_{\text{max}} + v_{\text{max}}=C_{\pi_{\textsc{omt}}} \ < \infty \ .
\end{gather}
where $C_{\pi_{\textsc{omt}}}$ is defined in Eq.~\ref{eq:omt_sup_cov}.
\end{proof}
\begin{lemma}[Lemma 3.21, \cite{bauerschmidt2024stochastic}]
\label{lem:gen_tilt_stable}
Let $\nu$ be a probability measure on $\mathbb{R}^d$ with finite second moment. Suppose for every $\mathbf{x} \in \mathbb{R}^d$, $\nu$ is tilt-stable with constant $\alpha$. Then for any probability measure $\rho$ on $\mathbb{R}^d$ with finite second moment, 
$$ \| \mathbb{E}_{\rho}[\mathbf{x}] - \mathbb{E}_{\nu}[\mathbf{x}] \|^2 \le 2\alpha D_{KL}(\rho \| \nu) \ . $$
\end{lemma}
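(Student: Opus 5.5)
The plan is to dualize the problem through the log-moment generating function of $\nu$ and then combine this with the Donsker--Varadhan (Gibbs) variational inequality. Set $\Lambda(\mathbf{v}) := \log \int e^{\langle \mathbf{v}, \mathbf{x}\rangle}\, d\nu(\mathbf{x})$. Tilt stability as in Definition~\ref{def:tilt} (and Lemma~\ref{lem:tilt-stable}) presupposes that every tilt $\mathcal{T}_{\mathbf{v}}\nu$ is well defined, so $\Lambda$ is finite on $\mathbb{R}^d$. I read the hypothesis in its covariance form, $\|\mathrm{Cov}(\mathcal{T}_{\mathbf{v}}\nu)\|_{\mathrm{op}} \le \alpha$ for all $\mathbf{v}$.

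\textbf{Step 1 (quadratic upper bound on $\Lambda$).} Since $\Lambda$ is finite everywhere, it is smooth, and differentiation under the integral gives
\begin{equation*}
\nabla \Lambda(\mathbf{v}) = \mathbb{E}_{\mathcal{T}_{\mathbf{v}}\nu}[\mathbf{x}], \qquad \nabla^2 \Lambda(\mathbf{v}) = \mathrm{Cov}(\mathcal{T}_{\mathbf{v}}\nu) \preceq \alpha I .
\end{equation*}
We have $\Lambda(0)=0$ and $\nabla\Lambda(0) = \mathbb{E}_\nu[\mathbf{x}]$. Taylor's theorem with integral remainder then yields
\begin{equation*}
\Lambda(\mathbf{v}) \le \langle \mathbf{v}, \mathbb{E}_\nu[\mathbf{x}]\rangle + \tfrac{\alpha}{2}\|\mathbf{v}\|^2 \quad \text{for all } \mathbf{v}.
\end{equation*}

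\textbf{Step 2 (Donsker--Varadhan).} We may assume $D_{KL}(\rho\|\nu)<\infty$, since otherwise there is nothing to prove; in particular $\rho \ll \nu$. For every $\mathbf{v}$ I would show
\begin{equation*}
\mathbb{E}_\rho[\langle \mathbf{v},\mathbf{x}\rangle] \le D_{KL}(\rho\|\nu) + \Lambda(\mathbf{v}).
\end{equation*}
This is the statement $D_{KL}(\rho\|\nu) - D_{KL}(\rho\|\mathcal{T}_{\mathbf{v}}\nu) = \mathbb{E}_\rho[\langle\mathbf{v},\mathbf{x}\rangle] - \Lambda(\mathbf{v})$, combined with nonnegativity of the second divergence.

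\textbf{Step 3 (optimize over $\mathbf{v}$).} Combining Steps 1--2 with $\Delta := \mathbb{E}_\rho[\mathbf{x}] - \mathbb{E}_\nu[\mathbf{x}]$ gives
\begin{equation*}
\langle \mathbf{v}, \Delta\rangle - \tfrac{\alpha}{2}\|\mathbf{v}\|^2 \le D_{KL}(\rho\|\nu).
\end{equation*}
Choosing $\mathbf{v} = \Delta/\alpha$ gives $\|\Delta\|^2/(2\alpha) \le D_{KL}(\rho\|\nu)$, which is the claim.

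\textbf{Main obstacle.} The delicate point is the rigor of Step 2 for the unbounded test function $\langle \mathbf{v},\mathbf{x}\rangle$. The identity above requires $\log(d\rho/d\nu)$ and $\langle\mathbf{v},\mathbf{x}\rangle$ to be $\rho$-integrable. The finite second moment of $\rho$ handles the linear term. For the log-density term I would argue as follows.
\begin{itemize}
\item First prove the inequality for the truncations $\min(\langle\mathbf{v},\mathbf{x}\rangle, n)$, for which the variational formula is standard.
\item Then let $n\to\infty$: monotone convergence on the $\rho$ side, and on the $\Lambda$ side dominance by $e^{\langle \mathbf{v},\mathbf{x}\rangle}$, which is $\nu$-integrable.
\end{itemize}
A secondary check is the differentiation under the integral in Step 1. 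It is justified because finiteness of $\Lambda$ in a neighborhood of each $\mathbf{v}$ supplies integrable dominating functions for the polynomial-times-exponential integrands.
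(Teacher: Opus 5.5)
The paper does not prove this lemma; it only cites \cite{bauerschmidt2024stochastic}, so your argument has to stand on its own. Steps 2 and 3 are correct, but Step 1 uses a hypothesis the statement does not give you. Here ``tilt-stable with constant $\alpha$'' means Definition~\ref{def:tilt} with $f(\mathbf{x},\mathbf{y})=\frac12\|\mathbf{x}-\mathbf{y}\|^2$, i.e.\ $\frac12\|\mathbb{E}_{\mathcal{T}_{\mathbf{v}}\nu}[\mathbf{x}]-\mathbb{E}_{\nu}[\mathbf{x}]\|^2\le\alpha D_{KL}(\mathcal{T}_{\mathbf{v}}\nu\|\nu)$ for all $\mathbf{v}$. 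The bound $\|\mathrm{Cov}(\mathcal{T}_{\mathbf{v}}\nu)\|_{\mathrm{op}}\le\alpha$ is the hypothesis of Lemma~\ref{lem:tilt-stable}, which only shows that it implies tilt stability. The converse is false. Take $\nu=p\delta_{-1}+(1-2p)\delta_0+p\delta_1$ with $p$ small: it satisfies $\Lambda(v)\le\frac{\alpha}{2}v^2$ with $\alpha$ of order $1/\log(1/p)$, so it is tilt-stable with that $\alpha$ (apply your Steps 2--3 to $\rho=\mathcal{T}_v\nu$). Yet its tilt at $v=\log(1/p)$ has variance close to $1/4$. So the Hessian bound $\nabla^2\Lambda\preceq\alpha I$ in Step 1 is not available, and as written you have proved the lemma only under a strictly stronger assumption. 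That stronger version would still cover how the paper uses the lemma, since Proposition~\ref{prop:app_OMT_tilt_cnt} is established through exactly such a covariance bound, but it is not the statement.

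The repair is short and leaves Steps 2--3 untouched: derive the quadratic bound directly from the tilt inequality rather than from curvature. Put $\mathbf{m}_0:=\mathbb{E}_\nu[\mathbf{x}]$. Since $d\mathcal{T}_{\mathbf{v}}\nu/d\nu=e^{\langle\mathbf{v},\mathbf{x}\rangle-\Lambda(\mathbf{v})}$, you have $D_{KL}(\mathcal{T}_{\mathbf{v}}\nu\|\nu)=\langle\mathbf{v},\nabla\Lambda(\mathbf{v})\rangle-\Lambda(\mathbf{v})$. With $\mathbf{w}:=\nabla\Lambda(\mathbf{v})-\mathbf{m}_0$ the hypothesis rearranges to
\[
\Lambda(\mathbf{v})\le\langle\mathbf{v},\mathbf{m}_0\rangle+\langle\mathbf{v},\mathbf{w}\rangle-\tfrac{1}{2\alpha}\|\mathbf{w}\|^2\le\langle\mathbf{v},\mathbf{m}_0\rangle+\tfrac{\alpha}{2}\|\mathbf{v}\|^2,
\]
where the last step is $\|\mathbf{w}-\alpha\mathbf{v}\|^2\ge 0$. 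This needs only $\nabla\Lambda(\mathbf{v})=\mathbb{E}_{\mathcal{T}_{\mathbf{v}}\nu}[\mathbf{x}]$, which your finiteness argument already supplies. It also shows the lemma is really an equivalence between tilt stability and this sub-Gaussian bound.

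Finally, the ``main obstacle'' in Step 2 is milder than you suggest. If $D_{KL}(\rho\|\nu)<\infty$, then $\log(d\rho/d\nu)\in L^1(\rho)$ automatically, because its negative part has $\rho$-integral at most $1/e$. Also $\rho\ll\mathcal{T}_{\mathbf{v}}\nu$, since the tilt density is positive. So the identity holds directly, and the truncation is harmless but unnecessary.
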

\vspace{.1in}
\begin{prop}
\label{prop:gen_OMT_tilt_stability}
    Let $T^{\nu_0 \to \nu_1}_{\textsc{omt}}$ denote the OMT map between two probability measures $\nu_0 \in M_{K_0}(\mathbb{R}^d)$ and $\nu_1 \in M_{K_1}(\mathbb{R}^d)$, as defined in \ref{eq:Reg-OMT_2} with parameter $\varepsilon_1$. If the mixture models satisfy regularity conditions in Definition~\ref{def:regularity_conditions}, there exist constants $a'_0, b'_0 \in \mathbb{R}_{+}$ such that, for any probability measure $\rho \in M_{K_0}(\mathbb{R}^d)$, 
$$ \displaystyle \mathbb{E}_{\pi_{\textsc{omt}}(\mathbf{x}, \mathbf{z})} [\| T^{\rho \rightarrow \nu_1}_{\textsc{omt}}(\mathbf{z}) - T^{\nu_0 \rightarrow \nu_1}_{\textsc{omt}}(\mathbf{x}) \| ] \le  a'_0 \mathcal{W}_2(\nu_0, \rho) + b'_0 \mathcal{W}^{1/2}_2(\nu_0, \rho)$$
\end{prop}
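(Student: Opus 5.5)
Our plan follows the Divol–Niles-Weed–Pooladian template for stability of entropic maps: split the difference into a ``same-map, different-point'' term and a ``same-point, different-map'' term. Take an optimal $\mathcal{W}_2$ coupling of $(\nu_0,\rho)$ and realize $(\mathbf{x},\mathbf{z})$ under it; this is how we read the expectation over $\pi_{\textsc{omt}}(\mathbf{x},\mathbf{z})$. The triangle inequality then gives
\begin{align*}
\|T^{\rho \to \nu_1}_{\textsc{omt}}(\mathbf{z}) - T^{\nu_0 \to \nu_1}_{\textsc{omt}}(\mathbf{x})\| \le \|T^{\rho \to \nu_1}_{\textsc{omt}}(\mathbf{z}) - T^{\rho \to \nu_1}_{\textsc{omt}}(\mathbf{x})\| + \|T^{\rho \to \nu_1}_{\textsc{omt}}(\mathbf{x}) - T^{\nu_0 \to \nu_1}_{\textsc{omt}}(\mathbf{x})\|.
\end{align*}
For the first term we invoke Proposition~\ref{prop:local_lipsch_OMT}, applied to the pair $(\rho,\nu_1)$, which we assume also satisfies Definition~\ref{def:regularity_conditions}. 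It bounds the term by $L_\rho\|\mathbf{x}-\mathbf{z}\|$, where $L_\rho = 2\varepsilon_1^{-1}u_{\max}+s_{\max}$ as in Eq.~\ref{eq:l_nu}. Taking expectations and using Cauchy--Schwarz yields $L_\rho \mathcal{W}_2(\nu_0,\rho)$, so $a'_0 = L_\rho$.

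For the second term we use that both maps are conditional means, $T(\mathbf{x})=\mathbb{E}_{\pi(\mathbf{y}|\mathbf{x})}[\mathbf{y}]$. Propositions~\ref{prop:OMT_tilt_cnt} and~\ref{prop:app_OMT_tilt_cnt} show that $\pi^{\nu_0}_{\textsc{omt}}(\cdot|\mathbf{x})$ is tilt-stable, with the covariance of every tilt bounded by $C_{\pi_{\textsc{omt}}}$ (Eq.~\ref{eq:total_tilt_cnt}). Lemma~\ref{lem:gen_tilt_stable}, with $\nu=\pi^{\nu_0}_{\textsc{omt}}(\cdot|\mathbf{x})$ and $\rho$ replaced by $\pi^{\rho}_{\textsc{omt}}(\cdot|\mathbf{x})$, then gives
\begin{align*}
\|T^{\rho\to\nu_1}_{\textsc{omt}}(\mathbf{x})-T^{\nu_0\to\nu_1}_{\textsc{omt}}(\mathbf{x})\|^2 \le 2C_{\pi_{\textsc{omt}}}\, D_{KL}\big(\pi^{\rho}_{\textsc{omt}}(\cdot|\mathbf{x})\,\|\,\pi^{\nu_0}_{\textsc{omt}}(\cdot|\mathbf{x})\big).
\end{align*}
Taking expectations and applying Jensen reduces this term to $\sqrt{2C_{\pi_{\textsc{omt}}}\,\mathbb{E}_{\mathbf{x}}[D_{KL}(\cdot\|\cdot)]}$. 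The remaining task is to show that this averaged conditional KL is $O(\mathcal{W}_2(\nu_0,\rho))$; that produces the $b'_0\mathcal{W}_2^{1/2}$ term.

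The main obstacle is this KL-to-$\mathcal{W}_2$ step. We would decompose the conditionals as mixtures, $\pi(\cdot|\mathbf{x})=\sum_{ij}\tilde\omega_{ij}(\mathbf{x})p_{ij}(\cdot|\mathbf{x})$, and use the log-sum (joint convexity) inequality. This bounds the conditional KL by a KL between the weight vectors $\tilde\omega^\rho(\mathbf{x})$ and $\tilde\omega^{\nu_0}(\mathbf{x})$, plus a weighted sum of componentwise KLs between entropic conditionals $p_{ij}(\cdot|\mathbf{x})$.

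For the componentwise part we would use the explicit Gibbs form in Eq.~\ref{eq:condtranspmap}. The log-ratio of two such conditionals is $(\psi^\rho_{ij}-\psi^{\nu_0}_{ij})/\varepsilon_1$ plus log-partition terms. By the standard entropic duality argument of Divol et al., as in Corollary~\ref{coroll:comp_tilt_stability}, its expectation is controlled by $\varepsilon_1^{-1}$ times the difference of the entropic OT values. That difference is in turn bounded by $\mathcal{W}_2$ times the second moments, which are finite by the first regularity condition.

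For the weight part, $\tilde\omega_{ij}=\omega_{ij}\mu_{0_i}/\nu_0$ has a gradient controlled by the third regularity condition through Eq.~\ref{eq:OmegaDeriv}. The discrete weights $\omega_{ij}$ are the unique Sinkhorn solution of Theorem~\ref{theorem:uniqGOP}, and they depend Lipschitz-continuously on the costs $\mathcal{L}_{p_{ij}}$ via Eq.~\ref{eq:omega_opt}. We expect this to give a bound linear in $\mathcal{W}_2$, though tracking constants through the discrete Sinkhorn step, and justifying that $\rho$'s components align with those of $\nu_0$ (both lie in $M_{K_0}$), is where the argument is most fragile. Collecting constants gives $b'_0=\sqrt{2C_{\pi_{\textsc{omt}}}c}$, where $c$ is the constant in this KL bound.
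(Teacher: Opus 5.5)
Your reduction moves the whole difficulty into the same-point term $\mathbb{E}_{\nu_0}\|T^{\rho\to\nu_1}_{\textsc{omt}}(\mathbf{x})-T^{\nu_0\to\nu_1}_{\textsc{omt}}(\mathbf{x})\|$. That is exactly the quantity in Theorem~\ref{theorem:stability_main}, and the paper derives that theorem \emph{from} this proposition via the same triangle inequality run in the opposite direction. So your claim that the averaged same-point KL is $O(\mathcal{W}_2(\nu_0,\rho))$ carries the entire content, and the sketch you give for it does not go through. Write $\rho=\sum_i\alpha'_{0_i}\mu'_{0_i}$. With fixed target and perturbed source, the log-ratio of $p^{\rho}_{ij}(\cdot|\mathbf{x})$ and $p^{\nu_0}_{ij}(\cdot|\mathbf{x})$ contains $\varepsilon_1^{-1}(\psi^{\rho}_{ij}-\psi^{\nu_0}_{ij})$. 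After averaging, this is integrated against $\int p^{\rho}_{ij}(\cdot|\mathbf{x})\,d\mu_{0_i}(\mathbf{x})$, which is not $\mu_{1_j}$, because $p^{\rho}_{ij}$ has first marginal $\mu'_{0_i}$. Hence the average does not collapse to a difference of entropic OT values; the identity ``objective gap $=\varepsilon_1 D_{KL}$'' holds only among couplings with the same marginals. Whatever you extract this way is controlled by componentwise discrepancies between $\mu_{0_i}$ and $\mu'_{0_i}$, not by $\mathcal{W}_2(\nu_0,\rho)$.

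The paper instead compares conditionals at \emph{coupled} points $(\mathbf{x},\mathbf{z})$. It bounds $D_{KL}(\pi_{\textsc{omt}}(\cdot|\mathbf{z})\|\pi_{\textsc{omt}}(\cdot|\mathbf{x}))$ by joint convexity over the product index, then adds the reverse KL so that the Brenier-potential terms reduce to $2\varepsilon_1^{-1}\langle\mathbf{x}-\mathbf{z},T_{ij}(\mathbf{x})-T_{rk}(\mathbf{z})\rangle$ plus a remainder in $f_{ijrk}$. It inserts this into the tilt-stability inequality and solves the resulting quadratic inequality in $\|T^{\rho\to\nu_1}_{\textsc{omt}}(\mathbf{z})-T^{\nu_0\to\nu_1}_{\textsc{omt}}(\mathbf{x})\|$. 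This self-bounding step gives $a'_0=4\varepsilon_1^{-1}C_{\pi_{\textsc{omt}}}$, fixed by $(\nu_0,\nu_1)$. It uses no Lipschitz constant of $T^{\rho\to\nu_1}_{\textsc{omt}}$ (your $L_\rho$ needs $(\rho,\nu_1)$ to be regular and varies with $\rho$) and no weight-KL. For a single component it goes through cleanly if you integrate over an optimal coupling before bounding: the remainder then integrates against $\nu_1$ on both sides and cancels.

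The weight part fails for a more basic reason, so the misalignment you flag is fatal, not merely fragile. The responsibilities $\tilde\omega^{\rho}_{ij}(\mathbf{x})=\omega^{\rho}_{ij}\mu'_{0_i}(\mathbf{x})/\rho(\mathbf{x})$ depend on the mixture \emph{decomposition}, which $\mathcal{W}_2(\nu_0,\rho)$ does not see. For a concrete case, take $d=1$, $K_1=1$, $\nu_1=\mathcal{N}(0,1)$, let $\phi$ be the standard normal density and $\sigma$ the logistic function, and set $\nu_0=\tfrac12\phi+\tfrac12\phi$ and $\rho=\tfrac12(2\sigma\phi)+\tfrac12(2(1-\sigma)\phi)$. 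Every component is smooth, positive and Gaussian-tailed, so Definition~\ref{def:regularity_conditions} holds in the same sense as for any GMM.
\begin{itemize}
\item Since $\rho=\nu_0$ as measures, $\mathcal{W}_2(\nu_0,\rho)=0$, yet $\tilde\omega^{\nu_0}\equiv(\tfrac12,\tfrac12)$ while $\tilde\omega^{\rho}=(\sigma,1-\sigma)$.
\item As $\varepsilon_1\to0$, the first OMT map tends to the identity.
\item The second tends to $\sigma T_a+(1-\sigma)T_b$, where $T_a,T_b$ are the monotone maps of $2\sigma\phi$ and $2(1-\sigma)\phi$ onto $\phi$; this behaves like $x-\log 2/x$ for large $x$.
\end{itemize}
So for small $\varepsilon_1$ the left-hand side is positive while the right-hand side vanishes. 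No completion is possible without an extra hypothesis tying the components of $\rho$ to those of $\nu_0$, for example matched components that are close in $\mathcal{W}_2$.

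You also cannot borrow this step from the paper. Its remainder bound $2d\varepsilon_1^{-1}l_{ijrk}c_{ij}\|\mathbf{x}-\mathbf{z}\|$ comes from a mean-value step along the single kernel $\mathbf{x}'\mapsto p_{ij}(\cdot|\mathbf{x}')$, although the two expectations are taken under different kernels, $p_{rk}(\cdot|\mathbf{z})$ and $p_{ij}(\cdot|\mathbf{x})$. At $\mathbf{x}=\mathbf{z}$ with $(r,k)\neq(i,j)$, the quantity being bounded equals the symmetrized KL between $p_{rk}(\cdot|\mathbf{x})$ and $p_{ij}(\cdot|\mathbf{x})$. That is generically positive, while the bound is zero.
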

\begin{proof}
Given $\pi_{\textsc{omt}}(\mathbf{y} | \mathbf{x}) = \displaystyle \sum_{i,j} \tilde{\omega}_{ij}(\mathbf{x}) p_{ij}(\mathbf{y} | \mathbf{x})$ and $\pi_{\textsc{omt}}(\mathbf{y} | \mathbf{z}) = \displaystyle \sum_{r, k} \tilde{\omega}_{rk}(\mathbf{z}) p_{rk}(\mathbf{y} | \mathbf{z})$, we can obtain the following upper bound:
\begin{align}
\label{eq:mean_KL_rho_nu}
    D_{KL}(\pi_{\textsc{omt}}(\mathbf{y} | \mathbf{z}) \| \pi_{\textsc{omt}}(\mathbf{y} | \mathbf{x})) &\le \displaystyle \sum_{i,j} \sum_{r,k} \tilde{\omega}_{ij}(\mathbf{x}) \tilde{\omega}_{rk}(\mathbf{z}) D_{KL}(p_{rk}(\mathbf{y|\mathbf{z}}) \| p_{ij}(\mathbf{y|\mathbf{x}})) 
\end{align}
We now compute the pairwise KL divergence.
\begin{align}
D_{KL}(p_{rk}(\mathbf{y|\mathbf{z}}) \| p_{ij}(\mathbf{y|\mathbf{x}}))
    = \displaystyle  \int &\log{\frac{\exp{\left(\frac{\langle \mathbf{z}, \mathbf{y} \rangle - \tilde{\varphi}_{rk}(\mathbf{z}) - \tilde{\psi}_{rk}(\mathbf{y})}{1/2 \varepsilon_1} \right)} \mu_{1_{k}}(\mathbf{y})}{\exp{\left(\frac{\langle \mathbf{x}, \mathbf{y} \rangle - \tilde{\varphi}_{ij}(\mathbf{x}) - \tilde{\psi}_{ij}(\mathbf{y})}{1/2 \varepsilon_1} \right)}\mu_{1_{j}}(\mathbf{y})}} p_{rk}(\mathbf{y} | \mathbf{z}) d\mathbf{y} \nonumber \\
    = \displaystyle  \int 2 \varepsilon^{-1}_1  \Big( \langle \mathbf{z} - \mathbf{x}, \mathbf{y} \rangle + \tilde{\varphi}_{ij}(\mathbf{x}) - \tilde{\varphi}_{rk}(\mathbf{z}) + \tilde{\psi}_{ij}&(\mathbf{y}) - \tilde{\psi}_{rk}(\mathbf{y}) \Big) p_{rk}(\mathbf{y} | \mathbf{z}) d\mathbf{y}+ \int \left(\log{\mu_{1_{k}}(\mathbf{y})} - \log{\mu_{1_{j}}(\mathbf{y})} \right) p_{rk}(\mathbf{y} | \mathbf{z}) d\mathbf{y} \nonumber \\
    = 2 \varepsilon^{-1}_1 \left(\langle \mathbf{z} - \mathbf{x}, T_{rk}(\mathbf{z}) \rangle  + \tilde{\varphi}_{ij}(\mathbf{x}) - \tilde{\varphi}_{rk}(\mathbf{z}) \right) +&
    2\varepsilon^{-1}_1 \mathbb{E}_{p_{rk}(\mathbf{y} | \mathbf{z})} [\tilde{\psi}_{ij}(\mathbf{y}) - \tilde{\psi}_{rk}(\mathbf{y})] + \mathbb{E}_{p_{rk}(\mathbf{y} | \mathbf{z})} [\log{\mu_{1_{k}}(\mathbf{y})} - \log{\mu_{1_{j}}(\mathbf{y})} ]
    \label{eq:KL_y_z_x}
\end{align}
Defining $h_{ij}(\mathbf{y}) = 2\varepsilon^{-1}_1 \tilde{\psi}_{ij}(\mathbf{y}) - \log{\mu_{1_j}(\mathbf{y})}$, the KL divergence in Eq.~\ref{eq:KL_y_z_x} is simplified to
\begin{align}
  D_{KL}(p_{rk}(\mathbf{y|\mathbf{z}}) \| p_{ij}(\mathbf{y|\mathbf{x}})) &=   2 \varepsilon^{-1}_1 \left(\langle \mathbf{z} - \mathbf{x}, T_{rk}(\mathbf{z}) \rangle  + \tilde{\varphi}_{ij}(\mathbf{x}) - \tilde{\varphi}_{rk}(\mathbf{z}) \right) + \mathbb{E}_{p_{rk}(\mathbf{y} | \mathbf{z})} [h_{ij}(\mathbf{y}) - h_{rk}(\mathbf{y})]
\end{align}    
Similarly, the reverse direction of the KL divergence calculation can be expressed as
\begin{align}
\label{eq:kl_ij_rk}
D_{KL}(p_{ij}(\mathbf{y|\mathbf{x}}) \| p_{rk}(\mathbf{y|\mathbf{z}}))
    =& 2 \varepsilon^{-1}_1 \left(\langle \mathbf{x} - \mathbf{z}, T_{ij}(\mathbf{x}) \rangle  + \tilde{\varphi}_{rk}(\mathbf{z}) - \tilde{\varphi}_{ij}(\mathbf{x}) \right) + \mathbb{E}_{p_{ij}(\mathbf{y} | \mathbf{x})} [h_{rk}(\mathbf{y}) - h_{ij}(\mathbf{y})] 
\end{align}
Summing the two KL divergences yields
\begin{align}
\label{eq:KL_ijrk_upb2}
   D_{KL}(p_{ij}(\mathbf{y|\mathbf{x}}) \| p_{rk}(\mathbf{y|\mathbf{z}})) + D_{KL}(p_{rk}(\mathbf{y|\mathbf{z}}) \| p_{ij}(\mathbf{y|\mathbf{x}})) 
   =& 2 \varepsilon^{-1}_1 \langle \mathbf{x} - \mathbf{z}, T_{ij}(\mathbf{x}) - T_{rk}(\mathbf{z}) \rangle + \nonumber \\ 
\mathbb{E}_{p_{rk}(\mathbf{y} | \mathbf{z})} \underbrace{[h_{ij}(\mathbf{y}) - h_{rk}(\mathbf{y})]}_{f_{ijrk}(\mathbf{y})} &- \mathbb{E}_{p_{ij}(\mathbf{y} | \mathbf{x})} [h_{ij}(\mathbf{y}) - h_{rk}(\mathbf{y})] 
\end{align}
where $f_{ijrk} \equiv 0$ when $i=r$ and $j=k$. Thus, for any pair of components $(i,j)$ and $(r,k)$,
\begin{align}
\label{eq:KL_ijrk_upb3}
   D_{KL}(p_{ij}(\mathbf{y|\mathbf{x}}) \| p_{rk}(\mathbf{y|\mathbf{z}})) &\le 2 \varepsilon^{-1}_1 \langle \mathbf{x} - \mathbf{z}, T_{ij}(\mathbf{x}) - T_{rk}(\mathbf{z}) \rangle + 
\left(\mathbb{E}_{p_{rk}(\mathbf{y} | \mathbf{z})} [f_{ijrk}(\mathbf{y})] - \mathbb{E}_{p_{ij}(\mathbf{y} | \mathbf{x})} [f_{ijrk}(\mathbf{y})]\right) (1 - \delta_{ir} \delta_{jk})
\end{align}
Substituting the component-wise bound in \ref{eq:KL_ijrk_upb3} into the mixture expression in Eq.~\ref{eq:mean_KL_rho_nu} gives
\begin{align}
    D_{KL}(\pi_{\textsc{omt}}(\mathbf{y} | \mathbf{z}) \| \pi_{\textsc{omt}}(\mathbf{y} | \mathbf{x})) &\le \displaystyle \sum_{i,j} \sum_{r,k} \tilde{\omega}_{ij}(\mathbf{x}) \tilde{\omega}_{rk}(\mathbf{z}) \Big(2 \varepsilon^{-1}_1 \langle \mathbf{x} - \mathbf{z}, T_{ij}(\mathbf{x}) - T_{rk}(\mathbf{z}) \rangle + \nonumber \\
&\left(\mathbb{E}_{p_{rk}(\mathbf{y} | \mathbf{z})} [f_{ijrk}(\mathbf{y})] - \mathbb{E}_{p_{ij}(\mathbf{y} | \mathbf{x})} [f_{ijrk}(\mathbf{y})]\right) (1 - \delta_{ir} \delta_{jk}) \Big) \ .
\end{align}
By the triangle inequality
\begin{align}
     \sum_{i,j,rk} \tilde{\omega}_{ij}(\mathbf{x}) \tilde{\omega}_{rk}(\mathbf{z}) \left(\mathbb{E}_{p_{rk}(\mathbf{y} | \mathbf{z})} [f_{ijrk}(\mathbf{y})] - \mathbb{E}_{p_{ij}(\mathbf{y} | \mathbf{x})} [f_{ijrk}(\mathbf{y})]\right) \le& \sum_{i,j,rk} \Big| \mathbb{E}_{p_{rk}(\mathbf{y} | \mathbf{z})} [\tilde{\omega}_{ij}(\mathbf{x}) \tilde{\omega}_{rk}(\mathbf{z}) f_{ijrk}(\mathbf{y})] - \nonumber \\
     &\mathbb{E}_{p_{ij}(\mathbf{y} | \mathbf{x})} [\tilde{\omega}_{ij}(\mathbf{x}) \tilde{\omega}_{rk}(\mathbf{z}) f_{ijrk}(\mathbf{y})] \Big|
\end{align}
Differentiating $f_{ijrk}(\mathbf{y})$, 
\begin{gather}
\label{eq:deriv_f_ijrk}
    \nabla_{\mathbf{y}} f_{ijrk}(\mathbf{y}) = 2\varepsilon^{-1}_1 \nabla_{\mathbf{y}}  \tilde{\psi}_{ij}(\mathbf{y}) - \nabla_{\mathbf{y}} \log{\mu_{1_j}(\mathbf{y})} - 2\varepsilon^{-1}_1 \nabla_{\mathbf{y}}  \tilde{\psi}_{rk}(\mathbf{y}) + \nabla_{\mathbf{y}} \log{\mu_{1_k}(\mathbf{y})} 
\end{gather}
where 
\begin{align*}
    \nabla_{\mathbf{y}}  \tilde{\psi}_{ij}(\mathbf{y}) &= \displaystyle \nabla_{\mathbf{y}} \frac{1}{2}\varepsilon_1 \log{\left(\int \exp{\left(\frac{\langle \mathbf{x},\mathbf{y} \rangle - \tilde{\varphi}_{ij}(\mathbf{x})}{1/2 \varepsilon_1} \right)} \mu_{0_i}(\mathbf{x}) d\mathbf{x}\right)} \\[.1in]
    & = \displaystyle \frac{1}{2}\varepsilon_1 \frac{\nabla_{\mathbf{y}}\int \exp{\left(\frac{\langle \mathbf{x},\mathbf{y} \rangle - \tilde{\varphi}_{ij}(\mathbf{x})}{1/2 \varepsilon_1} \right)} \mu_{0_i}(\mathbf{x}) d\mathbf{x}}{\int \exp{\left(\frac{\langle \mathbf{x}',\mathbf{y} \rangle - \tilde{\varphi}_{ij}(\mathbf{x}')}{1/2 \varepsilon_1} \right)} \mu_{0_i}(\mathbf{x}') d\mathbf{x}'} \\[.1in]
    &= \displaystyle \frac{1}{2}\varepsilon_1 \frac{\int  2\varepsilon^{-1}_1 \exp{\left(\frac{\langle \mathbf{x},\mathbf{y} \rangle - \tilde{\varphi}_{ij}(\mathbf{x})}{1/2 \varepsilon_1} \right)} \mu_{0_i}(\mathbf{x}) \mathbf{x} d\mathbf{x}}{\int \exp{\left(\frac{\langle \mathbf{x}',\mathbf{y} \rangle - \tilde{\varphi}_{ij}(\mathbf{x}')}{1/2 \varepsilon_1} \right)} \mu_{0_i}(\mathbf{x}') d\mathbf{x}'} \\[.1in]
    &= \displaystyle \int \frac{ \exp{\left(\frac{\langle \mathbf{x},\mathbf{y} \rangle - \tilde{\varphi}_{ij}(\mathbf{x})}{1/2 \varepsilon_1} \right)} \mu_{0_i}(\mathbf{x}) \mathbf{x} d\mathbf{x}}{\int \exp{\left(\frac{\langle \mathbf{x}',\mathbf{y} \rangle - \tilde{\varphi}_{ij}(\mathbf{x}')}{1/2 \varepsilon_1} \right)} \mu_{0_i}(\mathbf{x}') d\mathbf{x}'} \\[.1in]
    &= \displaystyle \int \frac{ \exp{\left(\frac{\langle \mathbf{x},\mathbf{y} \rangle - \tilde{\varphi}_{ij}(\mathbf{x}) - \tilde{\psi}_{ij}(\mathbf{y})}{1/2 \varepsilon_1} \right)} \mu_{0_i}(\mathbf{x}) \mu_{1_j}(\mathbf{y}) \mathbf{x} d\mathbf{x}}{\int \exp{\left(\frac{\langle \mathbf{x}',\mathbf{y} \rangle - \tilde{\varphi}_{ij}(\mathbf{x}') - \tilde{\psi}_{ij}(\mathbf{y})}{1/2 \varepsilon_1} \right)} \mu_{0_i}(\mathbf{x}') \mu_{1_j}(\mathbf{y}) d\mathbf{x}'} \\[.1in]
    & = \displaystyle \int \frac{p_{ij}(\mathbf{x}, \mathbf{y})}{\int p_{ij}(\mathbf{x}', \mathbf{y}) d\mathbf{x}'} \mathbf{x} d\mathbf{x} = \displaystyle \int p_{ij}(\mathbf{x} | \mathbf{y}) \mathbf{x} d\mathbf{x}
\end{align*}
Accordingly, the derivation in Eq.~\ref{eq:deriv_f_ijrk} can be simplified as
\begin{align}
    \nabla_{\mathbf{y}} f_{ijrk}(\mathbf{y}) 
    & = 2\varepsilon^{-1}_1 \left( \mathbb{E}_{p_{ij}(\mathbf{x}|\mathbf{y})} [\mathbf{x}] - \mathbb{E}_{p_{rk}(\mathbf{z}|\mathbf{y})} [\mathbf{z}]\right) - \left( \nabla_{\mathbf{y}} \log{\mu_{1_j}(\mathbf{y})} - \nabla_{\mathbf{y}} \log{\mu_{1_k}(\mathbf{y})} \right) \ .
\end{align}
Defining $\tilde{\omega}_{ijrk}(\mathbf{x}, \mathbf{z}) = \tilde{\omega}_{ij}(\mathbf{x}) \tilde{\omega}_{rk}(\mathbf{z})$, we bound the norm of $\tilde{\omega}_{ijrk}(\mathbf{x}, \mathbf{z})\nabla_{\mathbf{y}} f_{ijrk}(\mathbf{y})$ as
\begin{align}
    \| \tilde{\omega}_{ijrk}(\mathbf{x}, \mathbf{z})\nabla_{\mathbf{y}} f_{ijrk}(\mathbf{y}) \|_{\text{op}}
    & \le 2\varepsilon^{-1}_1 \| \tilde{\omega}_{ij}(\mathbf{x}) \tilde{\omega}_{rk}(\mathbf{z})\left( \mathbb{E}_{p_{ij}(\mathbf{x}|\mathbf{y})} [\mathbf{x}] - \mathbb{E}_{p_{rk}(\mathbf{z}|\mathbf{y})} [\mathbf{z}]\right) - \nonumber \\
    & \tilde{\omega}_{ij}(\mathbf{x}) \tilde{\omega}_{rk}(\mathbf{z})\left( \nabla_{\mathbf{y}} \log{\mu_{1_j}(\mathbf{y})} - \nabla_{\mathbf{y}} \log{\mu_{1_k}(\mathbf{y})} \right) \|_{\text{op}}\ . \nonumber \\[.1in]
    & \le 2\varepsilon^{-1}_1 \| \tilde{\omega}_{ij}(\mathbf{x}) \tilde{\omega}_{rk}(\mathbf{z})\left( \mathbb{E}_{p_{ij}(\mathbf{x}|\mathbf{y})} [\mathbf{x}] - \mathbb{E}_{p_{rk}(\mathbf{z}|\mathbf{y})} [\mathbf{z}]\right) \| + \nonumber \\
    & \| \tilde{\omega}_{rk}(\mathbf{z}) \frac{\tilde{\omega}_{ij}(\mathbf{x})}{{\theta}_{ij}(\mathbf{y})}{\theta}_{ij}(\mathbf{y}) \nabla_{\mathbf{y}} \log{\mu_{1_j}(\mathbf{y})} - \tilde{\omega}_{ij}(\mathbf{x}) \frac{\tilde{\omega}_{rk}(\mathbf{z})}{{\theta}_{rk}(\mathbf{y})}{\theta}_{rk}(\mathbf{y})\nabla_{\mathbf{y}} \log{\mu_{1_k}(\mathbf{y})} \|_{\text{op}}\
\end{align}
where ${\theta}_{ij}(\mathbf{y}) := \displaystyle \omega_{ij} \frac{\mu_{1_j}(\mathbf{y})}{\nu_1(\mathbf{y})}$,  $\tilde{\omega}_{ij}(\mathbf{x}) := \displaystyle \omega_{ij} \frac{\mu_{0_i}(\mathbf{x})}{\nu_0(\mathbf{x})}$, and consequently, $ \displaystyle \frac{\tilde{\omega}_{ij}(\mathbf{x})}{{\theta}_{ij}(\mathbf{y})} = \frac{\mu_{0_i}(\mathbf{x})}{\mu_{1_j}(\mathbf{y})} \frac{\nu_{1}(\mathbf{y})}{\nu_{0}(\mathbf{x})}$.

Under the regularity conditions in Definition~\ref{def:regularity_conditions}, the conditional distribution has a finite, bounded second moment, which guarantees that the mass of $p_{ij}(\mathbf{y} | \mathbf{x})$ is concentrated in a high-probability, 
$$\displaystyle P_{p_{ij}(\cdot | \mathbf{x})} \Big( \|\mathbf{y} - T({\mathbf{x}})\| \ge R \Big) \le \frac{\text{Tr}(\text{Cov}^{ij}_{\mathbf{y} | \mathbf{x}})}{R^2} \ \   \text{(Chebyshev's inequality).} $$

Therefore, the following ratio attains a finite, local maximum parameterised by $\mathbf{x}$:
$$\sup_{\mathbf{y} \in \text{supp}(\mathbf{x})} \displaystyle \frac{\tilde{\omega}_{ij}(\mathbf{x})}{{\theta}_{ij}(\mathbf{y})} = \sup_{\mathbf{y} \in \text{supp}(\mathbf{x})} \frac{\mu_{0_i}(\mathbf{x})}{\nu_{0}(\mathbf{x})} \frac{\nu_{1}(\mathbf{y})}{\mu_{1_j}(\mathbf{y}) } = M(\mathbf{x}) < \infty$$

Therefore, for given $\mathbf{x}, \mathbf{z}$, both $\| \frac{\tilde{\omega}_{ij}(\mathbf{x})}{{\theta}_{ij}(\mathbf{y})}\|$ and $\|{\theta}_{ij}(\mathbf{y}) \nabla_{\mathbf{y}} \log{\mu_{1_j}(\mathbf{y})}\|_{\text{op}}$ are well-defined and uniformly bounded, for all $i,j$. Consequently, the function
\begin{gather*}
    |\tilde{\omega}_{ijrk}(\mathbf{x}, \mathbf{z})f_{ijrk}(\mathbf{y}) - \tilde{\omega}_{ijrk}(\mathbf{x}, \mathbf{z}) f_{ijrk}(\mathbf{y}') | \le l_{ijrk}(\mathbf{x}, \mathbf{z}) \ \| \mathbf{y} - \mathbf{y}' \| .
\end{gather*}
where
\begin{gather}
\label{eq:l_ijrk}
    l_{ijrk}(\mathbf{x}, \mathbf{z}) = \displaystyle \sup_{\mathbf{y} \in \text{supp}(\mathbf{x}, \mathbf{z})} \| \tilde{\omega}_{ijrk}(\mathbf{x}, \mathbf{z}) \nabla_{\mathbf{y}} f_{ijrk}(\mathbf{y}) \|_{\text{op}} < \infty 
\end{gather}
Let us define $\tilde{f}_{ijrk}(\mathbf{y}) := l_{ijrk}^{-1} (\mathbf{x}, \mathbf{z})\tilde{\omega}_{ijrk}(\mathbf{x}, \mathbf{z}) f_{ijrk}(\mathbf{y})$, which is 1-Lipschitz. By the Kantorovich--Rubinstein theorem~\cite{villani2021topics},
\begin{align}
\label{eq:exp_diff_ijrk}
    |\mathbb{E}_{p_{rk}(\mathbf{y} | \mathbf{z})}[\tilde{\omega}_{ijrk}(\mathbf{x}, \mathbf{z})f_{ijrk}(\mathbf{y})] - \mathbb{E}_{p_{ij}(\mathbf{y} | \mathbf{x})}[\tilde{\omega}_{ijrk}(\mathbf{x}, \mathbf{z})f_{ijrk}(\mathbf{y})]| &= l_{ijrk} (\mathbf{x}, \mathbf{z}) \ | \int_{\mathcal{Y}} \tilde{f}_{ijrk}(\mathbf{y}) p_{rk}(\mathbf{y} | \mathbf{z}) d\mathbf{y} - \int_{\mathcal{Y}} \tilde{f}_{ijrk}(\mathbf{y}') p_{ij}(\mathbf{y} | \mathbf{x}) d\mathbf{y}'| \nonumber \\
    \le l_{ijrk} (\mathbf{x}, \mathbf{z}) \sup_{\|\tilde{f}_{ijrk}(\mathbf{y})\|_L < 1} & \{ \mathbb{E}_{p_{rk}(\mathbf{y} | \mathbf{z})}[\tilde{f}_{ijrk}(\mathbf{y})] - \mathbb{E}_{p_{ij}(\mathbf{y} | \mathbf{x})}[\tilde{f}_{ijrk}(\mathbf{y})] \} \nonumber \\
    \le l_{ijrk} (\mathbf{x}, \mathbf{z}) \sup_{\|\tilde{f}_{ijrk}(\mathbf{y})\|_L < 1} \sup_{\mathbf{x}'} \{\| \nabla_{\mathbf{x}'} & \mathbb{E}_{p_{ij}(\mathbf{y}|\mathbf{x}')} [\tilde{f}_{ijrk}(\mathbf{y})]\|\} \ \| \mathbf{x} - \mathbf{z} \|
\end{align}
where
\begin{align*}
    \nabla_{\mathbf{x}'} \mathbb{E}_{p_{ij}(\mathbf{y}|\mathbf{x}')} [\tilde{f}_{ijrk}(\mathbf{y})] &= \int \tilde{f}_{ijrk}(\mathbf{y}) \nabla_{\mathbf{x}'} p_{ij}(\mathbf{y}|\mathbf{x}') d\mathbf{y} \\
    &= \int 2\varepsilon^{-1}_1  \tilde{f}_{ijrk}(\mathbf{y})  (\mathbf{y} - T_{ij}(\mathbf{x}'))^T p_{ij}(\mathbf{y}|\mathbf{x}') d\mathbf{y} \hspace{.2in} \text{(Eq.~\ref{eq:nabla_p_ij})} \\
    & = 2\varepsilon^{-1}_1 \mathbb{E}_{p_{ij}(\mathbf{y} | \mathbf{x}')} [\tilde{f}_{ijrk}(\mathbf{y})  (\mathbf{y} - T_{ij}(\mathbf{x}'))^T]
\end{align*}
Using the centering property,  $\mathbb{E}_{p_{ij}(\mathbf{y} | \mathbf{x})} [  (\mathbf{y} - T_{ij}(\mathbf{x}'))^T]=0$, 
\begin{align*}
     \mathbb{E}_{p_{ij}(\mathbf{y} | \mathbf{x}')} [\tilde{f}_{ijrk}(\mathbf{y})  (\mathbf{y} - T_{ij}(\mathbf{x}'))^T] &= \mathbb{E}_{p_{ij}(\mathbf{y} | \mathbf{x}')} [\tilde{f}_{ijrk}(\mathbf{y})  (\mathbf{y} - T_{ij}(\mathbf{x}'))^T] - \tilde{f}_{ijrk}(T_{ij}(\mathbf{x}')) \mathbb{E}_{p_{ij}(\mathbf{y} | \mathbf{x}')} [  (\mathbf{y} - T_{ij}(\mathbf{x}'))^T] \\
     &= \mathbb{E}_{p_{ij}(\mathbf{y} | \mathbf{x}')} [\left(\tilde{f}_{ijrk}(\mathbf{y}) - \tilde{f}_{ijrk}(T_{ij}(\mathbf{x}'))\right) \left(\mathbf{y} - T_{ij}(\mathbf{x}')\right)^T]
\end{align*}
Since $\tilde{f}_{ijrk}(\mathbf{y})$ is 1-Lipschitz, $|\tilde{f}_{ijrk}(\mathbf{y}) - \tilde{f}_{ijrk}(T_{ij}(\mathbf{x}'))| \le \|\mathbf{y} - T_{ij}(\mathbf{x}')\|$. 
Applying Jensen’s and Cauchy–Schwarz inequalities gives
\begin{align*}
    \| \mathbb{E}_{p_{ij}(\mathbf{y} | \mathbf{x}')} [\tilde{f}_{ijrk}(\mathbf{y})  (\mathbf{y} - T_{ij}(\mathbf{x}'))^T] \| & \le \mathbb{E}_{p_{ij}(\mathbf{y} | \mathbf{x})} [\|\mathbf{y} - T_{ij}(\mathbf{x}')\|^2] \le d \| \text{Cov}^{ij}_{\mathbf{y} | \mathbf{x}'} \|_{\text{op}} \hspace{.2in} (\mathbf{y} \in \mathbb{R}^d)
\end{align*}
Therefore, 
\begin{align*}
     \sup_{i,j,\mathbf{x}'} \{\| \nabla_{\mathbf{x}'} \mathbb{E}_{p_{ij}(\mathbf{y}|\mathbf{x}')} [\tilde{f}_{ijrk}(\mathbf{y})]\|\} = 2\varepsilon^{-1}_1  \sup_{i,j,\mathbf{x}'} \| \mathbb{E}_{p_{ij}(\mathbf{y} | \mathbf{x}')} [\tilde{f}_{ijrk}(\mathbf{y})  (\mathbf{y} - T_{ij}(\mathbf{x}'))^T] \| &\le 2d\varepsilon^{-1}_1 \sup_{i,j,\mathbf{x}'}\| \text{Cov}^{ij}_{\mathbf{y} | \mathbf{x}'} \|_{\text{op}} \\
     &= 2d\varepsilon^{-1}_1 c_{ij} \hspace{.1in} \text{(Eq.~\ref{eq:comp_tilt_cnt})}
\end{align*}
Accordingly, the gap between the cross-component terms in Eq.~\eqref{eq:exp_diff_ijrk} is bounded as follows.
\begin{align}
\label{eq:sup_exp_diff}
     |\mathbb{E}_{p_{rk}(\mathbf{y} | \mathbf{z})}[f_{ijrk}(\mathbf{y})] - \mathbb{E}_{p_{ij}(\mathbf{y} | \mathbf{x})}[f_{ijrk}(\mathbf{y})]| &\le 2 d \varepsilon^{-1}_1 l_{ijrk} c_{ij} (1 - \delta_{ir} \delta_{jk}) \|\mathbf{x} - \mathbf{z} \|
\end{align}
%

Substituting \eqref{eq:sup_exp_diff} into \ref{eq:KL_ijrk_upb2} returns
\begin{align}
    D_{KL}(\pi_{\textsc{omt}}(\mathbf{y} | \mathbf{z}) \| \pi_{\textsc{omt}}(\mathbf{y} | \mathbf{x})) &\le \displaystyle \sum_{i,j,r,k} 2 \varepsilon^{-1}_1 \tilde{\omega}_{ij}(\mathbf{x}) \tilde{\omega}_{rk}(\mathbf{z})  \langle \mathbf{x} - \mathbf{z}, T_{ij}(\mathbf{x}) - T_{rk}(\mathbf{z}) \rangle  + \nonumber \\
    & \sum_{i,j,r,k}  2 \varepsilon^{-1}_1 d \ l_{ijrk} c_{ij}  (1 - \delta_{ir} \delta_{jk}) \|\mathbf{x} - \mathbf{z} \|
\end{align}
%
Applying Cauchy--Schwarz gives
\begin{align}
\label{eq:mean_KL_ihrk_upb_}
    D_{KL}(\pi_{\textsc{omt}}(\mathbf{y} | \mathbf{z}) \| \pi_{\textsc{omt}}(\mathbf{y} | \mathbf{x}))  \le & \displaystyle \sum_{i,j,r,k} 2 \varepsilon^{-1}_1 \tilde{\omega}_{ij}(\mathbf{x}) \tilde{\omega}_{rk}(\mathbf{z})  \| \mathbf{x} - \mathbf{z}\| \| T_{ij}(\mathbf{x}) - T_{rk}(\mathbf{z}) \|  + \nonumber \\
    & 2 \varepsilon^{-1}_1 \|\mathbf{x} - \mathbf{z} \|\underbrace{\sum_{i,j,r,k}  d \ l_{ijrk} c_{ij}  (1 - \delta_{ir} \delta_{jk})}_{\mathcal{D}_0}  \nonumber \\
    &=  2 \varepsilon^{-1}_1 \| \mathbf{x} - \mathbf{z}\| \ \| T^{\rho \rightarrow \nu_1}_{\textsc{omt}}(\mathbf{z}) - T^{\nu_0 \rightarrow \nu_1}_{\textsc{omt}}(\mathbf{x}) \| + 2 \varepsilon^{-1}_1 \mathcal{D}_0 \|\mathbf{x} - \mathbf{z} \| 
\end{align}
From the tilt-stability condition (Proposition~\ref{prop:app_OMT_tilt_cnt}, Lemma~\ref{lem:gen_tilt_stable}), we further obtain
\begin{gather}
     \| T^{\rho \rightarrow \nu_1}_{\textsc{omt}}(\mathbf{z}) - T^{\nu_0 \rightarrow \nu_1}_{\textsc{omt}}(\mathbf{x}) \|^2  \le   2 C_{\pi_{\textsc{omt}}} D_{KL}(\pi_{\textsc{omt}}(\mathbf{y} | \mathbf{z}) \| \pi_{\textsc{omt}}(\mathbf{y} | \mathbf{x}))
     \label{eq:mean_KL_ihrk_lowb}
\end{gather}
Combining this with the upper bound in Eq.~\eqref{eq:mean_KL_ihrk_upb_} yields
\begin{align}
\label{eq:mix_exp_upb}
     \| T^{\rho \rightarrow \nu_1}_{\textsc{omt}}(\mathbf{z}) - T^{\nu_0 \rightarrow \nu_1}_{\textsc{omt}}(\mathbf{x}) \|^2  \le 4 \varepsilon^{-1}_1 C_{\pi_{\textsc{omt}}} &  \| \mathbf{x} - \mathbf{z}\| \ \| T^{\rho \rightarrow \nu_1}_{\textsc{omt}}(\mathbf{z}) - T^{\nu_0 \rightarrow \nu_1}_{\textsc{omt}}(\mathbf{x}) \| + 4 \varepsilon^{-1}_1 C_{\pi_{\textsc{omt}}}\mathcal{D}_0 \|\mathbf{x} - \mathbf{z} \| 
 \end{align}
The inequality in Eq.~\ref{eq:mix_exp_upb} can be further simplified as
\begin{align}
   \| T^{\rho \rightarrow \nu_1}_{\textsc{omt}}(\mathbf{z}) - T^{\nu_0 \rightarrow \nu_1}_{\textsc{omt}}(\mathbf{x}) \|  \le 4 \varepsilon^{-1}_1 C_{\pi_{\textsc{omt}}} &  \| \mathbf{x} - \mathbf{z}\| + \left(4 \varepsilon^{-1}_1 C_{\pi_{\textsc{omt}}}\mathcal{D}_0\right)^{1/2} \|\mathbf{x} - \mathbf{z} \|^{1/2}   \hspace{.2in} \text{(derivations in \ref{sec:add_deriv})}
\end{align}
Taking expectation w.r.t.\ $\pi_{\textsc{omt}}(\mathbf{x},\mathbf{z})$, we obtain
\begin{align}
   \mathbb{E}_{\pi_{\textsc{omt}}(\mathbf{x}, \mathbf{z})} [\| T^{\rho \rightarrow \nu_1}_{\textsc{omt}}(\mathbf{z}) - T^{\nu_0 \rightarrow \nu_1}_{\textsc{omt}}(\mathbf{x}) \|]  \le 4 \varepsilon^{-1}_1 C_{\pi_{\textsc{omt}}} &  \mathbb{E}_{\pi_{\textsc{omt}}(\mathbf{x}, \mathbf{z})} [\| \mathbf{x} - \mathbf{z}\|] + \left(4 \varepsilon^{-1}_1 C_{\pi_{\textsc{omt}}}\mathcal{D}_0\right)^{1/2} \mathbb{E}_{\pi_{\textsc{omt}}(\mathbf{x}, \mathbf{z})} [\|\mathbf{x} - \mathbf{z} \|^{1/2}]  
\end{align}
Applying Jensen’s inequality,
$$\mathbb{E}_{\pi_{\textsc{omt}}(\mathbf{x}, \mathbf{z})} [\| \mathbf{x} - \mathbf{z}\|] \leq \mathbb{E}_{\pi_{\textsc{omt}}(\mathbf{x}, \mathbf{z})}[ \| \mathbf{x} - \mathbf{z}\|^2]^{1/2}=\mathcal{W}_2(\nu_0, \rho) , $$
$$
\mathbb{E}_{\pi_{\textsc{omt}}(\mathbf{x}, \mathbf{z})} [\| \mathbf{x} - \mathbf{z}\|^{1/2}] \leq \mathbb{E}_{\pi_{\textsc{omt}}(\mathbf{x}, \mathbf{z})}[ \| \mathbf{x} - \mathbf{z}\|]^{1/2} \le \mathcal{W}^{1/2}_2(\nu_0, \rho) , 
$$
we arrive at
\begin{align}
\label{eq:final_rho_nu_bound}
    \displaystyle \mathbb{E}_{\pi_{\textsc{omt}}(\mathbf{x}, \mathbf{z})} [\| T^{\rho \rightarrow \nu_1}_{\textsc{omt}}(\mathbf{z}) - T^{\nu_0 \rightarrow \nu_1}_{\textsc{omt}}(\mathbf{x}) \| ] &\le  4 \varepsilon^{-1}_1 C_{\pi_{\textsc{omt}}} \mathcal{W}_2(\nu_0, \rho) + \left(4 \varepsilon^{-1}_1 C_{\pi_{\textsc{omt}}}\mathcal{D}_0\right)^{1/2} \mathcal{W}^{1/2}_2(\nu_0, \rho) \nonumber \\
    &= a'_0 \mathcal{W}_2(\nu_0, \rho) + b'_0 \mathcal{W}^{1/2}_2(\nu_0, \rho)
\end{align}
where 
\begin{gather}
\label{eq:upbound_a_b}
    a'_0 = 4 \varepsilon^{-1}_1 C_{\pi_{\textsc{omt}}} , \hspace{.15in}
    b'_0 =  (a'_0 D_0)^{1/2} \ .
\end{gather} 
%

\end{proof}
\begin{theorem}[Stability of OMT under perturbation]
\label{theorem:stability}
Let $\nu_0 \in M_{K_0}(\mathbb{R}^d)$, $\nu_0^{\prime} \in M_{K^{\prime}_0}(\mathbb{R}^d)$, and $\nu_1 \in M_{K_1}(\mathbb{R}^d)$ be mixture probability densities that admit density regularity conditions in Definition~\ref{def:regularity_conditions}. Then there exist constants $a_0, b_0 \in \mathbb{R}_+$ such that
\begin{align*}
    \mathbb{E}_{\nu_0} \left[\|T^{\nu_0 \rightarrow \nu_1}_{\textsc{omt}}(\mathbf{x}) - T^{\nu^{\prime}_0 \rightarrow \nu_1}_{\textsc{omt}}(\mathbf{x}) \| \right] & \le a_0 \mathcal{W}_2 (\nu_{0}, \nu^{\prime}_{0}) + b_0 \mathcal{W}^{1/2}_2 (\nu_{0}, \nu^{\prime}_{0}) \ .
\end{align*}
\end{theorem}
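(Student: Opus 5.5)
The plan is to reduce Theorem~\ref{theorem:stability} to Proposition~\ref{prop:gen_OMT_tilt_stability} with $\rho=\nu_0'$, and then pay for the change of evaluation point with the Lipschitz bound of Proposition~\ref{prop:local_lipsch_OMT}. One caveat first. As stated, Proposition~\ref{prop:gen_OMT_tilt_stability} requires $\rho \in M_{K_0}(\mathbb{R}^d)$. Its proof, however, only uses the component structure of $\rho$ through the weights $\tilde{\omega}_{rk}(\mathbf{z})$ and the potentials of $\rho$'s local plans. I would therefore first note that the same argument goes through verbatim for $\rho\in M_{K_0'}(\mathbb{R}^d)$: the sums over $(r,k)$ simply range over $K_0'K_1$ indices, and the constant $\mathcal{D}_0$ changes accordingly.

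Step 1: couple the two source measures. Let $\gamma$ be an optimal $\mathcal{W}_2$ coupling of $\nu_0$ and $\nu_0'$, so that $\mathbb{E}_\gamma\|\mathbf{x}-\mathbf{z}\|^2=\mathcal{W}_2^2(\nu_0,\nu_0')$. This is the coupling implicitly used in the last step of the proof of Proposition~\ref{prop:gen_OMT_tilt_stability}, where Jensen turns $\mathbb{E}\|\mathbf{x}-\mathbf{z}\|$ into $\mathcal{W}_2$. The pointwise inequality preceding the expectation in that proof holds for arbitrary $(\mathbf{x},\mathbf{z})$, so it can be integrated against $\gamma$. This gives
\begin{equation*}
\mathbb{E}_{\gamma}\big[\|T^{\nu_0'\to\nu_1}_{\textsc{omt}}(\mathbf{z})-T^{\nu_0\to\nu_1}_{\textsc{omt}}(\mathbf{x})\|\big]\le a_0'\,\mathcal{W}_2(\nu_0,\nu_0')+b_0'\,\mathcal{W}_2^{1/2}(\nu_0,\nu_0').
\end{equation*}

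Step 2: triangle inequality. For $(\mathbf{x},\mathbf{z})\sim\gamma$ I would write
\begin{equation*}
\|T^{\nu_0\to\nu_1}_{\textsc{omt}}(\mathbf{x})-T^{\nu_0'\to\nu_1}_{\textsc{omt}}(\mathbf{x})\|\le \|T^{\nu_0\to\nu_1}_{\textsc{omt}}(\mathbf{x})-T^{\nu_0'\to\nu_1}_{\textsc{omt}}(\mathbf{z})\|+\|T^{\nu_0'\to\nu_1}_{\textsc{omt}}(\mathbf{z})-T^{\nu_0'\to\nu_1}_{\textsc{omt}}(\mathbf{x})\|.
\end{equation*}
The left-hand side depends only on $\mathbf{x}$, and the $\mathbf{x}$-marginal of $\gamma$ is $\nu_0$, so its $\gamma$-expectation equals the left-hand side of the theorem. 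The first term on the right is controlled by Step 1. For the second term, Proposition~\ref{prop:local_lipsch_OMT} applied to the pair $(\nu_0',\nu_1)$, which satisfies the regularity conditions by hypothesis, gives the Lipschitz constant $L_{\nu_0'}$ from Eq.~\ref{eq:l_nu}. Hence this term is at most $L_{\nu_0'}\|\mathbf{x}-\mathbf{z}\|$, and after Jensen its expectation is at most $L_{\nu_0'}\mathcal{W}_2(\nu_0,\nu_0')$. Combining the pieces, I would set $a_0=a_0'+L_{\nu_0'}$ and $b_0=b_0'$.

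The main obstacle is justifying Step 1. Specifically, one has to confirm that the pointwise estimate in the proof of Proposition~\ref{prop:gen_OMT_tilt_stability} is uniform in $(\mathbf{x},\mathbf{z})$, so that it survives integration against an arbitrary coupling. This hinges on the constant $\mathcal{D}_0$, which involves $l_{ijrk}(\mathbf{x},\mathbf{z})$ through Eq.~\ref{eq:l_ijrk}. That quantity is only shown to be finite locally, so I would need a supremum over $(\mathbf{x},\mathbf{z})$ in the supports. I would try to obtain it from the regularity conditions: the weighted score bounds $\tfrac{\mu_{0_i}}{\nu_0}\|\nabla\log\mu_{0_i}\|<\infty$ and their target analogues, together with the bounded conditional covariances $c_{ij}$. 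Failing that, I would state $a_0,b_0$ in terms of $\sup_{\mathbf{x},\mathbf{z}} l_{ijrk}$ and assume this is finite as part of the regularity hypothesis.
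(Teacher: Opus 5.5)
Your argument is the paper's own proof. It uses the same triangle inequality through $T^{\nu_0'\to\nu_1}_{\textsc{omt}}(\mathbf z)$, Proposition~\ref{prop:local_lipsch_OMT} for the Lipschitz term, Proposition~\ref{prop:gen_OMT_tilt_stability} for the cross term, and sets $a_0=a_0'+L_{\nu_0'}$, $b_0=b_0'$. Your use of a $\mathcal W_2$-optimal $\gamma$ is actually necessary. The paper integrates against the OMT coupling $\pi'$, for which $\mathbb E_{\pi'}\|\mathbf x-\mathbf x'\|\le\mathcal W_2(\nu_0,\nu_0')$ is false in general: for $\nu_0'=\nu_0$ the right side is $0$, while the entropic coupling is not supported on the diagonal.

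The genuine gap is Step~1, and it is not the uniformity of $l_{ijrk}$. The pointwise estimate you import,
\begin{equation*}
\|T^{\rho\to\nu_1}_{\textsc{omt}}(\mathbf z)-T^{\nu_0\to\nu_1}_{\textsc{omt}}(\mathbf x)\|\le a_0'\|\mathbf x-\mathbf z\|+b_0'\|\mathbf x-\mathbf z\|^{1/2},
\end{equation*}
cannot hold for arbitrary $(\mathbf x,\mathbf z)$ with any finite constants. Setting $\mathbf z=\mathbf x$ would give $T^{\nu_0'\to\nu_1}_{\textsc{omt}}\equiv T^{\nu_0\to\nu_1}_{\textsc{omt}}$, which would make your Step~2 superfluous and the theorem's left side zero. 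Concretely, take $K_0=K_0'=K_1=1$, $\nu_0=\nu_1=\mathcal N(0,I)$ and $\nu_0'=\mathcal N(\mathbf m,I)$. The two maps are $c\,\mathbf x$ and $c(\mathbf x-\mathbf m)$ for some $c\in(0,1)$, so the left side at $\mathbf z=\mathbf x$ is $c\|\mathbf m\|>0$. Yet there is only one index pair, so $\mathcal D_0=0$ and no supremum of $l_{ijrk}$ enters at all. Assuming $\sup l_{ijrk}<\infty$ therefore repairs nothing.

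The failure sits in the proof of Proposition~\ref{prop:gen_OMT_tilt_stability}, which conflates the local objects of two different problems, $\nu_0\to\nu_1$ and $\rho\to\nu_1$. It declares $f_{ijrk}\equiv0$ for $(r,k)=(i,j)$, although the two problems' potentials $\tilde\psi$ differ; in the example they differ by $\langle\mathbf y,\mathbf m\rangle$ plus a constant. For $(r,k)\neq(i,j)$ it bounds $|\mathbb E_{p_{rk}(\cdot|\mathbf z)}f_{ijrk}-\mathbb E_{p_{ij}(\cdot|\mathbf x)}f_{ijrk}|$ by the $\mathbf x'$-gradient of the single family $\mathbf x'\mapsto p_{ij}(\cdot|\mathbf x')$. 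That gradient only controls the gap between $p_{ij}(\cdot|\mathbf z)$ and $p_{ij}(\cdot|\mathbf x)$, not between $p_{rk}(\cdot|\mathbf z)$ and $p_{ij}(\cdot|\mathbf x)$.

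What is missing is a term that survives at $\mathbf z=\mathbf x$ and measures how the component-level entropic potentials and the weights $\Omega$ move when the source mixture is perturbed. That requires a genuine source-stability estimate for each local entropic map and for the discrete problem defining $\Omega$, plus some pairing of the components of $\nu_0$ and $\nu_0'$. Your remark that the argument extends verbatim to $K_0'\neq K_0$ hides exactly this. It is not a technicality, because $T_{\textsc{omt}}$ depends on the mixture decomposition while $\mathcal W_2(\nu_0,\nu_0')$ sees only the measures. Neither your proposal nor the paper supplies this ingredient, so Step~1 is unsupported, and the paper's own proof of the theorem has the same defect.
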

\begin{proof}
Let $(\Omega^{\prime}, P^{\prime})$ denote the transport weights and the set couplings associated with the OMT between $\nu_0$ and $\nu^{\prime}_0$, where $\nu'_0$ denotes the perturbed counterpart of $\nu_0$, for all $\mathbf{x}, \mathbf{x}^{\prime} \in \mathbb{R}^d$.

For brevity, we define $T^{\nu_0 \rightarrow \nu_1}_{\textsc{omt}} := T^{\nu_0}_{\textsc{omt}}$. We begin by quantifying the average norm of deviation between the two optimal transport maps:
\begin{align*}
\int \|T^{\nu_0}_{\textsc{omt}}(\mathbf{x}) - T^{{\nu}^{\prime}_{0}}_{\textsc{omt}}(\mathbf{x}) \| \nu_{0}(\mathbf{x}) d\mathbf{x} &= \iint \|T^{\nu_0}_{\textsc{omt}}(\mathbf{x}) - T^{{\nu}^{\prime}_{0}}_{\textsc{omt}}(\mathbf{x}) \| \pi^{\prime}(\mathbf{x}, \mathbf{x}^{\prime}) d\mathbf{x} d\mathbf{x}^{\prime}  \\
&\le \iint \|T^{\nu_0}_{\textsc{omt}}(\mathbf{x}) - T^{{\nu}^{\prime}_{0}}_{\textsc{omt}}(\mathbf{x}^{\prime})\| \pi^{\prime}(\mathbf{x}, \mathbf{x}^{\prime}) d\mathbf{x} d\mathbf{x}^{\prime} \\
&\quad + \iint \|T^{\nu^{\prime}_0}_{\textsc{omt}}(\mathbf{x}) - T^{{\nu}^{\prime}_{0}}_{\textsc{omt}}(\mathbf{x}^{\prime}) \| \pi^{\prime}(\mathbf{x}, \mathbf{x}^{\prime}) d\mathbf{x} d\mathbf{x}^{\prime} \ .
\end{align*}
According to Proposition~\ref{prop:local_lipsch_OMT}, $T^{\nu^{\prime}_0}_{\textsc{omt}}$ is Lipschitz continuous with constant $L_{\nu^{\prime}_0}$ (Eq.~\ref{eq:l_nu}). Then
\begin{align}
\label{eq:bound_1}
 \iint \|T^{\nu^{\prime}_0}_{\textsc{omt}}(\mathbf{x}) - T^{{\nu}^{\prime}_{0}}_{\textsc{omt}}(\mathbf{x}^{\prime}) \| \pi^{\prime}(\mathbf{x}, \mathbf{x}^{\prime}) d\mathbf{x} d\mathbf{x}^{\prime} & \leq L_{\nu^{\prime}_0}  \iint \| \mathbf{x} - \mathbf{x}^{\prime} \| \pi^{\prime}(\mathbf{x}, \mathbf{x}^{\prime}) d\mathbf{x} d\mathbf{x}^{\prime}\\ 
 & \leq L_{\nu^{\prime}_0} \mathcal{W}_2(\nu_{0}, \nu^{\prime}_{0}) \ .
\end{align}
Using  the results of Proposition~\ref{prop:gen_OMT_tilt_stability} and derivation in Eq.~\ref{eq:final_rho_nu_bound}, we have
\begin{align}
\label{eq:bound_2}
   \mathbb{E}_{\pi^{\prime}} \left[ \|T^{\nu_0}_{\textsc{omt}}(\mathbf{x}) - T^{{\nu}^{\prime}_{0}}_{\textsc{omt}}(\mathbf{x}^{\prime}) \| \right] &\le a'_0 \mathcal{W}_2(\nu_0, \nu^{\prime}_0) + b'_0 \mathcal{W}^{1/2}_2(\nu_0, \nu^{\prime}_0)
\end{align}
Finally, combining the bounds derived in Eqs.~\ref{eq:bound_1} and \ref{eq:bound_2}, we obtain
\begin{align*}
    \int \|T^{\nu_0}_{\textsc{omt}}(\mathbf{x}) - T^{{\nu}^{\prime}_{0}}_{\textsc{omt}}(\mathbf{x}) \| d\nu_{0}(\mathbf{x}) &\leq (a'_0 + L_{\nu^{\prime}_0} )\mathcal{W}_2(\nu_0, \nu^{\prime}_0)  + (b'_0 \mathcal{W}_2(\nu_0, \nu^{\prime}_0))^{1/2}
\end{align*}
Setting $a_0 := a'_0 + L_{\nu^{\prime}_0} < \infty$ and $b_0 = b'_0 $ (defined in Eq.~\ref{eq:upbound_a_b}) completes the proof.
\end{proof}
\subsection{Additional Derivations}
\label{sec:add_deriv}
Considering $x =  \| T^{\rho \rightarrow \nu_1}_{\textsc{omt}}(\mathbf{z}) - T^{\nu_0 \rightarrow \nu_1}_{\textsc{omt}}(\mathbf{x}) \|$, the inequality in Eq.~\ref{eq:mix_exp_upb} can be simplified as 
\begin{gather*}
    x^2 - b x - c \le 0
\end{gather*}
Therefore
\begin{gather*}
    x \le b/2  + 1/2\sqrt{b^2 + 4c}
\end{gather*}
Using the property $\sqrt{f^2 + g^2} \le f + g$, $f \ge 0$, and $g \ge 0$:
\begin{gather*}
    x \le \displaystyle b + c^{1/2} \ .
\end{gather*}
\setcounter{coroll}{0}
\subsection{Gaussian OMT}
\label{sec:app_GOMT}
\begin{coroll}
Let \( \nu_0 \in G_{K_0}(\mathbb{R}^d) \) and \( \nu_1 \in G_{K_1}(\mathbb{R}^d) \) be two Gaussian mixture models (GMMs) in \( \mathbb{R}^d \) with \( K_0 \) and \( K_1 \) components, respectively. Then, the optimal mixture transport map between \( \nu_0 \) and \( \nu_1 \) is itself a Gaussian mixture model with $K$ components, where $K = K_0K_1$.
\end{coroll}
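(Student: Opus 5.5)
The proof rests on the structure already derived in the proof of Theorem~\ref{theorem:app_uniqGOP}. That argument shows the OMT coupling is $\pi_{\textsc{omt}}(\mathbf{x},\mathbf{y})=\sum_{i,j}\omega_{ij}\,p_{ij}(\mathbf{x},\mathbf{y})$. It also shows that each $p_{ij}$ is the unique entropic OT plan between $\mu_{0_i}$ and $\mu_{1_j}$, and that it does not depend on $\Omega$. So it suffices to prove one fact: whenever $\mu_{0_i}=\mathcal{N}(\mathbf{b}_{0_i},\Sigma_{0_i})$ and $\mu_{1_j}=\mathcal{N}(\mathbf{b}_{1_j},\Sigma_{1_j})$, the plan $p_{ij}$ is a Gaussian on $\mathbb{R}^{2d}$. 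The mixture statement then follows, since $\pi_{\textsc{omt}}$ is a convex combination of $K_0K_1$ Gaussians with weights $\omega_{ij}$ summing to one. Here $\Omega$ is the unique minimizer of Eq.~\ref{eq:omega_opt}, and its weights are strictly positive because of the exponential form. I read ``transport map'' in the statement as the coupling $\pi_{\textsc{omt}}$, which is also how the main text phrases it around Eq.~\ref{eq:omt_finalform}. As a consequence, each conditional $p_{ij}(\mathbf{y}\mid\mathbf{x})$ is Gaussian, and $T_{ij}$ is affine in $\mathbf{x}$.

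For the Gaussian claim I will use the Schr\"odinger system in Eqs.~\ref{eq:phi_opt}--\ref{eq:psi_opt}, with the ansatz that $\varphi_{ij}$ and $\psi_{ij}$ are quadratic. Write $\varphi_{ij}(\mathbf{x})=-\mathbf{x}^TA\mathbf{x}+\mathbf{a}^T\mathbf{x}+a_0$, and similarly for $\psi_{ij}$ with $(B,\mathbf{b},b_0)$. Substitute into Eq.~\ref{eq:phi_opt}. The integrand $\exp\!\big((\psi_{ij}(\mathbf{y})-\|\mathbf{x}-\mathbf{y}\|^2)/\varepsilon_1\big)\mu_{1_j}(\mathbf{y})$ is then a Gaussian kernel in $\mathbf{y}$. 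Its log-integral is quadratic in $\mathbf{x}$, and it is finite provided $\Sigma_{1_j}^{-1}+2(I+B)/\varepsilon_1\succ0$. Matching coefficients turns the fixed-point system into finite-dimensional matrix equations. These coincide with the closed-form entropic Gaussian solution of \citep{janati2020entropic}, and I will invoke that solution for existence of a symmetric solution with the needed definiteness. Uniqueness of the Schr\"odinger potentials up to the additive constant $(\varphi+c,\psi-c)$, a standard fact given the finite second moments in Definition~\ref{def:regularity_conditions}, then shows that this quadratic pair is \emph{the} solution.

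With these potentials, Eq.~\ref{eq:p_opt} gives $p_{ij}\propto\exp(\text{quadratic in }(\mathbf{x},\mathbf{y}))$. It is integrable, with marginals $\mu_{0_i}$ and $\mu_{1_j}$, so it is a genuine Gaussian. Its cross-covariance $C_{ij}$ is given by the formula of \citep{janati2020entropic}, and the joint covariance is $\begin{pmatrix}\Sigma_{0_i}&C_{ij}\\ C_{ij}^T&\Sigma_{1_j}\end{pmatrix}$. For singular $\Sigma$ I will argue by a limit, or by working on the support subspace, following the remark in Section~\ref{sec:gaussian_measures}.

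I expect the main obstacle to be the existence step. The difficulty is showing that the matrix Riccati-type equations for $A$ and $B$ admit a solution that keeps every Gaussian integral finite. The first thing I will try is to cite the explicit formula $C_{ij}=\tfrac12\Sigma_{0_i}^{1/2}\big(\Sigma_{0_i}^{1/2}\Sigma_{1_j}\Sigma_{0_i}^{1/2}+\tfrac{\varepsilon_1^2}{4}I\big)^{1/2}\Sigma_{0_i}^{-1/2}-\tfrac{\varepsilon_1}{4}I$, suitably rescaled for the $\varepsilon_1$ convention used here. I will then verify directly that the resulting joint matrix is positive definite, which holds because $\varepsilon_1>0$ strictly shrinks the correlation. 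Once that is in place, the rest is bookkeeping.
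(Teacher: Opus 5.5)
Your proposal is correct and follows essentially the same route as the paper. Both arguments use the $\Omega$-independence of the component plans from Theorem~\ref{theorem:app_uniqGOP}, invoke the quadratic closed-form Schr\"odinger potentials of \citep{janati2020entropic} to conclude that each $p^*_{ij}$ is a joint Gaussian with diagonal blocks $\Sigma_{0_i},\Sigma_{1_j}$, and then read off $\pi_{\textsc{omt}}=\sum_{i,j}\omega_{ij}p^*_{ij}$ as a $K_0K_1$-component GMM; your additional points on uniqueness of the potentials and strict positivity of $\omega_{ij}$ only add rigor.

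One small slip: the cross-covariance you quote is missing a factor $4$ inside the square root. It should be $\big(4\Sigma_{0_i}^{1/2}\Sigma_{1_j}\Sigma_{0_i}^{1/2}+\tfrac{\varepsilon_1^2}{4}I\big)^{1/2}$, as in Eq.~\ref{eq:sigma_eps_ij}. As written, your formula gives half the unregularized cross-covariance in the $\varepsilon_1\to0$ limit, and no rescaling of $\varepsilon_1$ fixes that, but it does not affect the Gaussianity conclusion.
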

\begin{proof}
According to {Theorem}~\ref{theorem:app_uniqGOP}, the optimal mixture transport policy between each pair $\mu^i_0(\mathbf{x}), \mu^j_1(\mathbf{y})$, is independent of the weight variable and is given by:
\[
 dp_{ij}^*(\mathbf{x}, \mathbf{y}) = \displaystyle{\exp{\left(\frac{\varphi_{ij}(\mathbf{x}) + \psi_{ij}(\mathbf{y}) - \| \mathbf{x}-\mathbf{y} \|^2_2}{\varepsilon_1} \right)}} d\mu_{0_i}(\mathbf{x}) d\mu_{1_j}(\mathbf{y}),
\]
where $\phi_{ij}$ and $\psi_{ij}$ are Lagrange multipliers defined by:
\begin{eqnarray}
\varphi_{ij}(\mathbf{x}) &=& \displaystyle{- \varepsilon_1 \log{\int_{\mathcal{Y}} \exp{\left(\frac{\psi_{ij}(\mathbf{y}) - \| \mathbf{x}-\mathbf{y} \|^2_2}{\varepsilon_1} \right)}d\mu_{1_j}(\mathbf{y})}}, \nonumber \\
 \psi_{ij}(\mathbf{y}) &=& \displaystyle{- \varepsilon_1 \log{\int_{\mathcal{X}} \exp{\left(\frac{\varphi_{ij}(\mathbf{x}) - \| \mathbf{x}-\mathbf{y} \|^2_2}{\varepsilon_1} \right)}d\mu_{0_i}(\mathbf{x})}}. \nonumber 
\end{eqnarray}
For $\mu^i_0(\mathbf{x}) = \mathcal{N}(\mathbf{x} | \mathbf{m}_{i_{\mathbf{x}}}, \Sigma_{i_{\mathbf{x} \mathbf{x}}})$ and $\mu^j_1(\mathbf{y}) = \mathcal{N}(\mathbf{y} | \mathbf{m}_{j_y}, \Sigma_{j_{\mathbf{y} \mathbf{y}}})$, it was shown in \cite{janati2020entropic} that $\varphi_{ij}$ and $\psi_{ij}$ admit closed-form solutions in the form of quadratic functions as follows ( {Proposition 1} in \cite{janati2020entropic}).
\begin{eqnarray}
\label{eq:dual_form_phi_psi}
    \varphi_{ij}(\mathbf{x}) &=& - (\mathbf{x} - \mathbf{m}_{i_{\mathbf{x}}})^T U_{ij} (\mathbf{x} - \mathbf{m}_{i_{\mathbf{x}}}), \hspace{.2in} U_{ij} = \Sigma_{j_{\mathbf{y} \mathbf{y}}} \left(\Sigma^{\varepsilon_1}_{ij} + \varepsilon_1 \mathbf{I}_d \right)^{-1} - \mathbf{I}_d \nonumber \\
     \psi_{ij}(\mathbf{y}) &=& - (\mathbf{y} - \mathbf{m}_{j_y})^T V_{ij} (\mathbf{y} - \mathbf{m}_{j_y}), \hspace{.2in} V_{ij} = \left(\Sigma^{\varepsilon_1}_{ij} + \varepsilon_1 \mathbf{I}_d \right)^{-1} \Sigma_{i_{\mathbf{x} \mathbf{x}}} - \mathbf{I}_d 
\end{eqnarray}
where 
\begin{gather}
\label{eq:sigma_eps_ij}
\Sigma^{\varepsilon_1}_{ij} = \displaystyle{ \frac{1}{2}\Sigma_{i_{\mathbf{x} \mathbf{x}}}^{\frac{1}{2}} \Gamma^{\varepsilon_1}_{ij} \Sigma_{i_{\mathbf{x} \mathbf{x}}}^{-\frac{1}{2}} -\frac{\varepsilon_1}{4} I_d} \ ,  \\
\Gamma^{\varepsilon_1}_{ij} = \displaystyle{(4\Sigma_{i_{\mathbf{x} \mathbf{x}}}^{\frac{1}{2}}\Sigma_{j_{\mathbf{y} \mathbf{y}}}\Sigma_{i_{\mathbf{x} \mathbf{x}}}^{\frac{1}{2}} + \frac{\varepsilon_{1}^2}{4}I_d)^{\frac{1}{2}}}. 
\end{gather}
Accordingly, the closed-form unique solution for $dp^*_{ij}$ can be obtained as:
\begin{gather}
\label{eq:jointPDF_opt}
    p^*_{ij}(\mathbf{x}, \mathbf{y}) = \mathcal{N}\left(\begin{bmatrix}
\mathbf{x}\\
\mathbf{y}
\end{bmatrix} \arrowvert \begin{bmatrix}
\mathbf{m}_{i_{\mathbf{x}}}\\
\mathbf{m}_{j_{\mathbf{y}}}
\end{bmatrix}, \begin{bmatrix}
\Sigma_{i_{\mathbf{x} \mathbf{x}}} & \Sigma^{\varepsilon_1}_{ij}\\
\Sigma^{{\varepsilon_1}^T}_{ij} & \Sigma_{j_{\mathbf{y} \mathbf{y}}}
\end{bmatrix} \right)
\end{gather}
Therefore, the optimal mixture transport policy is itself a GMM, given by:
\begin{eqnarray}
\label{eq:omt_finalform}
\pi_{\textsc{omt}}(\mathbf{x}, \mathbf{y}) = \displaystyle{\sum^K_{i,j}} \omega_{ij} p_{ij}(\mathbf{x}, \mathbf{y}) = 
\displaystyle{\sum_{i,j} \omega_{ij} \mathcal{N}\left(\begin{bmatrix}
\mathbf{x}\\
\mathbf{y}
\end{bmatrix} \arrowvert \begin{bmatrix}
\mathbf{m}_{i_{\mathbf{x}}}\\
\mathbf{m}_{i_{\mathbf{y}}}
\end{bmatrix}, \begin{bmatrix}
\Sigma_{i_{\mathbf{x} \mathbf{x}}} & \Sigma^{\varepsilon_1}_{ij}\\
\Sigma^{{\varepsilon_1}^T}_{ij} & \Sigma_{j_{\mathbf{y} \mathbf{y}}}
\end{bmatrix} \right)}.
\end{eqnarray}
where $\omega_{ij}$ is given by Eq.~\ref{eq:omega_opt}.
\end{proof}

%% file: appendix/2-expfamily.tex
\section{Transport maps for exponential-family mixtures}
\label{sec:mix_exp_derivations}
This section provides explicit closed-form derivations for the exponential family components utilized in OMT.
\subsection{Transport map in Laplace regime (p=q=1)}
Consider univariate probability distributions under a quadratic cost function, for which the optimal transport map is given by $T_{ij}(x) = F_j^{-1}(F_i(x))$. Following the formulation in Eq.~\ref{eq:exp_family}, the univariate exponential components, $\mu_i(x)$ and $\mu_j(y)$, with shape parameters $p_i=q_i=p_j=q_j=1$, are defined as
\begin{align}
\label{eq:uni_double_exp}
\mu_{i}(x) = \begin{cases} a_{i} e^{- (x - b_i)c_i}  & \text{if } x \geq b_i \\
a_{i} e^{(x - b_i)d_i}  & \text{otherwise}
\end{cases} \ ,  \hspace{.25in} 
\mu_{j}(y) = \begin{cases} a_{j} e^{- (y - b_j)c_j}  & \text{if } y \geq b_j \\
a_{j} e^{(y - b_j)d_j}  & \text{otherwise}
\end{cases} \ .
\end{align}
To ensure that $\mu_i(x)$ integrates to 1 and defines a valid probability density function, the normalization constant $a_i$ must satisfy $\int_{-\infty}^{\infty} \mu_i(x) dx = 1$, which yields:\begin{align*}a_i = \frac{c_i d_i}{c_i + d_i} \ .\end{align*}The associated cumulative distribution function (CDF), $F_i(x)$, is given by:
\begin{align*}
    F_i(x) = \begin{cases} 1 - \frac{a_i}{c_i}e^{-c_i(x - b_i)} & \text{if } x \geq b_i  \\ \frac{a_i}{d_i} e^{d_i(x - b_i)} & \text{otherwise} \end{cases}
\end{align*}
which can be inverted to obtain the quantile function for $u \in [0, 1]$:
\begin{align*}
    F_i^{-1}(u) = \begin{cases} b_i - {c_i}^{-1} \ln\left(\frac{c_i(1 - u)}{a_i}\right) & \text{if } u \geq \frac{a_i}{d_i}  \\ b_i + d_i^{-1}\ln\left(\frac{d_i u}{a_i}\right) & \text{otherwise} \end{cases} \ .
\end{align*}
Consequently, by composing the quantile function of the target component with the source CDF, the optimal transport map $T_{ij}(x)$ between the distributions $\mu_i(x)$ and $\mu_j(y)$ is expressed piecewise as:
\begin{align}
\label{eq:map_exp_ij_onedim}
T_{ij}(x) = F^{-1}_j(F_i(x)) = \begin{cases} b_j + \frac{c_i}{c_j}(x - b_i) - c_j^{-1} \ln{\frac{a_i c_j}{a_j c_i}}  & \text{if } x \geq \max\{b_i, b_i - c_i^{-1} \ln{\frac{a_j c_i}{a_i c_j}}\}  \\ 
b_j + d_j^{-1} \ln\left(\frac{d_j (1 - \frac{a_i}{c_i} e^{- c_i(x - b_i)})}{a_j}\right) & \text{if } b_i \leq x < b_i - c_i^{-1} \ln{\frac{a_j c_i}{a_i c_j}}  \\
b_j - c_j^{-1} \ln\left(\frac{c_j (1 - \frac{a_i}{d_i} e^{d_i(x - b_i)})}{a_j}\right) & \text{if } b_i + d_i^{-1} \ln{\frac{a_j d_i}{d_j a_i}} \leq x < b_i  \\
b_j + \frac{d_i}{d_j}(x - b_i) + d_j^{-1} \ln{\frac{a_i d_j}{a_j d_i}} & \text{if } x<\min\{b_i, b_i+ d_i^{-1} \ln{\frac{a_j d_i}{d_j a_i}}\} \end{cases} \ .
\end{align}
By aggregating the transport maps between all pairs of components, the overall OMT map for the mixture distribution is formulated as:
\begin{align}
T^{\nu_0 \rightarrow \nu_1}_{\textsc{omt}}(\mathbf{x}) = \displaystyle\sum_{i,j} \tilde{\omega}_{ij}(\mathbf{x}) F^{-1}_{1_j}(F_{0_i}(\mathbf{x})) \ .
\end{align}
\subsection{Transport map in quadratic regime ($p=q=2$)}
For mixtures involving Gaussian-like components with quadratic exponential decay, the univariate density functions are defined as:
\begin{align}
\label{eq:quad_double_exp}
\mu_{i}(x) = \begin{cases} a_{i} e^{- (x - b_i)^2c^2_i}  & \text{if } x \geq b_i \\
a_{i} e^{-(x - b_i)^2d^2_i}  & \text{otherwise}
\end{cases} \ ,  
\end{align}
where the normalization constant to ensure $\int_{-\infty}^{\infty} \mu_i(x) dx = 1$ is given by:
\begin{align*}a_i = \frac{2 c_i d_i}{\sqrt{\pi}(c_i + d_i)} \ .\end{align*}Integrating the density function yields the CDF $F_i(x)$ involving the error function $\text{erf}(x) = \frac{2}{\sqrt{\pi}} \int^x_0 e^{-t^2} dt$:
\begin{align*}
F_i(x) = \begin{cases} \frac{\sqrt{\pi} a_i}{2} (\frac{1}{d_i} + \frac{1}{c_i} \text{erf}\left( \left(x - b_i \right) c_i \right)) & \text{if } x \geq b_i  \\ \frac{\sqrt{\pi} a_i}{2} \left(\frac{1}{d_i} + \frac{1}{d_i}\text{erf}\left(\left(x - b_i\right) d_i \right) \right) & \text{otherwise} \end{cases}, 
\end{align*}
and the associated quantile function $F^{-1}_i(u)$ is defined as:
\begin{align*}
    F^{-1}_i(u) = \begin{cases} b_i + c_i^{-1} \text{erf}^{-1} \left(\frac{2 c_i}{\sqrt{\pi} a_i} u - \frac{c_i}{d_i}\right) & \text{if } u \geq \frac{\sqrt{\pi} a_i}{2 d_i}  \\ 
    b_i + d_i^{-1} \text{erf}^{-1}\left(\frac{2 d_i}{\sqrt{\pi} a_i} u - 1 \right) & \text{otherwise} \end{cases}.
\end{align*}
Composing these functions yields the optimal transport map $T{ij}(x)$ between two Gaussian-like distributions:
\begin{align}
\label{eq:map_gau_ij_onedim}
T_{ij}(x) = \begin{cases} b_j + c_j^{-1} \text{erf}^{-1} \left( \frac{a_i c_j}{a_j c_i} \text{erf}(c_i(x-b_i)) + \frac{a_i c_j}{a_j d_i} - \frac{c_j}{d_j} \right) & \text{if } x \geq \max\left\{b_i, b_i + c_i^{-1} \text{erf}^{-1} \left( \frac{a_j c_i}{a_i c_j} - \frac{c_i}{d_i} \right) \right\} \\ b_j + d_j^{-1} \text{erf}^{-1} \left( \frac{a_i d_j}{a_j c_i} \text{erf}(c_i(x-b_i)) + \frac{a_i d_j}{a_j d_i} - 1 \right) & \text{if } b_i \leq x < b_i + c_i^{-1} \text{erf}^{-1} \left( \frac{a_j c_i}{a_i c_j} - \frac{c_i}{d_i} \right) \\ b_j + c_j^{-1} \text{erf}^{-1} \left( \frac{a_i c_j}{a_j d_i} \text{erf}(d_i(x-b_i)) + \frac{a_i c_j}{a_j d_i} - \frac{c_j}{d_j} \right) & \text{if } b_i + d_i^{-1} \text{erf}^{-1} \left( \frac{a_j d_i}{a_i d_j} - 1 \right) \leq x < b_i \\ b_j + d_j^{-1} \text{erf}^{-1} \left( \frac{a_i d_j}{a_j d_i} \text{erf}(d_i(x-b_i)) + \frac{a_i d_j}{a_j d_i} - 1 \right) & \text{if } x < \min\left\{b_i, b_i + d_i^{-1} \text{erf}^{-1} \left( \frac{a_j d_i}{a_i d_j} - 1 \right) \right\} \end{cases} \ .
\end{align}
In the symmetric case where $c_i = d_i$ and $c_j = d_j$, the transport map simplifies to the linear relation:
\begin{align}T_{ij}(x) = b_j + \frac{c_i}{c_j} (x - b_i), \hspace{.5in} \forall x \in \mathbb{R} \ .
\end{align}
\subsection{Transport between Laplace and quadratic forms}
Consider a pair of components in which $\mu_i(x)$ follows a double-exponential (Laplace-type) form, while $\mu_j(y)$ is characterized by a Gaussian-like quadratic exponential form. Their corresponding density functions are given by
\begin{align}
\mu_{i}(x) = \begin{cases} a_{i} e^{- (x - b_i)c_i}  & \text{if } x \geq b_i \\
a_{i} e^{(x_i - b_i)d_i}  & \text{otherwise}
\end{cases} \ ,  \hspace{.25in}
\mu_{j}(y) = \begin{cases} a_{j} e^{- (x - b_j)^2c^2_j}  & \text{if } y \geq b_j \\
a_{j} e^{-(y - b_j)^2d^2_j}  & \text{otherwise}
\end{cases} \  .
\end{align}
The forward map from the $\mu_i(x)$ to $\mu_j(y)$, can be formulated as
\begin{align}
\label{eq:map_ij_exp_gau}
T_{ij}(x) = \begin{cases}
b_j + c_j^{-1} \text{erf}^{-1}\left( \frac{2c_j}{\sqrt{\pi} a_j} \left(1 - \frac{a_i}{c_i} e^{-c_i(x - b_i)}\right) - \frac{c_j}{d_j} \right) & \text{if } x \geq \max\{b_i, b_i - c_i^{-1} \ln \frac{\sqrt{\pi} a_j c_i}{2 a_i c_j}\} \\
b_j + d_j^{-1} \text{erf}^{-1}\left( \frac{2d_j}{\sqrt{\pi} a_j} \left(1 - \frac{a_i}{c_i} e^{-c_i(x - b_i)}\right) - 1 \right) & \text{if } b_i \leq x < b_i - c_i^{-1} \ln \frac{\sqrt{\pi} a_j c_i}{2 a_i c_j} \\
b_j + c_j^{-1} \text{erf}^{-1}\left( \frac{2c_j a_i}{\sqrt{\pi} a_j d_i} e^{d_i(x - b_i)} - \frac{c_j}{d_j} \right) & \text{if } b_i + d_i^{-1} \ln \frac{\sqrt{\pi} a_j d_i}{2 a_i d_j} \leq x < b_i \\
b_j + d_j^{-1} \text{erf}^{-1}\left( \frac{2d_j a_i}{\sqrt{\pi} a_j d_i} e^{d_i(x - b_i)} - 1 \right) & \text{if } x < \min\{b_i, b_i + d_i^{-1} \ln \frac{\sqrt{\pi} a_j d_i}{2 a_i d_j}\} \ .
\end{cases}
\end{align}
Conversely, the backward map, i.e., $T_{ji}(y)$, is given by
\begin{align}
\label{eq:map_ij_gau_exp}
   T_{ji}(y) = \begin{cases} b_i - c_i^{-1} \ln\left( \frac{c_i \sqrt{\pi} a_j (1 - \text{erf}((y-b_j)c_j))}{2 c_j a_i} \right) & \text{if } y \geq \max\left\{b_j, b_j + c_j^{-1} \text{erf}^{-1}\left( \frac{2 c_j a_i}{\sqrt{\pi} a_j d_i} - \frac{c_j}{d_j} \right)\right\} \\ 
   b_i + d_i^{-1} \ln\left( \frac{d_i \sqrt{\pi} a_j ( \frac{1}{d_j} + \frac{1}{c_j} \text{erf}((y-b_j)c_j))}{2 a_i} \right) & \text{if } b_j \leq y < b_j + c_j^{-1} \text{erf}^{-1}\left( \frac{2 c_j a_i}{\sqrt{\pi} a_j d_i} - \frac{c_j}{d_j} \right) \\ 
   b_i - c_i^{-1} \ln\left( \frac{c_i(1 - \frac{\sqrt{\pi} a_j}{2 d_j}(1 + \text{erf}((y-b_j)d_j)))}{a_i} \right) & \text{if } b_j + d_j^{-1} \text{erf}^{-1}\left( \frac{2 d_j a_i}{\sqrt{\pi} a_j d_i} - 1 \right) \leq y < b_j \\ 
   b_i + d_i^{-1} \ln\left( \frac{d_i \sqrt{\pi} a_j (1 + \text{erf}((y-b_j)d_j))}{2 d_j a_i} \right) & \text{if } y < \min\left\{b_j, b_j + d_j^{-1} \text{erf}^{-1}\left( \frac{2 d_j a_i}{\sqrt{\pi} a_j d_i} - 1 \right)\right\} \end{cases}
\end{align}
Simply, when the scale parameters are symmetric ($c_j = d_j$), the transport maps in Eqs.~\ref{eq:map_ij_exp_gau} and \ref{eq:map_ij_gau_exp} simplify to
\begin{align}
\label{eq:map_ij_exp_gau_sym}
    T_{ij}(x) = \begin{cases} b_j + c_j^{-1} \text{erf}^{-1}\left( 1 - \frac{2 a_i}{c_i} e^{-c_i(x - b_i)} \right) & \text{if } x \geq b_i \\
    b_j + c_j^{-1} \text{erf}^{-1}\left( \frac{2 a_i}{d_i} e^{d_i(x - b_i)} - 1 \right) & \text{otherwise} \end{cases} \ ,
\end{align}
\begin{align}
\label{eq:map_ij_gau_sym_exp}
    T_{ji}(y) = \begin{cases} b_i - c_i^{-1} \ln\left( \frac{c_i \left[ 1 - \text{erf}(c_j(y - b_j)) \right]}{2 a_i} \right) & \text{if } y \geq b_j + {c_j}^{-1} \text{erf}^{-1}\left( \frac{2a_i}{d_i} - 1 \right) \\ 
    b_i + d_i^{-1} \ln\left( \frac{d_i \left[ 1 + \text{erf}(c_j(y - b_j)) \right]}{2 a_i} \right) & \text{otherwise} \end{cases} \ .
\end{align}

%% file: appendix/3-moreresults.tex
\section{Synthetic datasets}
\label{sec:Appsynth_data}
\ExpVsGauss
\PathShape
\section{W2-Benchmark Task}
\label{sec:AppBenchmark}
\subsection{Experiment}
For the continuous Wasserstein-2 benchmark task, we adapted the experimental setup from the publicly available repository of \cite{korotin2021neural}. We evaluated all models across a range of dimensions ($d$) with corresponding training sample sizes ($n$). For each configuration, performance was assessed on a separate test set of $10,000$ samples. To ensure statistical robustness, every experiment was repeated 10 times with different random initializations. These same settings were also used for the ablation study on the impact of noise.

The specific dimension and sample size pairs were as follows:
\begin{itemize}
    \item for $d=2$ with $n=10,000$,
    \item for $d=16$ with $n=20,000$,
    \item for $d=64$ with $n=30,000$, 
    \item for $d \in \{128, 256\}$ with $n=50,000$.
\end{itemize}
\subsection{Stability analysis}
\label{sec:ablation}
To evaluate the stability of OT solvers, we conducted an ablation study examining the sensitivity of transport maps to input perturbations. All solvers were first optimized on noisy samples and then evaluated on the original, unperturbed test set. Noise was added to the source distribution samples during training, while the evaluation used the original test set ($N=10,000$ samples).

Here, the source samples were subjected to two distinct forms of stochastic perturbation:
\begin{itemize}
    \item Additive Gaussian (White) Noise: We added noise sampled from $\mathcal{N}(0, \sigma^2 I)$ to the source input, varying the standard deviation $\sigma \in \{0.1, 0.25, 0.5, 1.0\}$.
    \item Dropout Noise: To simulate missing data, a common issue in single-cell genomics data (e.g., gene dropout), we applied a random drop-out operation where input features were zeroed out with a probability $p \in \{0.05, 0.1, 0.25, 0.5\}$.
\end{itemize}
The performance variations shown in Figure~\ref{fig:Noise} were quantified as the relative percentage difference in MSE compared to the noise-free baseline.
\subsection{Training Configurations}
\label{sec:AppTrainingBenchmark}

All methods were implemented and trained using their official implementations within the OTT-JAX toolbox~\citep{cuturi2022optimal}, following the recommended configurations provided in the official tutorials and Wasserstein-2 benchmark repositories.

\begin{itemize}
    \item \textbf{EOT:} The entropy regularization parameter was set to $\varepsilon = 0.1$, with a maximum of $10^6$ training iterations.
    \item \textbf{W2-OT:} We employed the recommended three-layer MLP architectures for the dual potentials and used regression-based amortization combined with L-BFGS fine-tuning. Models were trained for a maximum of $10^5$ iterations, following the configuration reported in Table~4 of~\cite{amos2022amortizing}.
    \item \textbf{P{\scriptsize{ROG}}OT:} Training followed the recommended scheduler settings with $K=4$ refinement steps.
    \item \textbf{ExNOT:} We adopted the recommended architecture consisting of five-layer MLPs for each potential function, with 128 hidden units per layer. Training was performed for up to $10^5$ iterations.
    \item \textbf{ENOT:} We used the original implementation released by the authors for this benchmark and retained the same hyperparameter configuration reported in the corresponding work.
    \item \textbf{OMT ($p=q=1$):} The number of source and target mixture components was set to $K=10$ and $K=30$, respectively. Asymmetric factorized double-exponential components were independently fitted to the source and target distributions using the expectation-maximization algorithm. The transport model was then trained for a maximum of $10^5$ iterations using entropy regularization parameter $\varepsilon_2=0.01$.
    \item \textbf{OMT ($p=q=2$):} The number of source and target components was set to $K=5$ and $K=15$, respectively. Gaussian mixture components were independently fitted to the source and target data using the \texttt{scikit-learn} Python package, employing full covariance matrices for dimensions $d \leq 64$. The model was trained for up to $10^5$ iterations with entropy regularization parameters $\varepsilon_1=\varepsilon_2=0.01$.
    \item \textbf{GMM-OT:} The training procedure followed the same setup as OMT ($p=q=2$), except that entropy regularization was disabled ($\varepsilon=0$). The mixing-component coupling matrix $\Omega$ was optimized directly through Wasserstein minimization.
\end{itemize}
\AppBenchMark
\RunTime
\subsection{OMT vs. GMM-OT}
\label{sec:OMTvsGMMOT}
OMT and GMM-OT ($G\mathcal{W}_2$)~\citep{delon2020wasserstein} both operate on pairs of GMMs, but they fundamentally differ in their optimization landscapes and solution guarantees. GMM-OT fits a GMM to the coupling function to minimize the Wasserstein distance. This formulation is non-convex and may lack a unique solution, meaning multiple distinct transport plans can yield identical costs, thereby hampering interpretability. In contrast, OMT utilizes a dual-entropic regularization scheme, applying regularization to both the component transport $p_{ij}$ and the mixing weights matrix $\Omega$. This formulation offers significant computational and theoretical advantages:
\begin{itemize}
    \item Closed-form and faster updates: The regularization on component transportation allows for analytical closed-form solutions for $p_{ij}$ (Eq.~\ref{eq:p_opt}) and the transport potentials $\varphi_{ij}$ (Eq.~\ref{eq:phi_opt}) and $\psi_{ij}$ (Eq.~\ref{eq:psi_opt}). This simplifies the optimization, reducing the problem to optimizing only the mixing parameters, which can be solved efficiently using convex solvers including Sinkhorn algorithm.
    \item Uniqueness and Smoothness: The entropic term makes the objective strictly biconvex (Lemma~\ref{lemma:biconvex}). Consequently, the solution is unique (Theorem~\ref{theorem:uniqGOP}) and differentiable with respect to the inputs.
    \item Gradient Stability: Regularization discourages degenerate or overly \textit{peaky} transport plans, preventing overfitting to empirical distributions with limited samples. This ensures stable gradients with respect to the cost and input distributions. 
\end{itemize}
To empirically validate the stability offered by this regularization, we evaluate the relative changes to the transport plan under input perturbation using the $W_2$-benchmark task. We used the same source and target training samples ($\mathbf{x}_{train}$, $\mathbf{y}_{train}$) and identical GMMs (same number of components and parameters) for both methods. For each solver, we then optimize to obtain a baseline coupling map $\Omega$. Following the ablation study in ~\ref{sec:ablation}, we perturb the input data by adding Gaussian noise $\mathcal{N}(0, \sigma^2 I)$ to $\mathbf{x}_{train}$, with $\sigma \in \{0.1, 0.25, 0.5, 1.0\}$, and re-optimize the transport plans ($\tilde{\Omega}$). Here, we quantify the robustness of the methods using two metrics: (i) the percentage change in Transport Cost ($T_c$), and (ii) the relative Transport Map Deviation (TMD), defined as:
\begin{gather*}
    TMD := \displaystyle \frac{\|\tilde{\Omega} - \Omega \|_F}{\|\Omega\|_F}
\end{gather*}
Figure~\ref{fig:OMTGMMOT} shows that OMT exhibits smaller changes in the transport plan than GMM-OT, highlighting the benefit of regularization in the mixture transportation problem.
\OMTGMMOT
\section{Single-cell Data Analysis}
\label{sec:AppscRNAeq}
\subsection{Data availability}

\begin{itemize}
\item sci-Plex3 data can be downloaded from NCBI GEO ($\#$GSE139944).
 \item MERFISH Mouse Brain Receptor data is available on the Vizgen website (https://info.vizgen.com/mouse-brain-map). 
Map data release (https://info.vizgen.com/mouse-brain-map)
 \item Mouse developmental data is available through Neuroscience Multi-omic Data Archive (NeMO), (RRID:SCR-016152). The 10x scRNA-seq dataset is available at https://assets.nemoarchive.org/dat-0oyried.
 \item Mouse aging scRNA-seq is also available through NeMO, https://nemoarchive.org/, and can be accessed at https://assets.nemoarchive.org/dat-61kfys3.
\end{itemize}

\subsection{Preprocessing}
For human scRNA-seq data, sci-Plex,  we followed the same processing steps recommended in \cite{cuturi2023monge}. Genes which appear in less than $20 $cells, and cells with less that $20$ gene expressed are excluded. Then we normalized gene expression, by first normalized to counts per million (CPM) and then transformed using the formula $\log{(CPM + 1)}$. Then we whiten the data and apply PCA.

For the mouse scRNA-seq datasets, we preprocessed the raw count matrix. First, we performed library size normalization by converting counts to counts per million (CPM), followed by $\log$-transformation. For feature selection, we chose a subset of highly variable genes combined with a list of known marker genes from the mouse brain atlas.

\subsection{OMT training}
\label{sec:scTraining}
For each dataset, we first performed dimensionality reduction and then trained the OMT model on the resulting low-dimensional embeddings. The specific hyperparameters were tailored to each dataset.

\textbf{sci-Plex Dataset.} As previously described, we used PCA for dimensionality reduction. On the resulting PCA embeddings, we trained the OMT model with the number of source components set to $K_s=3$ and target components to $K_t=5$. The entropy regularization parameter for both $(\Omega, P)$ was set to $0.01$.

\textbf{MERFISH Mouse Brain Data.} For the Vizgen MERFISH brain data, similar to \cite{halmos2025hierarchical}, we only focused on transporting cell spatial coordinates from the source slice to the target one in 2-dimensional space $(x,y)$. We first fit GMMs to the source and target cells' location independently. Using the resulting components, we minimized the OMT loss to transport cells across slices. Figure~\ref{fig:AppMERFISHGMMs} shows the impact of number of GMM components on transport accuracy. Due to the dense distribution of cells, using a small number of components results in a poor alignment between the transported cells and the ground truth. Conversely, increasing the number of components, i.e., $K_s=K_t=1000$, provides significantly better alignment between the transported cells and the target.

\textbf{Mouse scRNA-seq Data.} For the mouse scRNA-seq data, we first trained a variational autoencoder (VAE) to learn a compressed cellular representation in a latent space of dimension $d_z=10$. We then trained the OMT model on these VAE embeddings. The number of components was set within a range of $5$ to $25$, with the specific value chosen based on the biological context; we typically used approximately twice the number of known cell types present at the analyzed timepoints.

\subsection{Additional results for sci-Plex}
\label{sec:AppscSciPlexFigs}
%
%
\AppsciplexPCA
%
\subsection{Additional results for MERFISH}
\label{sec:AppscMERFISHFigs}
This section provides additional qualitative evaluations for the Vizgen MERFISH Mouse Brain dataset, complementing the quantitative benchmarks presented in the main text. Figure~\ref{fig:AppMERFISHGMMs} displays the impact of varying the number of mixture components ($K_s$ and $K_t$, ranging from 5 to 1000) across the source and target slices, alongside the corresponding transported sample distributions. Figure~\ref{fig:AppMERFISHgenes} shows a detailed visual comparison of the imputed spatial gene expression against the ground-truth target expressions for five key marker genes (\textit{Slc17a7}, \textit{Gad1}, \textit{Grm4}, \textit{Olig1}, and \textit{Peg10}). 
\AppMERFISHGMMs
\AppMERFISHgenes
\subsection{Additional results for Mouse scRNA-seq data}
\label{sec:AppscMouseFigs}
This section presents additional qualitative evaluations on the developmental and aging mouse datasets, further supporting the results demonstrated in Figure~\ref{fig:scRNAseq} of the main text.
\AppDevPerformance
\AppDevGenes
\AppAgePerformance
\AppAgeGenes
\clearpage
\section{Image Datasets}
\label{sec:AppImageTasks}
\subsection{Translation Task}
To further demonstrate the applicability of the the proposed OMT framework beyond tabular data, we apply it to an unpaired image-to-image translation task using two benchmark datasets: MNIST~\citep{lecun1998mnist} and CIFAR-10~\citep{krizhevskycifar}. In MNIST, the task involves translating images of one digit into another (e.g., learning transport maps such as $T: 1 \to 7$). Similarly, in CIFAR-10, the goal is to translate images from one semantic class (e.g., airplane) into another (e.g., bird). Although OMT can in principle be applied directly to raw image data, the resulting mappings are not semantically meaningful and fail to capture class-level translations. 

Similar to our approach with scRNA-seq data, our method for image datasets in translation tasks involves a two-step process. We first train a deep neural network to learn a low-dimensional representation of the images, and then train the OMT model on these resulting embeddings. 
For the MNIST dataset, we employed a convolutional VAE featuring a dual-decoder design, where each decoder reconstructs images for the source and target domains, respectively. The latent dimension for this network was set to $d_z=10$. The full architecture is detailed in Table~\ref{tab:NetworkMNIST}.

For the CIFAR-10 dataset, we utilized the DoubleRessNet architecture, described in Table~\ref{tab:NetworkCIFAR}, with a latent space dimension of $d_z=32$.
For all experiments on these datasets, the subsequent OMT model was trained using $10$ components for both the source and target measure and $\varepsilon=0.01$. We found the number of components choice to be robust, as preliminary experiments with other values did not yield significant changes in the final results.

Figure~\ref{fig:AppImagTran} illustrate representative examples of these class-to-class translations for test images in MNIST and CIFAR-10, respectively. Quantitative evaluation of the generated translations is reported in Table~\ref{tab:fid} using the widely adopted Fréchet Inception Distance (FID). These experiments highlight that, OMT can be effectively extended to image-based applications as well.
For context and to benchmark our performance against established OT based approaches, we also report the FID scores for WGAN~\citep{arjovsky2017wasserstein} and WGAN-GP~\citep{gulrajani2017improved}. The results show that OMT performs in a similar range to WGAN on CIFAR-10, while outperforming both WGAN and WGAN-GP on the MNIST dataset.
See Appendix~\ref{sec:AppImageTasks} for further implementation details, including the autoencoder architectures used for dimensionality reduction and the hyperparameters for OMT. 

\subsection{Alignment Task}
\label{sec:AppImageNet}
To demonstrate the applicability of OMT to large-scale, high-dimensional data, we perform an alignment task on ImageNet~\citep{deng2009imagenet}, following \cite{halmos2025hierarchical}. 
For the sake of comparability, we follow the experimental setting provided in \cite{halmos2025hierarchical}, without using any dimensionality reduction. We selected ImageNet data with $1,281,166$ images (resized to $224 \times 224$) from the ImageNet ILSVRC dataset~\cite{deng2009imagenet}. The alignment was performed on high-dimensional embeddings obtained using a pretrained ResNet-50~\citep{he2016deep} (accessible through PyTorch). We use 2048-d embeddings from a pretrained ResNet-50~\citep{he2016deep} and construct source and target subsets via random sampling. The dataset was randomly split into two sets of $640,500$ images for the source and target. We then optimized OMT with $1,000$ components on each domain to learn the alignment function between source and target images. Since the mixture transport function is optimized in a high-dimensional embedding space, and learning full covariance is challenging, we used a diagonal covariance structure for all components.
\NetworkMNIST
\NetworkCIFAR
\AppImagTran
\FID